\documentclass[doctor,english,final, pdfdoc]{kaist-ucs} 
\usepackage{graphicx}
\usepackage{booktabs,multirow,adjustbox}
\usepackage{dsfont}
\usepackage{subcaption}
\usepackage{wrapfig}
\usepackage{enumitem}
\usepackage{xcolor}

\usepackage[numbers,sort&compress]{natbib}

\usepackage{hyperref}
\hypersetup{
  colorlinks   = true,
  linkcolor    = black,
  citecolor    = darkgreen,
  urlcolor     = darkblue
}

\usepackage{amsmath,amsfonts,bm}

\def\eqref#1{equation~\ref{#1}}

\def\1{\bm{1}}

\DeclareMathAlphabet{\mathsfit}{\encodingdefault}{\sfdefault}{m}{sl}
\SetMathAlphabet{\mathsfit}{bold}{\encodingdefault}{\sfdefault}{bx}{n}

\newcommand{\mytitle}{\alg: Stabilizing Dot-regression with Global Feature Distillation for Federated Learning}

\newcommand{\alg}{\textbf{\code{FedDr+}}\xspace}

\usepackage{booktabs} 
\usepackage{bbm}
\usepackage{mathtools}
\usepackage{nccmath}
\usepackage{setspace}
\usepackage[T1]{fontenc}
\usepackage[scaled]{beramono}
\usepackage{subcaption}

\usepackage{algorithm} 
\usepackage[linesnumbered,ruled,vlined,algo2e]{algorithm2e}
\SetKwInput{KwInput}{Input}                
\SetKwInput{KwOutput}{Output}              

\SetCommentSty{mycommfont}

\SetAlCapSty{algcapsty}

\usepackage[T1]{fontenc}
\usepackage{wrapfig,lipsum,booktabs}

\usepackage{soul}
\usepackage{dsfont}
\usepackage{enumitem}

\usepackage{amsmath}
\usepackage{amsfonts}
\usepackage{bbm}
\usepackage{dsfont}
\usepackage[Symbol]{upgreek}
\usepackage{lscape}
\usepackage{caption}
\usepackage{balance}
\usepackage{xspace}
\usepackage{float}

\usepackage{wasysym}
\usepackage{multirow}
\usepackage{array, boldline, rotating}

\usepackage{amssymb}
\usepackage{pifont}

\newcommand{\mc}[1]{\mathcal{#1}}

\definecolor{LightCyan}{rgb}{0.88,1,1}
\definecolor{Blue}{rgb}{0, 0.5, 1}
\definecolor{Green}{rgb}{0.0, 0.8, 0.0 }
\definecolor{Red}{rgb}{0.95, 0.55, 0.6}
\definecolor{Skyblue}{rgb}{0.6, 0.6, 0.95 }

\renewcommand*\eqref[1]{(\ref{#1})}

\newcommand{\ie}{\emph{i.e.,~}}

\newcommand{\myparagraph}[1]{\vspace{0.07cm}\noindent\textbf{#1}~}

\def\code#1{\texttt{#1}}

\NewDocumentCommand{\supptitle}{s}{
\onecolumn
\begin{center}
    \rule{\textwidth}{0.03cm}\\[0.1cm]
    - Appendix -\\[0.2cm]
    {\Large 
        \textbf{\mytitle }
    }\\
    \rule{\textwidth}{0.03cm}\\[0.2cm]
\end{center}
}

\usepackage{url}
\usepackage{multirow}
\usepackage{adjustbox}
\usepackage{wrapfig}
\usepackage{kotex}
\usepackage{bm}

\usepackage{amsfonts}
\usepackage{amsthm}
\usepackage{amssymb}
\usepackage{enumitem}
\newtheorem{thm}{Theorem}[section]

\newtheorem{lem}{Lemma}
\newtheorem{prop}{Proposition}

\usepackage{tcolorbox}

\usepackage{lastpage}

\title[korean]{연합학습에서의 효율적인 개인화 모델 구축을 위한 중앙 학습 전략}
\title[english]{Global Federated Learning strategies for building efficient personalized models}

\author[korean]{김}{성 윤}
\author[korean2]{김}{성윤}    
\author[chinese]{金}{成 潤}
\author[english]{Kim}{Seong Yoon}

\advisor[major]{윤 세 영}{Se-Young Yun}{signed}
\advisor[major2]{윤세영}{Se-Young Yun}{signed}    
\advisorinfo{Professor of Kim Jaechul Graduate School of AI} 

\department{IE}{engineering}{a}

\studentid{20197008}

\referee[1]{윤 세 영}
\referee[2]{양 은 호}
\referee[3]{박 찬 영}
\referee[4]{이 문 용}
\referee[5]{김 희 영}

\approvaldate{2025}{12}{15} 

\refereedate{2025}{12}{15}

\gradyear{2026}

\begin{document}


   \thesisinfo
    \begin{summary}      
    연합학습은 데이터 프라이버시를 보장하면서 분산된 사용자 데이터로 모델을 학습할 수 있는 실용적 틀이지만, 사용자마다 데이터 분포가 다른 이질성 때문에 전역 모델 성능과 개인화 성능이 동시에 저하되는 문제가 빈번히 발생한다. 본 학위논문은 효율적인 개인화 모델을 구축하기 위해, 전역 학습 단계에서 어떤 중앙 학습 전략이 유효한지, 그리고 로컬 적응 과정에서 전역 지식을 어떻게 보존하면서도 사용자 특화 성능을 확보할 수 있는지에 대한 방법론을 제시한다.
    첫째, 데이터 이질성이 커질수록 분류기 가중치보다 특징 벡터 붕괴가 더 본질적인 병목이 됨을 보이고, 로컬 모델과 전역 모델 사이의 표현 크기 불일치를 직접 완화하는 방법을 제안한다.
    둘째, 로컬 정렬을 강화하는 학습 방식이 전역 지식(예: 로컬에서 관측되지 않은 범주)에 대한 망각을 유발할 수 있음을 분석하고, 전역 모델의 특징 벡터를 기준으로 한 특징 벡터 증류를 결합하여 로컬 정렬과 전역 지식 보존을 동시에 달성하는 방법을 제시한다.
    셋째, 선호 이질성이 존재하는 연합 개인화 보상 모델 학습에서 ``전역 모델을 여러 개로 늘리면 더 좋은 초기화를 얻는다”는 통념을 실증적으로 검증하고, 충분한 로컬 미세조정이 허용될 때는 단일 전역 초기화가 오히려 더 강한 개인화 성능을 제공할 수 있음을 보인다. 본 연구는 데이터 및 선호 이질성 환경에서 전역 초기화의 역할을 재정의하고, 전역 지식 보존과 개인화를 동시에 만족시키는 실용적 학습 전략을 제공한다.

    \end{summary}
   
    \begin{Korkeyword}
    연합학습, 개인화, 데이터 이질성, 특징 벡터 정규화, 지식 보존, 로컬 정렬, 보상 모델, 선호 이질성, 전역 초기화
    \end{Korkeyword}
    \vspace{-15pt}

    \begin{abstract}
        Federated learning (FL) is a practical framework that can train models on distributed user data while guaranteeing data privacy; however, due to heterogeneity in which each user has a different data distribution, problems frequently arise where both global and personalization performance deteriorate simultaneously. This dissertation presents methodologies for building efficient personalized models by identifying which strategies are effective in the global training stage and by showing how to preserve global knowledge while securing user-specific performance during local adaptation. First, we show that as data heterogeneity increases, the collapse of feature vectors is a more fundamental bottleneck than classifier weights, and propose a method that directly mitigates the discrepancy in representation magnitude between local and global models. Second, we analyze that a training approach that strengthens local alignment can induce forgetting of global knowledge (e.g., categories not observed locally), and propose a method that achieves both local alignment and global knowledge preservation by combining feature distillation based on the global model’s feature vectors. Third, in federated personalized reward model learning with preference heterogeneity, we empirically verify the conventional belief that “increasing the number of global models yields better initialization,” and we show that when sufficient local fine-tuning is allowed, a single global initialization can instead provide stronger personalization performance. This study redefines the role of global initialization under data and preference heterogeneity and provides practical training strategies that simultaneously satisfy global knowledge preservation and personalization.
    \end{abstract} 
    
    \vspace{-5pt}
    \begin{Engkeyword}
    Federated Learning, Personalization, Data Heterogeneity, Feature Normalization, Knowledge Preservation, Reward Modeling, Preference Heterogeneity, Global Initialization
    \end{Engkeyword}

    \addtocounter{pagemarker}{1}                 
    \newpage

    \tableofcontents

    \listoftables

    \listoffigures



\chapter{Introduction}

\begin{figure}[h!]
    \centering
    \includegraphics[width=\linewidth]{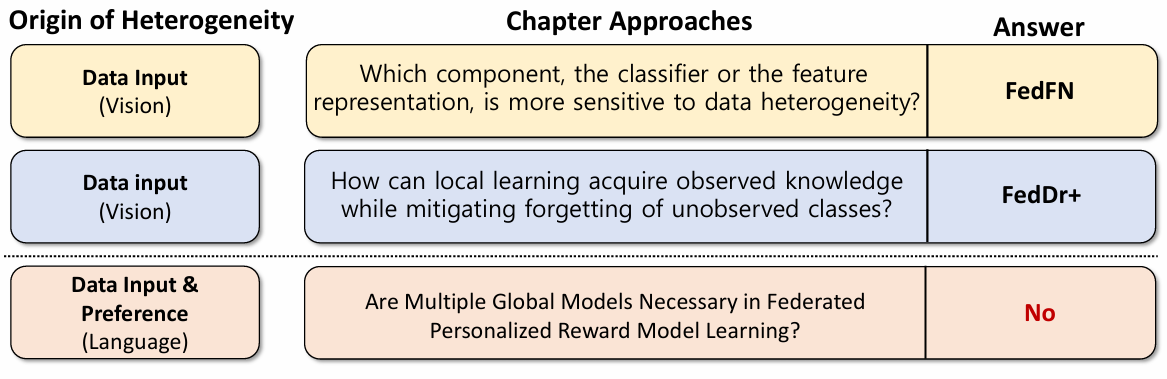}
    \caption{An overview of dissertation}
    \label{fig:overview}
\end{figure}

Modern machine learning systems are increasingly deployed in user-facing and privacy-sensitive environments, where data are naturally distributed across devices and users. Federated learning (FL) offers a practical solution by keeping data local while training models through server aggregation. Yet, a persistent challenge is heterogeneity: users differ not only in data quantity and label distribution, but also in the latent structure of their tasks and preferences. Such heterogeneity often turns global training into a compromise that underfits many users, while naive personalization can overfit local data and damage the model’s general capability.

This dissertation addresses the central question of how to build efficient personalized models under heterogeneous data and preferences. The key theme is the role of global learning strategies as a foundation for personalization: a strong global initialization should capture broadly transferable structure, and personalization should then adapt to each user without unnecessarily destroying that structure. Concretely, we study heterogeneity through three lenses: (i) distribution shifts that distort representation learning in conventional FL, (ii) local objectives that improve client-specific alignment but may induce global forgetting, and (iii) preference heterogeneity in reward modeling, where personalization depends critically on the quality of global initialization. Figure~\ref{fig:overview} provides an overview of how these three perspectives map to the core questions and methods studied in this dissertation. 

\section{Outline}

Chapter 2 investigates why global FL degrades as heterogeneity increases, showing that the dominant failure mode is the feature-norm discrepancy between local and global models. Building on this observation, we propose FedFN, which incorporates feature normalization into FL to improve robustness under non-IID settings.

Chapter 3 examines dot-regression as a direct way to strengthen feature–classifier alignment on clients. The empirical analysis shows that dot-regression can improve local alignment but may degrade the global model by amplifying forgetting of unobserved classes. To resolve this trade-off, we propose FedDr+, which combines dot-regression with global feature distillation to preserve general knowledge while maintaining strong local alignment.

Chapter 4 studies federated personalized reward model learning under preference heterogeneity. Although a common intuition is that clustering-based multi-global training yields better initializations for personalization, the experiments demonstrate that a single global model trained with FL can serve as a strong initialization and consistently yields the best personalized performance after local fine-tuning. The chapter further discusses why additional global models are unnecessary for strong personalization in this setting.

Overall, this dissertation provides a unified view of global initialization and personalization in heterogeneous FL: effective global training should stabilize feature learning, preserve transferable knowledge, and produce an initialization that clients can adapt efficiently to diverse data and preferences.

\chapter{FedFN: Feature Normalization for Alleviating Data Heterogeneity Problem in Federated Learning}
\begin{tcolorbox}[
    colback=gray!10, 
    colframe=black, 
    width=0.9\textwidth, 
    boxrule=1pt, 
    arc=4pt, 
    left=5pt, 
    right=5pt, 
    center
]

Federated Learning (FL) is a collaborative method for training models while preserving data privacy in decentralized settings. However, FL encounters challenges related to data heterogeneity, which can result in performance degradation. In our study, we observe that as data heterogeneity increases, feature representation in the FedAVG model deteriorates more significantly compared to classifier weight. 
Additionally, we observe that as data heterogeneity increases, the gap between higher feature norms for observed classes, obtained from local models, and feature norms of unobserved classes widens, in contrast to the behavior of classifier weight norms. This widening gap extends to encompass the feature norm disparities between local and the global models. To address these issues, we introduce Federated Averaging with Feature Normalization Update (FedFN), a straightforward learning method. We demonstrate the superior performance of FedFN through extensive experiments, even when applied to pretrained ResNet18. Subsequently, we confirm the applicability of FedFN to foundation models.

\end{tcolorbox}

\section{Introduction}
\label{ch2sec:intro}

Federated Learning (FL) facilitates collaborative model training while preserving data privacy~\citep{MLSYS2020_1f5fe839, mcmahan2017communication}. This approach consists of four iterative stages: (1) client selection, (2) broadcasting, (3) local training, and (4) aggregation. Selected clients receive the global model, train it locally with their own data, and transmit the trained models back to the central server for aggregation. These steps are repeated to progressively enhance the performance of global model.
In FL, a fundamental challenge arises from the existence of diverse data distributions among different clients, referred to as \emph{data heterogeneity}. This leads to performance degradation of the global model~\citep{li2021fedrs, mcmahan2017communication, li2019convergence}.

Recently, numerous studies~\citep{dong2022spherefed, oh2021fedbabu, shi2022towards} have been conducted to identify the specific aspects, such as feature representations or classifier weights, that are significantly influenced by data heterogeneity. \citet{luo2021no} demonstrated that classifier weights are the most sensitive to data heterogeneity, illustrating how classifiers can easily become biased depending on the data distribution. To mitigate classifier bias, algorithms have been proposed using restricted softmax loss or fixed orthogonal classifiers during local training~\citep{li2021fedrs, oh2021fedbabu, dong2022spherefed}. 
Meanwhile, \citet{shi2022towards} demonstrates the impact of data heterogeneity on feature representations, potentially leading to dimensional collapse where only a subset of high-dimensional feature vectors is employed to represent features within the global model. In our study, we find that the primary concern lies not in the classifier weight but in the features. 

Feature normalization, as utilized in various fields~\citep{wang2018additive, savvides2021convex, hasnat2017deepvisage, mettes2019hyperspherical, khosla2020supervised, li2021model, dong2022spherefed}, enhances the discriminative power of feature representation, making it easier to distinguish data belonging to different classes. We reveal that data heterogeneity in FL leads to a substantial discrepancy in feature norms between the global model and local models. Based on this observation, we incorporate feature normalization into the FL framework. Our contributions are outlined as follows:
\begin{itemize}
    \item In FedAVG, we find that feature representations are more adversely affected by data heterogeneity than classifier weights. Furthermore, as data heterogeneity increases, the disparity between the higher feature norms for observed classes, derived from local models, and the feature norms of unobserved classes widens, in contrast to classifier weight norms. This widening gap extends to encompass feature norm disparities between local models and the global model. \textbf{(Section~\ref{ch2sec:problem})}
    \item To tackle this challenge, we introduce \textbf{Fed}erated Averaging with \textbf{F}eature \textbf{N}ormalization Update (FedFN), which effectively eliminates discrepancies in feature norms during local training. FedFN robustly maintains the quality of feature representations even in highly heterogeneous data settings. \textbf{(Section~\ref{ch2sec:method})}
    \item We incorporate the feature normalization technique into existing algorithms, and show notable performance improvements. Furthermore, this effectiveness persists even when using pretrained model.  \textbf{(Section~\ref{ch2sec:exp})}
\end{itemize}

\section{Related Work}
\label{ch2sec:related}

\subsection{Federated Learing}
\textbf{Global Federated Learning}
Global Federated Learning (GFL) aims to enhance the performance of a single global model across decentralized clients by addressing data heterogeneity arising from diverse user behaviors. Researchers have explored various methodologies within GFL to create robust models for diverse devices and data sources. These approaches include client drift mitigation~\citep{MLSYS2020_1f5fe839,karimireddy2020scaffold,jhunjhunwala2023fedexp}, aggregation schemes to improve model fusion mechanisms at the server~\citep{wang2020federated, wang2020tackling}, and data sharing techniques introducing public datasets or synthesized data to achieve a more balanced data distribution~\citep{zhao2018federated, lin2020ensemble, luo2021no}.

\noindent{\textbf{Personalized Federated Learning}}
Personalized Federated Learning (PFL) focuses on training personalized models for individual clients, adapting to their specific data distributions and tasks. PFL methodologies include decoupling methods that separate the feature extractor and classifier during communication, enabling unique updates for the data distribution of each client~\citep{arivazhagan2019federated, collins2021exploiting, oh2021fedbabu}, modifying local loss functions to improve task performance~\citep{fallah2020personalized, li2021ditto}, and utilizing prototype communication techniques~\citep{tan2022fedproto, xu2023personalized}.

\subsection{Partial Model Updates in FL}
\textbf{Debiasing classifier in FL}
Efforts to address data heterogeneity in both GFL and PFL domains have explored differential updates within the model parameters, with a particular focus on the classifier part. For instance, \citet{luo2021no} propose classifier post-calibration with virtual features to tackle a notable bias among classifiers of different local models. \citet{li2021fedrs} introduce the restricted softmax loss for local updates to prevent classifiers from becoming inaccurate when updating for missing classes. Additionally, certain studies~\citep{oh2021fedbabu, dong2022spherefed, li2023no, huang2023neural} suggest the use of fixed classifiers constructed from orthogonal basis vectors during training.

\noindent\textbf{Feature Enhancement in FL} Recent research~\citep{yu2021fed2, li2021model, li2022federated, shi2022towards} in the context of GFL has focused on aligning feature representations among local models. For instance, \citet{li2021model} introduce the contrastive loss during local iteration to improve feature alignment. Additionally, \citet{shi2022towards} highlights the issue of data heterogeneity leading to severe dimensional collapse in the global model, resulting in representations tending towards lower dimensions. To address this, they propose using a regularization term during local training to mitigate the issue and improve feature alignment. Moreover, some studies ~\citep{tan2022fedproto, xu2023personalized} have explored communication strategies involving feature prototypes to further enhance feature alignment in the context of PFL.

\subsection{Feature Normalized Model in DL}
Feature normalized model has been widely adapted in various fields of deep learning (DL), including face recognition~\citep{wang2018additive, savvides2021convex}, regression~\citep{mettes2019hyperspherical}, and federated learning~\citep{dong2022spherefed}, with the aim of enhancing the discriminative power of features. In the several studies~\citep{wang2018additive, dong2022spherefed}, both the feature vectors and classifier weights are normalized to enforce cosine-similarity element logits, restricting the values of each element in the logit vector.

\section{Preliminaries}
\label{ch2sec:prelim}
In the upcoming section, we elucidate the concept of FL, followed by an in-depth discussion of the experimental setup, encompassing dataset descriptions, model specifications, hyperparameter configurations during FL, and detailed description four factor analysis conducted in main paragraph. For clarity and convenience, we present a concise overview of key notations in Table \ref{ch2tab:main_notation_summary}, facilitating comprehension of the paper.  

\begin{table}[htp]
\centering
\caption{Main Notations Throughout the Paper.}
\resizebox{0.8\textwidth}{!}{
\begin{tabular}{ll}

\hline
\textbf{Indices} & \\ 
$c\in[C]$  & index for a class   \\
$r\in[R]$ & index for FL round\\
$n\in[N]$ & index for a client   \\ 

\hline
\textbf{Dataset}&  \\
$(x,y)$ & (input image , true class label of $x$) \\
$D_{train}, D_{test}$ & total train and test dataset \\

$D_{train}^n, D_{test}^n$ & train and test dataset of client $n$\\

$D(c)$ & collection of dataset $D$ with the label $c$\\

\hline
\textbf{Parameters} & \\ 
$\theta:=(\theta_{ext}, \theta_{cls})$  & model parameter   \\
$\theta_{ext}$ & feature extractor part of $\theta$   \\ 
$\theta_{cls}\in \mathds{R}^{C\times d}$ & classifier part of $\theta$ \\
$\theta_{cls,i}, i\in[C]$ & i-th row vector of $\theta_{cls}$ \\
\hline

\textbf{Model Forward}&  \\
$p(x;\theta)\in\mathds{R}^{C}$& softmax probability of input $x$ \\ 
$p_i(x;\theta), i\in[C]$& i-th element of $p(x;\theta)$ \\ 
$\mathcal{L}_{CE}(x;\theta):=-\log p_{y}(x;\theta)$& cross entropy loss of input $x$\\
$f(x;\theta_{ext})\in\mathds{R}^{d}$ & feature vector of input $x$  \\ 
$\hat{f}(x;\theta_{ext}):=f(x;\theta_{ext})/||f(x;\theta_{ext})||_{2}$ & normalized feature vector of input $x$  \\
$z(x;\theta):=\theta_{cls}\,f(x;\theta_{ext})\in\mathds{R}^{C}$& logit vector of input $x$  \\ 
$z_{i}(x;\theta), i\in[C]$& i-th element of $z(x;\theta)$  \\ 
\hline

\hline
\textbf{Prototype  of label c}&  \\
$f(D(c);\theta_{ext}):=\frac{1}{|D(c)|}\sum_{(x,y)\in D(c)}f(x;\theta_{ext})$ & prototype of $D(c)$ \\
$\hat{f}(D(c);\theta_{ext}):=\frac{1}{|D(c)|}\sum_{(x,y)\in D(c)}\hat{f}(x;\theta_{ext})$ & prototype from the normalized features \\
\hline

\end{tabular}
}
\label{ch2tab:main_notation_summary}\end{table}

\subsection{FL Procedure}
In FL, we aim to train a robust image classification model on the central server while preserving the privacy and security of individual client data. The procedure involves communication over $R$ rounds. In each round $r \in [R]$, a random subset of clients $S_r\subset [N]$ is selected from the client pool. These selected clients receive the current global model parameters $\theta^{r-1}$ from the central server. Subsequently, each client $n \in S_r$ performs local updates on its local dataset $D_{train}^n$ for $E$ epochs using a batch size of $B$. The updated model parameters for client $n$ are denoted as ${\theta}^{r,n}$. Afterward, the central server updates the global model parameter $\theta^r$ as a result of convex combination based on the ${\theta}^{r,n}$~\cite{li2019convergence}. This collaborative approach iteratively refines the image classification model on the central server over the $R$ rounds, resulting in a robust model that performs well on the entire test dataset $D_{test}$ while preserving individual client data privacy.

\subsection{Experimental Setup}

\textbf{Datasets and Models}  To simulate a realistic federated learning scenario involving 100 clients, we conduct extensive studies on two widely-used datasets: CIFAR-10 and CIFAR-100~\citep{krizhevsky2009cifar}. For CIFAR-10, we employ the VGG11~\citep{simonyan2014very} model, while for CIFAR-100, the MobileNet~\citep{howard2017mobilenets} model is chosen. The training data is distributed among 100 clients using two distinct Non-IID partition strategies:

\begin{itemize}
\item \textbf{Sharding}~\citep{mcmahan2017communication, oh2021fedbabu}: We meticulously organize the data by label and divide it into non-overlapping shards of equal size. Each shard encompasses $\frac{|D_{train}|}{100\times s}$ samples of the same class, and $s$ denotes the number of shards per client. This results in each client having access to a maximum of $s$ different classes. As we decrease the number of shards per user $s$, the level of data heterogeneity among clients increases. For CIFAR-10, we explore various $s$ values, such as $s\in\{2, 3, 5, 10\}$, while for CIFAR-100, we experiment with $s \in \{10, 50, 100\}$.

\item \textbf{Latent Dirichlet Allocation (LDA)}~\citep{luo2021no, wang2020federated}: We utilize the LDA technique to sample a probability vector $p_c = (p_{c,1}, p_{c,2}, \cdots, p_{c,100}) \sim Dir(\alpha)$ and allocate a proportion $p_{c,k}$ of instances of class $c \in [C]$ to each client $k \in [100]$, where $Dir(\alpha)$ represents the Dirichlet distribution with the concentration parameter $\alpha$. The parameter $\alpha$ controls the strength of data heterogeneity, where smaller values lead to stronger heterogeneity among clients. For both CIFAR-10 and CIFAR-100, we conduct experiments with various $\alpha$ values, such as $\alpha \in \{0.1, 0.3, 0.5, 1.0\}$.

\end{itemize}

\noindent\textbf{Hyperparameter Search:} To optimize the hyperparameters for federated learning, we conduct grid searches for the initial learning rate on both CIFAR-10 and CIFAR-100. For CIFAR-10, we explore learning rates in the range of $\{0.01, 0.03, 0.05, 0.1\}$, while for CIFAR-100, we consider learning rates of $\{0.1, 0.3, 0.5, 1.0\}$. Additionally, we perform grid searches to determine the optimal number of local epochs, evaluating values in the set $\{1, 5, 10, 15, 20\}$ for both datasets. The optimal number of local epochs is found to be 15 for CIFAR-10 and 5 for CIFAR-100. In cases where specific values are not mentioned, we use default initial learning rates of 0.01 for CIFAR-10 and 0.1 for CIFAR-100. 

\noindent\textbf{Implementation Details} All experiments are conducted for 320 rounds to thoroughly assess the performance and convergence behavior of the models. To ensure convergence during training, we decay the learning rate by 0.1 at half and three-quarters of the federated learning rounds. Additionally, we utilize random horizontal flipping as a data augmentation technique throughout the training process. Table~\ref{ch2tab:gfl_acc} in Section~\ref{ch2sec:exp} and Table~\ref{ch2tab:gfl_acc_all} in Subsection~\ref{ch2app:add_gfl_result} are constructed using the code structure from \texttt{https://github.com/Lee-Gihun/FedNTD}, while the rest of the implementations are based on \texttt{https://github.com/jhoon-oh/FedBABU}.

\section{Heterogeneity in FedAVG: The Devil is in Feature Norm Discrepancy}\label{ch2sec:problem}

\subsection{4-Factor Analysis of FedAVG}\label{ch2subsec:4-factor-avg}
Within the FedAVG, we explore the impact of data heterogeneity on both feature representations at the penultimate layer and classifier weights, denoted as $f(\cdot; \theta_{ext})\in \mathds{R}^{d}$ and $\theta_{cls}\in \mathds{R}^{C \times d}$, respectively. Our investigation centers on four factors often used to assess model performance~\citep{kang2019decoupling, papyan2020prevalence}:
\begin{enumerate}[label=\textbf{(\roman*)}]
\item \textbf{Weight similarity:} Measuring the similarity or dissimilarity among classifiers across classes, this factor computes the cosine similarity between their normalized weight vectors, resulting in a symmetric matrix. The detail form is :
\begin{equation*}
N(\theta_{cls})^{\top}N(\theta_{cls})\in [-1,1]^{C\times C}, \text{where}\,\, N(\theta_{cls})=\left[\frac{\theta_{cls,1}^\top}{||\theta_{cls,1}||_{2}}|\cdots ,\frac{\theta_{cls,C}^\top}{||\theta_{cls,C}||_{2}}\right]\in\mathds{R}^{d\times C}.
\end{equation*}
Lower weight similarity is preferred, indicating distinct classifiers for each class and better discrimination.
\item \textbf{Inter-class similarity:} This factor delves into the relationships between feature prototypes representing different classes, represented by a symmetric matrix $f(D_{test};\theta)^{\top}f(D_{test};\theta)\in [-1,1]^{C\times C}$. Here, $f(D_{test};\theta)$  is represented as
\begin{equation*}
    \left[f(D_{test}(1);\theta)|\cdots|f(D_{test}(C);\theta)\right]\in \mathds{R}^{d\times C}.
\end{equation*}

Lower inter-class similarity is desired, representing distinguishable feature vectors for different classes.    
\item \textbf{Intra-class similarity:} This factor provides insights into the diversity or similarity of representations within individual classes. We achieve this by calculating the cosine similarity between feature vectors of test prototypes belonging to the same class.  For each class $c\in [C]$, it is evaluated by:
\begin{equation*}
    \frac{1}{|D_{test}(c)|}\sum_{(x,y)\in D_{test}(c)} \frac{f(x;\theta_{ext})^{\top}}{||f(x;\theta_{ext})||_2}\frac{f(D_{test}(c);\theta_{ext})}{||f(D_{test}(c);\theta_{ext})||_{2}}\in\mathds{R}.
\end{equation*}
Higher intra-class similarity is preferred, indicating that feature vectors belonging to the same class are closer to each other, thereby improving class representation and classification performance.
\item \textbf{Prototype-weight alignment:} Assessing this factor reveals the degree of alignment between the classifier and
prototypes for each class, according to their internal
product. High alignment signifies a strong match, while
low alignment indicates a potential mismatch or poor fit. For each class $c\in [C]$, it is evaluated by inner product of $\theta_{cls,c}$ and $f(D_{test}(c);\theta)$. This factor is generally not a primary concern but may be correlated with weight similarity and inter-class similarity.

\end{enumerate}

These four factors encompass both feature and classifier-related aspects, enabling us to discern which aspects are more negatively impacted as data heterogeneity increases.

\newpage
\begin{figure}[t!]
    \centering
    \makebox[\textwidth]{\includegraphics[width=\textwidth]{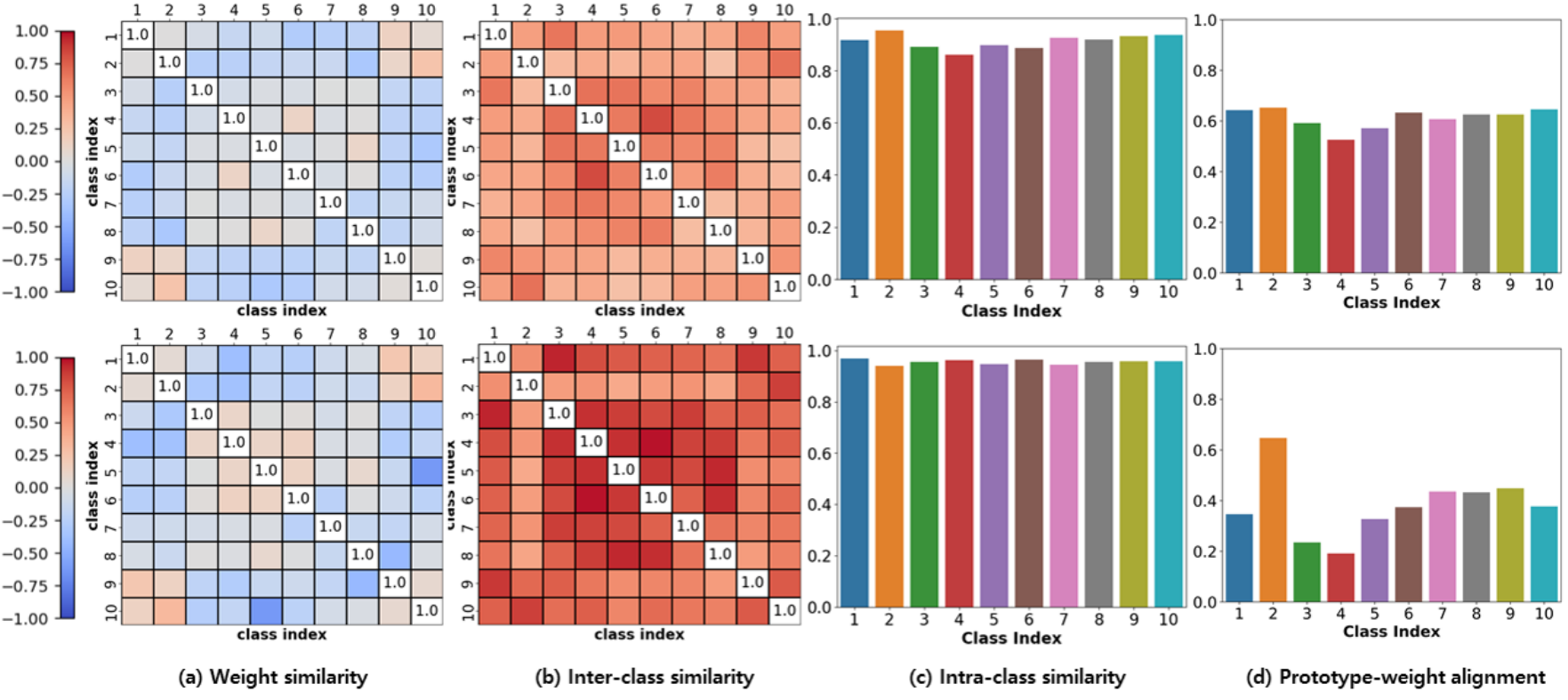}}
    \caption{4-Factor Analysis of FedAVG: Results for $s=10$ (Top Row) and $s=2$ (Bottom Row).}
    \label{ch2fig:4_factor_fedavg}
\end{figure}

Figure~\ref{ch2fig:4_factor_fedavg} visualizes the four factors concerning data heterogeneity. The upper and lower rows present the results under smaller (i.e., $s$=10) and larger data heterogeneity (i.e., $s$=2), respectively. In both settings, weight similarity exhibits lower values, as indicated by the blue color. However, with increasing data heterogeneity, there is an increase in inter-class similarity within FedAVG, indicating a negative impact.  Conversely, as data heterogeneity rises, intra-class similarity improves. With higher data heterogeneity,  prototype-weight alignment deteriorates, likely influenced by the more pronounced decrease in inter-class similarity. In summary, as data heterogeneity increases, the factors most adversely affected are inter-class similarity and prototype-weight alignment, both of which are common feature-related factors.

\subsection{Feature Norm Discrepancy Persists in Local and Global Models}

\begin{figure}[h!]
    \centering
    \begin{subfigure}[b]{0.45\textwidth}
        \includegraphics[width=\textwidth]{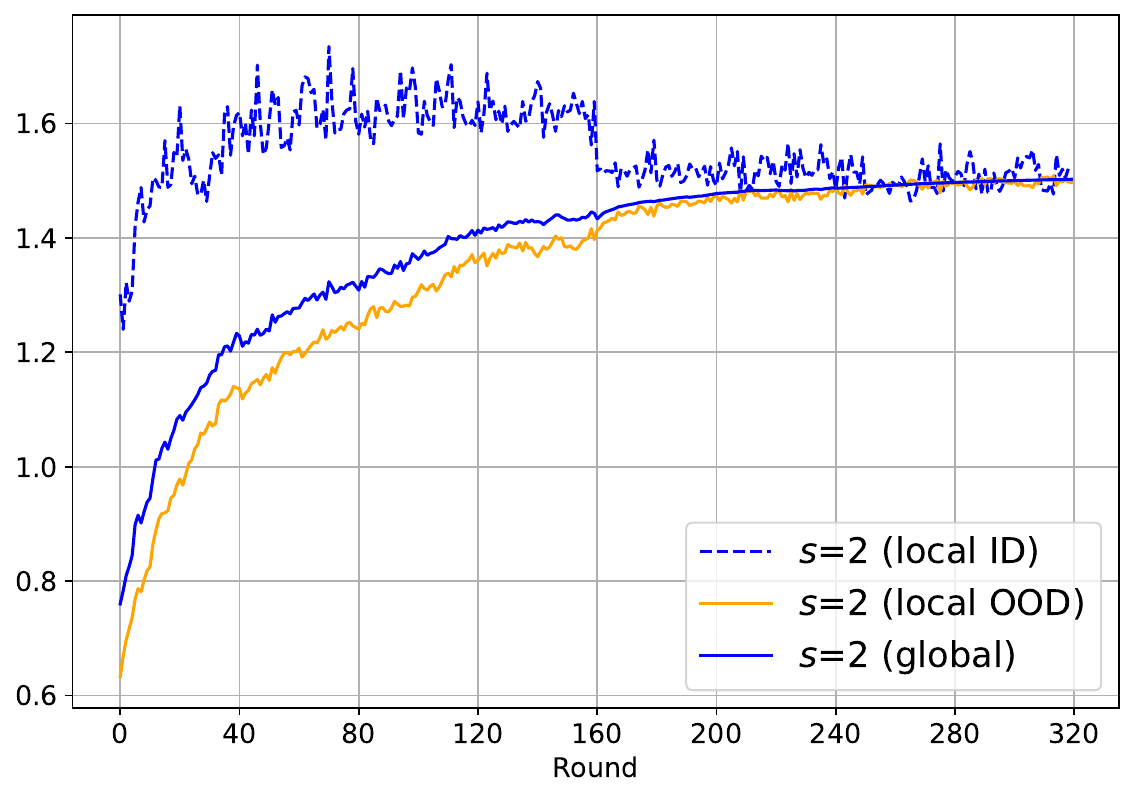}
        \label{ch2fig:weight-norm}
        \caption{Classifier weight norm means}
    \end{subfigure}    
    \begin{subfigure}[b]{0.45\textwidth}
        \includegraphics[width=\textwidth]{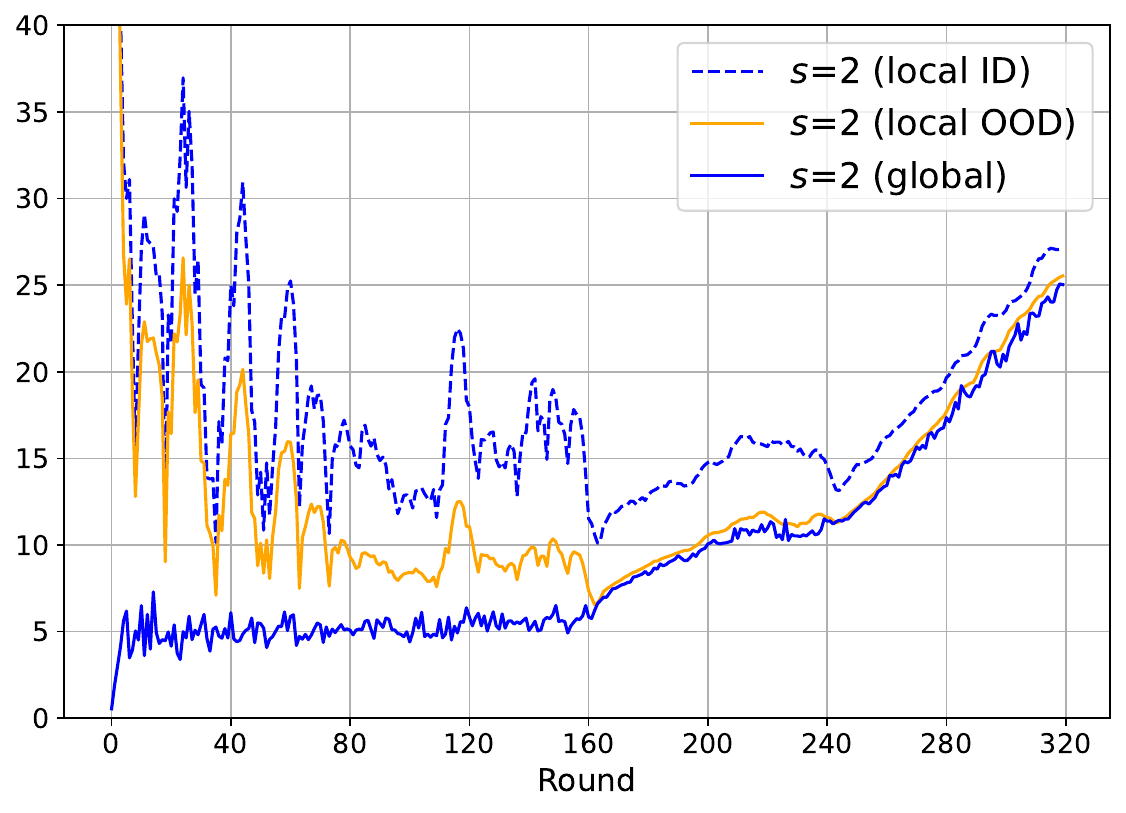}
        \label{ch2fig:feature-norm}
        \caption{Feature norm means}
    \end{subfigure}
    \caption{Means of Classifier Weight and Feature Norms for Local and Global Models with $s$=2.}
    \label{ch2fig:weight-feature-norm}
\end{figure}

We take a closer look at feature norm bias alongside weight norm bias~\citep{luo2021no, zhao2020maintaining, wu2019large, kang2019decoupling, oh2021fedbabu, yu2020devil}, which tends to favor major classes with larger weight norms. Our investigation is conducted in a high data heterogeneity setting ($s$=2) using the CIFAR-10 dataset on VGG11. We are motivated by the extensive distribution of calssifier weight norms within classes observed across clients in FL, as discussed in \citep{luo2021no}. Furthermore, our prior four factor analysis has emphasized the significant influence of feature-related aspects in response to varying data heterogeneity. This analysis strongly encourages us to look into feature norm bias. This is important because even though a lot of research has been done on weight norm bias, not much attention has been given to feature norm bias.

We compute weight norm means for two groups of classes: those seen (ID) and unseen (OOD) classes during their respective local training. Additionally, we calculate the total class weight norm mean from the global model. Furthermore, we explore feature norm means derived from local models using both the ID and OOD test datasets, in addition to the feature norm mean obtained from the global model using the entire test dataset. Figure~\ref{ch2fig:weight-feature-norm} illustrates the visual representation of the results.

During the initial stage of training, weight norms in local models exhibit a significant bias in favor of ID classes over OOD classes. Simultaneously, feature norm mean within local model also display a corresponding bias. However, as the learning rate gradually decreases, both feature norm bias and weight norm bias diminish. Weight norm bias eventually vanishes in local models, aligning with the weight norm mean of global model. In contrast, feature norm bias persists, consistently resulting in higher feature norm mean in local model from ID classes compared to the global model.
\vspace{0.2 in}

\subsection{Feature Norm Discrepancy with Increasing Heterogeneity}

\begin{figure}
    \centering
    \includegraphics[width=0.7\linewidth]{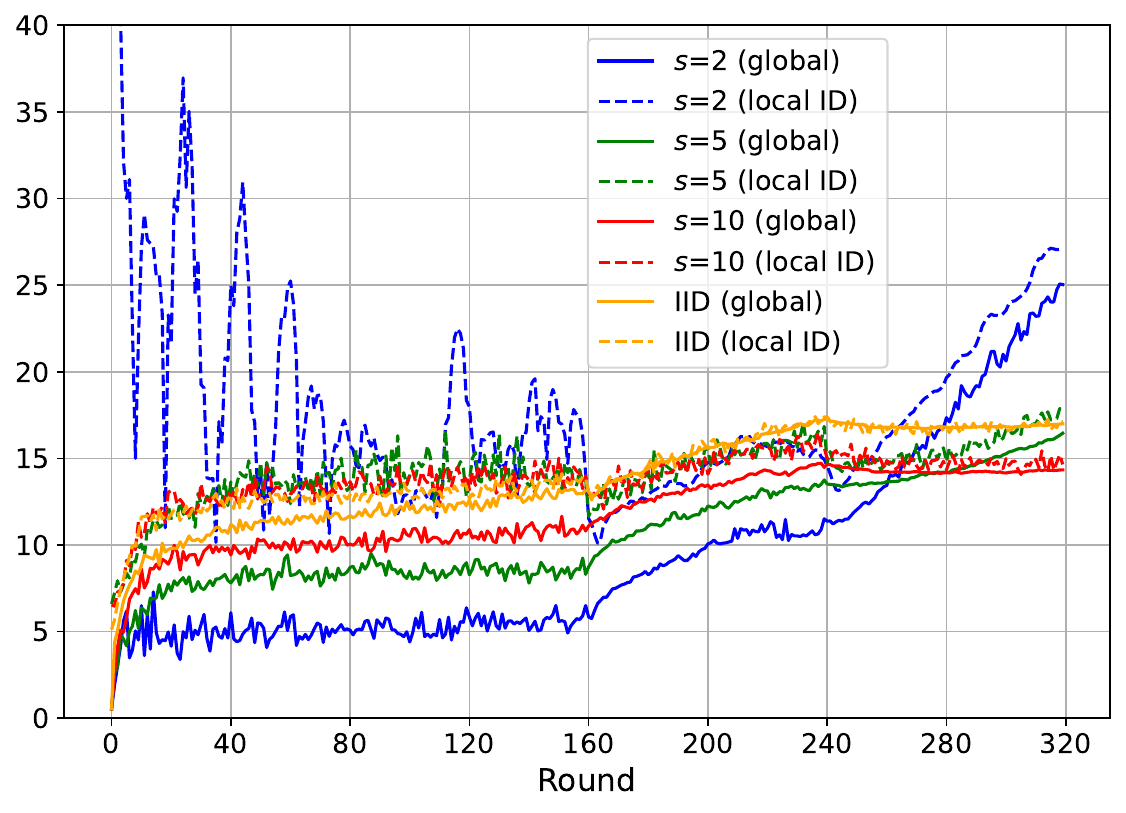}
    \caption{Feature Norm Means of Global and Local Models across Varying Data Heterogeneity.}
\label{ch2fig:local_global_feature_norm}
\end{figure}

We examine the discrepancy of  feature norm means between local models on ID test dataset and global model on the entire test dataset under varying data heterogeneity. Specifically, we consider $s \in \{2, 5, 10\}$ and IID (Exactly class balanced data distribution across clients) on the CIFAR-10 dataset. As depicted in Figure~\ref{ch2fig:local_global_feature_norm}, the discrepancy in the norms between the global model and local models increases, as data heterogeneity increases (i.e., IID $\rightarrow$ $s$=10 $\rightarrow$ $s$=5 $\rightarrow$ $s$=2). 

\subsection{Reducing Feature Norm Discrepancy for Improved Performance}

\begin{figure}[h!]
    \centering
    \begin{subfigure}[b]{0.48\textwidth}
        \includegraphics[width=\textwidth]{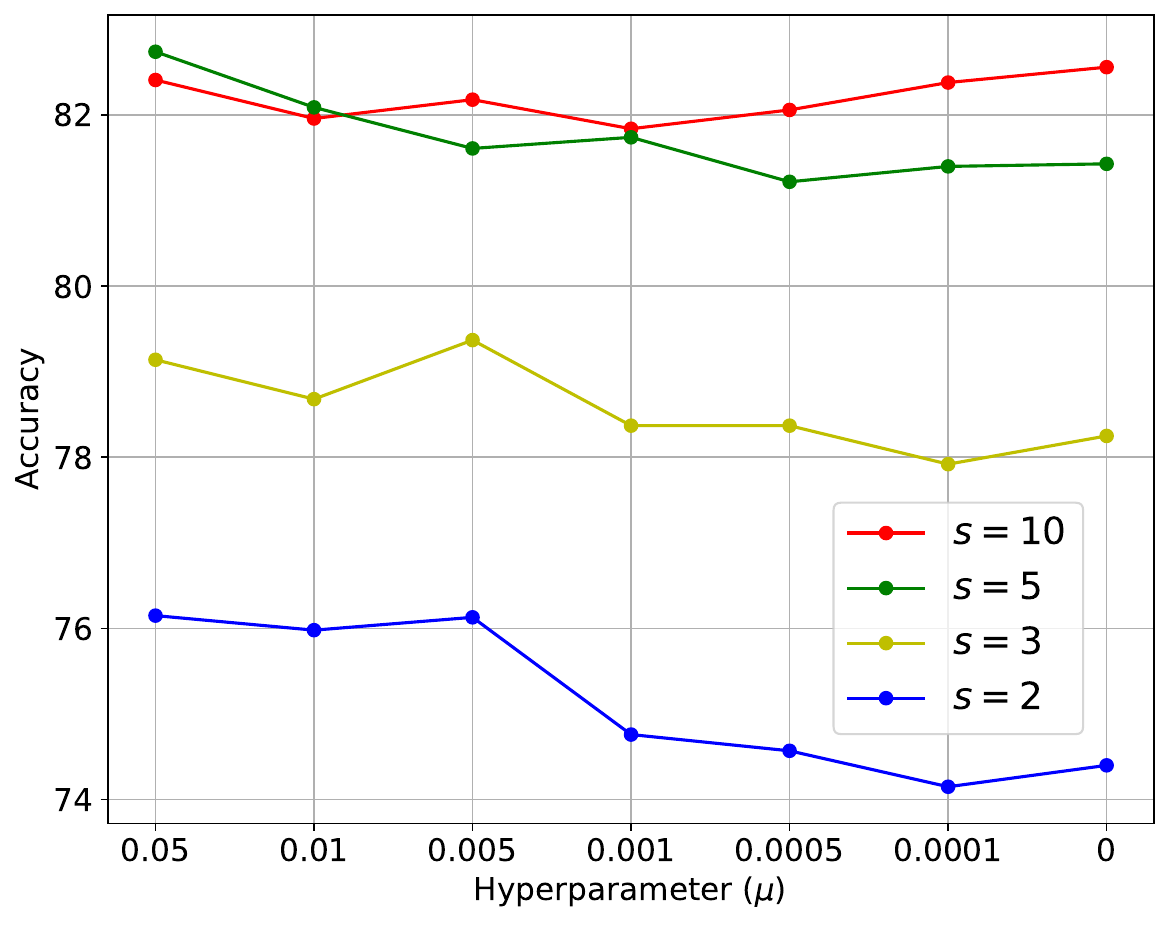}
        \label{ch2fig:hyperparam-effect}
        \vspace{-0.25 in}
        \caption{Accuracy on hyperparameter $\mu$}
    \end{subfigure}    
    \begin{subfigure}[b]{0.47\textwidth}
        \includegraphics[width=\textwidth]{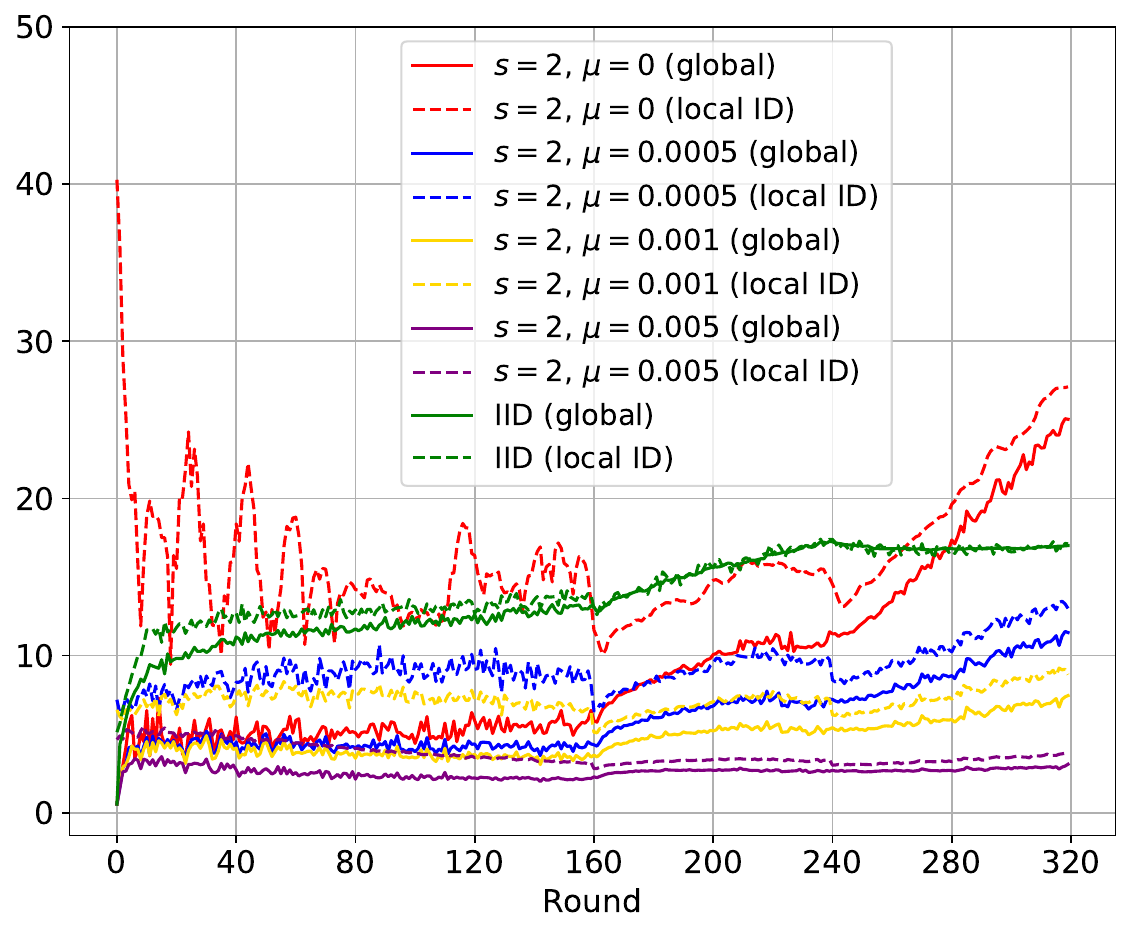}
        \label{ch2fig:fedfr-local-global}
        \vspace{-0.25 in}
        \caption{Feature norm means of local and global models}
    \end{subfigure}
    \caption{Effects of Feature Norm Regularization.}
    \label{ch2fig:fedfr-result}
\end{figure}

To mitigate feature norm discrepancy between local and global models, especially in scenarios of high heterogeneity that lead to elevated local model feature norms, we introduce an L2 feature norm regularization term with a hyperparameter $\mu$ to the local model training loss. This term is added alongside the cross-entropy loss $\mathcal{L}_{CE}$, and the combined loss $\mathcal{L}_{\mu}$ is formulated as follows:
\begin{equation}
    \mathcal{L}_{\mu}(x;\theta)=\mathcal{L}_{CE}(x;\theta)+\mu\,||f(x;\theta_{ext})||_{2}.
\end{equation}\label{ch2eqn:fedfr}
We apply $\mathcal{L}_{\mu}$ with various hyperparameters $\mu\in\{0, 0.0001, 0.0005, 0.001, 0.005, 0.01, 0.05\}$ across different $s\in\{2,3,5,10\}$ settings, and the results are illustrated in Figure~\ref{ch2fig:fedfr-result}. 

The optimal hyperparameter $\mu$ varies across different heterogeneity settings. For example, in Figure~\ref{ch2fig:fedfr-result} (a) , we observe that for $s=5$ and $s=2$, the optimal $\mu$ values are 0.05 and 0.005, respectively. In Figure~\ref{ch2fig:fedfr-result} (b), we illustrate the feature norm means of local and global models during training under high heterogeneity setting ($s$=2) for different $\mu$ values. 
Notably, at the optimal $\mu$ value of 0.005,  we observe a significant reduction in these discrepancies, aligning more closely with the ideal IID setting. Moreover, each feature norm mean of local and global models are noticeably lower than those in the ideal IID setting. This difference can be attributed to the decrease in the feature norm mean of local model, which subsequently impacts the feature norm mean of the global model. In summary, our findings underscore the performance improvements can be achieved by reducing discrepancies in feature norm means between the global and local models during the training process.

\section{FedFN: Federated Averaging with Feature Normalization Update}\label{ch2sec:method}
In this section, we present Federated Averaging with Feature Normalization (FedFN), which integrates feature normalization (FN) with FedAVG. We also apply the four-factor analysis conducted in Section~\ref{ch2sec:problem} to FedFN. Furthermore, the gradual reduction in weight norm bias within FedFN during training is detailed in Subsection~\ref{ch2app:add_exp}.
\subsection{FedFN Algorithm}\label{ch2subsec:method}
We revisit \emph{Federated Averaging with Feature Normalization Update} (FedFN), an extension of FedAVG enriched with Feature Normalization (FN) updates as discussed in \cite{dong2022spherefed}\footnote{SphereFed~\citep{dong2022spherefed} proposes an approach to FL where the classifier is initialized in an orthonormalized manner and kept frozen. Meanwhile, feature normalization is applied to train the local model, utilizing MSE loss. The comparison between FedFN and SphereFed can be found in Subsection~\ref{ch2app:modify_norm}.}. 
 FN eliminates feature norm bias within the local model by normalizing the feature vector, ensuring that the norm is consistently set to 1 for any input $x$. In FedFN, compared to FedAVG, the FN update modifies the logit vector $z$ of an input $x$, represented as $\hat{z}(x;\theta)=\theta_{cls}\frac{f(x;\theta_{ext})}{||f(x;\theta_{ext})||_{2}}$. Consequently, the gradient of $\theta_\text{cls}$ concerning the cross-entropy loss $\mathcal{L}_{CE}(\cdot)$ for FedFN is expressed as follows:
\vspace{-0.05 in}
\begin{equation*}
\nabla_{\theta_{cls}}\mathcal{L}_{CE}(x;\theta)=\nabla_{\hat{z}(x;\theta)}\mathcal{L}_{CE}(x;\theta)\frac{f(x;\theta_{ext})^{\top}}{||f(x;\theta_{ext})||_{2}}\in \mathds{R}^{C \times d}.
\end{equation*}
\begin{wraptable}[6]{r}{0.45\textwidth}
\centering
\vspace{-0.2in}
\caption{Accuracy of FedAvg and FedFN.}
    \begin{tabular}{c|cc}
    \toprule
     & FedAvg  & FedFN  \\ \midrule 
    $s$=10  & 81.97 & \textbf{83.80}  \\ 
    $s$=2 &  74.24 & \textbf{77.77} \\
    \bottomrule
    \end{tabular}
\label{ch2tab:acc_fedavg_fedfn}
\end{wraptable}
Unlike FedAVG, FedFN scales the gradient of $\theta_{cls}$ by dividing it by the feature vector norm. This scaling significantly impacts the gradient of $\theta_{cls}$ and, consequently, the applied learning rate. As a result of this influence, we conduct  a thorough fine-tuning for the learning rate for the FN update, leading FedFN to adopt a larger initial learning rate of 0.03, compared to the baseline rate of 0.01 in FedAVG.
These learning rates undergo careful selection through an extensive grid search, with detailed findings available in Subsection~\ref{ch2app:grid search}.  Table~\ref{ch2tab:acc_fedavg_fedfn} demonstrates significant accuracy improvement with FedFN compared to FedAVG.
\vspace{-0.1 in}
\subsection{4-Factor Analysis of FedFN}\label{ch2subsec:4-factor-fn}
\begin{figure}[htp]
    \centering
    \makebox[\textwidth]{
    \includegraphics[width=1.0\textwidth]{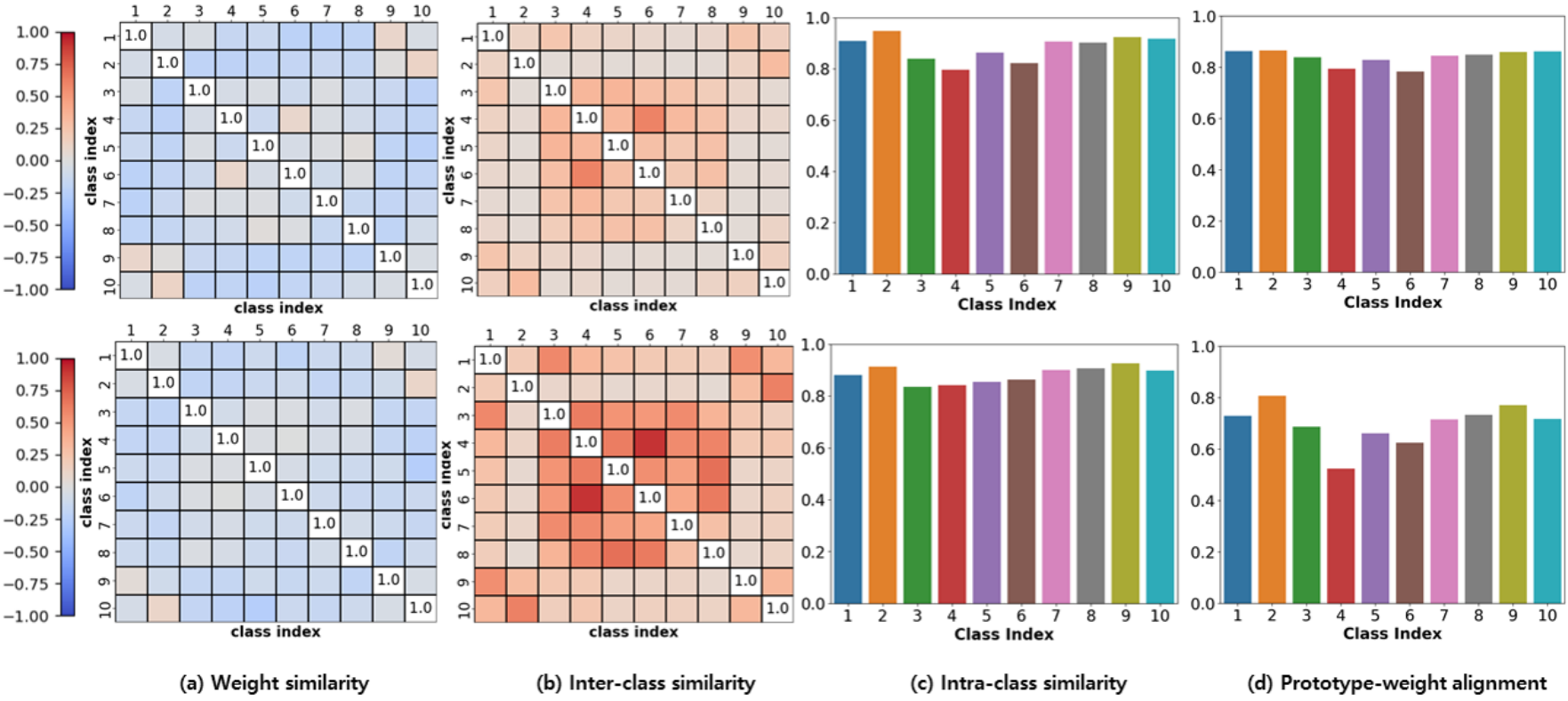}
    }
    \caption{4-Factor Analysis of FedFN; Results for $s$=10 (Top Row) and $s$=2 (Bottom Row).} 
    \label{ch2fig:4_factor_fedFN}
\end{figure}

We conduct four factor analysis on the improved global model compared to FedAVG, as reported in Table~\ref{ch2tab:acc_fedavg_fedfn}. The results are visualized in Figure \ref{ch2fig:4_factor_fedFN}. Similar to FedAVG, weight similarity remains robust even in high data heterogeneity. On the other hand, FedFN demonstrates significant improvement in inter-class similarity between prototypes compared to FedAVG, especially in the high data heterogeneity. In contrast to FedAVG, FedFN exhibits a slight decrease in intra-class similarity as data heterogeneity increases. This shift likely be attributed to the simultaneous improvement in inter-class similarity within FedFN, enabling finer class separation. While some degradation in prototype-weight alignment is observed in FedFN, FedFN consistently outperforms FedAVG, possibly due to its enhanced inter-class similarity. In summary, the superior inter-class similarity of FedFN contributes to its enhanced performance, resulting in improved discrimination and stability across various data heterogeneity.

\section{Experiment Results}\label{ch2sec:exp}

In this section, we present the experimental results of FedFN, demonstrating its compatibility with existing FL algorithms. Moreover, we investigate a comparative analysis with the pretrained model To assess the feasibility of implementing FedFN in the foundation models. Additionally, the application of FedFN for the personalized FL can be found in Subsection~\ref{ch2subsec:pfl_result}.

\subsection{Compatibility of FN with Existing FL Algorithms}\label{ch2subsec:gfl_compatible}
We assess the compatibility of the FN update module with existing FL algorithms, including FedAVG~\cite{mcmahan2017communication}, Scaffold~\cite{karimireddy2020scaffold}, and FedEXP~\cite{jhunjhunwala2023fedexp} (referred to as 
``Baseline'' algorithms). Additionally, we compare the results with those obtained by applying the BABU module~\citep{oh2021fedbabu}, which freezes the classifier part of the model, to the baseline algorithms. Table~\ref{ch2tab:gfl_acc} summarizes the accuracy comparison for different FL algorithms, including baseline, +BABU, and +FN, on VGG11 for CIFAR-10 and MobileNet for CIFAR-100. For CIFAR-10 and CIFAR-100, we use FN updates with initial learning rates of 0.03 and 0.5, respectively. 
While Scaffold generally demonstrates superior performance, it faces training failures in scenarios with high data heterogeneity, even when applying the BABU module (e.g., $s$=3). 
In contrast, the FN update consistently outperforms all algorithms, including baselines and those with the BABU module, across all heterogeneity settings, demonstrating its superiority.

\begin{table}[h!]
    \centering
    \caption{FL Accuracy Comparison for Baseline, +BABU, and +FN.}
    \small
    \resizebox{\textwidth}{!}{
    \begin{tabular}{cc|cccc|ccc}
      \toprule 
      \multirow{2.5}{*}{Algorithm} & \multirow{2.5}{*}{Module} & \multicolumn{4}{c|}{VGG11 on CIFAR-10} & \multicolumn{3}{c}{MobileNet on CIFAR-100} \\ \cmidrule{3-9}
      & & $s$=2 & $s$=3 & $s$=5 & $s$=10 & $s$=10 & $s$=50 & $s$=100 \\ \midrule
      & Baseline  & 73.62 & 77.70 & 81.62 & 82.13 & 37.25 & 42.90 & 43.36 \\ 
      FedAVG (2017) & + BABU & 73.11 & 76.57 & 81.22 & 81.89 & 43.20 & 39.70 & 39.59 \\ 
      & \textbf{+ FN} & \textbf{76.47} & \textbf{78.66} & \textbf{82.07} & \textbf{83.09} & \textbf{44.67} & \textbf{48.17} & \textbf{49.67}  \\  \midrule
      & Baseline & 77.07 & (\textit{Failed}) & \textbf{84.30} & 84.98 & 41.86 & 43.59 & 41.74 \\
      Scaffold (2020)   & + BABU & 77.17 & (\textit{Failed}) & 83.54 & 84.43 & 46.41 & 41.61 & 42.55 \\ 
       & \textbf{+ FN} & \textbf{77.96} & \textbf{79.24} & \textbf{84.40} & \textbf{85.54} & \textbf{49.42} & \textbf{50.42} & \textbf{52.10}  \\  \midrule
      & Baseline & 73.49 & 77.90 & 81.64 & 82.42 & 36.35 & 41.06 & 42.38\\
      FedEXP (2023) & +BABU & 72.58 & 77.59 & 81.07 & 81.96 & 43.38 & 40.73 & 39.04 \\ 
      & \textbf{+ FN} & \textbf{76.33} & \textbf{78.20} & \textbf{82.41} & \textbf{83.26} & \textbf{45.90} & \textbf{49.10} & \textbf{49.11}  \\  \bottomrule 
    \end{tabular}
    }
    \label{ch2tab:gfl_acc}
\end{table}
\vspace{-0.25 in}
\subsection{Comparative Analysis with Pretrained Model}\label{ch2subsec:gfl_foundation}
\vspace{-0.05 in}
To evaluate the feasibility of applying FedFN within the foundation models, we conducted experiments using a pretrained ResNet18 model on the CIFAR-10 dataset~\citep{yu2023federated, tan2022federated, chen2206importance, nguyen2210begin}. Specifically, we compare the performance of FedAVG, FedBABU, and FedFN when utilizing both pretrained and non-pretrained ResNet18 models. Table~\ref{ch2tab:gfl_foundation} presents the accuracy comparison for these algorithms. Across all experimental settings, FedFN consistently outperforms both FedBABU and FedAVG. Notably, FedAVG and FedBABU, particularly in scenarios with high data heterogeneity, exhibit significant performance declines when pretrained models are employed. In contrast, FedFN consistently improves with the application of pretrained models, utilizing FN updates with an initial learning rate of 0.1.

\begin{table}[htp]
    \centering
    \caption{Accuracy Comparison on CIFAR-10 on ResNet18.}
    \small
    \begin{tabular}{c|cccc|cccc}
    \toprule
    \multirow{2.5}{*}{Algorithm} & \multicolumn{4}{c|}{Pretrained=\emph{False}} & \multicolumn{4}{c}{Pretrained=\emph{True}} \\ \cmidrule{2-9} 
                                 & $s$=2 & $s$=3 & $s$=5 & $s$=10 & $s$=2 & $s$=3 & $s$=5 & $s$=10 \\ \midrule
    FedAVG   & 41.50 & 55.31 & 67.64 & 73.39 & 37.87 & 58.31 & 71.75 & 84.50 \\ 
    FedBABU  & 49.21 & 58.44 & 68.84 & 73.82 & 49.78 & 49.61 & 66.46 & 84.22 \\ 
    FedFN    & \textbf{55.17} & \textbf{60.47} & \textbf{77.12} & \textbf{81.26} & \textbf{56.84} & \textbf{76.84} & \textbf{80.02} & \textbf{84.99} \\ \bottomrule                            
    \end{tabular}    \label{ch2tab:gfl_foundation}
\end{table}

\section{Supplemental Evidence}
\label{ch2sec:suppl}

\subsection{Grid Search Result}
\label{ch2app:grid search}
To optimize the hyperparameters for FL, we conduct grid searches for the initial learning rate on both CIFAR-10 and CIFAR-100. For CIFAR-10, we explore learning rate $\eta$ in the range of $\{0.01, 0.03, 0.05, 0.1\}$, while for CIFAR-100, we consider $\eta$ of $\{0.1, 0.3, 0.5, 1.0\}$. Additionally, we performe grid searches to determine the optimal number of local epochs $E$, evaluating values in the set $\{1, 5, 10, 15, 20\}$ for both datasets. The optimal number of $E$ is found to be 15 for CIFAR-10 and 5 for CIFAR-100. Unless otherwise specified, the values determined through grid search for the hyperparameter $\eta$ are as follows: default initial learning rates are 0.01 for CIFAR-10 and 0.1 for CIFAR-100.
\subsubsection{FedBABU vs FedAVG vs FedFN}
We conduct grid searches for FedAVG~\cite{mcmahan2017communication}, FedBABU~\cite{oh2021fedbabu}, and FedFN. Table ~\ref{ch2apptab:grid_vgg} to ~\ref{ch2apptab:grid_resnet_pretrained} present the grid search results for VGG on CIFAR-10, MobileNet on CIFAR-100, ResNet18 on CIFAR-10, and Pretrained ResNet18 on CIFAR-10, respectively. In these tables, we abbreviate FedAVG, FedBABU, and FedFN as AVG, BABU, and FN, respectively, and indicate cases where the final global model fails to converge during training with a ``-" in the respective table cells. The determined optimal hyperparameter $\eta$ values for FedFN are as follows: 0.03, 0.5, 0.1, and 0.1 for Table ~\ref{ch2apptab:grid_vgg} to ~\ref{ch2apptab:grid_resnet_pretrained}, respectively.
\begin{table}[htp]
    \centering
    \caption{Grid Search Results for VGG11 on CIFAR-10.}  
    \resizebox{\textwidth}{!}{
        \begin{tabular}{c|ccc|ccc|ccc|ccc|ccc}
        \toprule
        \multirow{2.5}{*} {\emph{$\eta$=0.01}} & \multicolumn{3}{c|}{$E$=1}                      & \multicolumn{3}{c|}{$E$=5} & \multicolumn{3}{c|}{$E$=10} & \multicolumn{3}{c|}{$E$=15} & \multicolumn{3}{c}{$E$=20}  \\ \cmidrule{2-16}         
        & BABU & AVG & FN & BABU & AVG & FN & BABU & AVG & FN & BABU & AVG & FN & BABU & AVG & FN \\ \midrule  
        $s$=10 & 49.51 & 52.98  & 52.02 & 82.43 & 82.68 & 82.82 & 82.56 & 83.03        & 82.7  & \textbf{82.16} & \textbf{81.97} & 82.11 & 81.82 & 81.66 & 82.42\\ 
        
        $s$=5 & 42.47 & 46.28 & 50.26 & 79.39 & 80.42 & 79.76 & 81.13 & 81.36 & 82.09 & \textbf{81.04} & \textbf{81.08} & 81.74 & 80.82 & 81.35 & 81.3   \\ 
        
        $s$=3 & 29.45 & 38.32 & 48.24 & 71.64 & 73.83 & 72.94 & 78.39 & 77.39 & 77.49 & \textbf{77.73} & \textbf{77.29}        & 78.37 & 78.00 & 78.03 & 78.48   \\ 
        
        $s$=2 & 24.21 & 30.61 & 46.24 & 56.37 & 63.78 & 62.5 & 73.31 & 73.24 & 73.79  & \textbf{75.05} & \textbf{74.24} & 75.34 & 75.25 & 74.98 & 76.33   \\ \midrule
            
        \multirow{2.5}{*} {\emph{$\eta$=0.03}} & \multicolumn{3}{c|}{$E$=1}                      & \multicolumn{3}{c|}{$E$=5} & \multicolumn{3}{c|}{$E$=10} & \multicolumn{3}{c|}{$E$=15} & \multicolumn{3}{c}{$E$=20}  \\ \cmidrule{2-16}         
        & BABU & AVG & FN & BABU & AVG & FN & BABU & AVG & FN & BABU & AVG & FN & BABU & AVG & FN \\ \midrule  
        $s$=10 & 63.23   & 66.81 & 64.32 & 84.23 & 84.32 & 83.76 & 84.10 & 84.13 & 83.80 & 83.61 & 84.38 & \textbf{83.80} & 83.82 & 83.42 & 83.68   \\ 
        
        $s$=5 & 54.86 & 59.21 & 57.94 & 82.14 & 82.56 & 81.43 & 82.19 & 82.39 & 82.34  & 83.17 & 82.92 & \textbf{82.43} & 82.49 & 82.66 & 82.79   \\ 
        
        $s$=3 & 34.12& 49.90 & 52.11 & 75.71 & - & 74.01 & - & - & 78.51 & - & - & \textbf{78.93} & - & - & 78.77 \\ 
        
        $s$=2 & - & 34.80 & 46.54 & 50.80 & 62.80 & 66.60 & 74.27 & 75.08 & 76.26  & 74.57 & - & \textbf{77.77} & 77.36 & - & 77.67   \\ \midrule
        
        \multirow{2.5}{*} {\emph{$\eta$=0.05}} & \multicolumn{3}{c|}{$E$=1}                      & \multicolumn{3}{c|}{$E$=5} & \multicolumn{3}{c|}{$E$=10} & \multicolumn{3}{c|}{$E$=15} & \multicolumn{3}{c}{$E$=20}  \\ \cmidrule{2-16}         
        & BABU & AVG & FN & BABU & AVG & FN & BABU & AVG & FN & BABU & AVG & FN & BABU & AVG & FN \\ \midrule    
        $s$=10 & 69.58 & 70.99 & 68.22 & 84.49 & 84.51 & 83.60 & 84.22 & 84.30 & 83.35 & 84.19 & 84.18 & 83.81 & 83.94 & - & 82.92 \\ 
        
        $s$=5 & 56.65 & 62.67 & 59.31 & 81.74 & 81.62 & 81.87 & 82.77 & - & 82.57  & - & - & 82.63 & 82.90 & - & 82.52   \\ 
        
        $s$=3 & 28.78 & 50.34 & 52.09 & - & - & 74.69 & - & - & 77.45  & - & - & 78.69 & - & - & 78.39  \\ 
        
        $s$=2 & - & 31.77 & 40.34 & - & 61.66 & 62.52 & 71.52 & - & 71.6  & - & - & 76.06 & - & - & 76.77 \\ \midrule

        \multirow{2.5}{*} {\emph{$\eta$=0.1}} & \multicolumn{3}{c|}{$E$=1}                      & \multicolumn{3}{c|}{$E$=5} & \multicolumn{3}{c|}{$E$=10} & \multicolumn{3}{c|}{$E$=15} & \multicolumn{3}{c}{$E$=20}  \\ \cmidrule{2-16}         
        & BABU & AVG & FN & BABU & AVG & FN & BABU & AVG & FN & BABU & AVG & FN & BABU & AVG & FN \\ \hline     

        $s$=10 & 74.58 & 72.59 & 70.30 & 83.90 & - & 83.72 & 84.05 & - & 82.71  & - & - & 82.71 & - & - & 82.35   \\ 
        
        $s$=5 & 60.35 & 61.88 & 59.01 & - & - & 80.08 & - & - & 81.79  & - & - & 81.54 & - & - & 81.77   \\ 
        
        $s$=3 & 25.06 & 28.36 & 47.54 & - & - & 65.51 & - & - & 76.06  & - & - & 76.07 & -& -  & 76.19   \\ 
        
        $s$=2 &  -  & - & 41.27 &  - & - & 58.43 & - & - & 70.86  & - & - & 73.10 & - & - & 74.90   \\ \bottomrule
     
        \end{tabular}
        } 
\label{ch2apptab:grid_vgg}
\end{table}

\begin{table}[htp]
    \centering
    \caption{Grid Search Results for MobileNet on CIFAR-100.}
    \resizebox{\textwidth}{!}{
        \begin{tabular}{c|ccc|ccc|ccc|ccc|ccc}
        \toprule
        \multirow{2.5}{*} {\emph{$\eta$=0.1}} & \multicolumn{3}{c|}{$E$=1}                      & \multicolumn{3}{c|}{$E$=5} & \multicolumn{3}{c|}{$E$=10} & \multicolumn{3}{c|}{$E$=15} & \multicolumn{3}{c}{$E$=20}  \\ \cmidrule{2-16}         
        & BABU & AVG & FN & BABU & AVG & FN & BABU & AVG & FN & BABU & AVG & FN & BABU & AVG & FN \\ \midrule

        $s$=100 & 41.32 & 41.57 & 76.61 & \textbf{40.49} & \textbf{43.19} & 51.18 & 37.03 & 38.13 & 45.26 & 36.16 & 36.10 & 44.03 & 35.91 & 36.19 & 42.25   \\ 
        
        $s$=50 & 40.86 & 38.35 & 7.51 & \textbf{40.76} & \textbf{40.63} & 51.11 & 36.79 & 37.77 & 46.74  & 36.10 & 37.54 & 44.47 & 34.93 & 36.05 & 43.61   \\ 
        
        $s$=10 & 35.95 & 27.02 & 4.85 & \textbf{45.56} & \textbf{36.60} & 43.18 & 41.01 & 34.67 & 47.63  & 37.87 & 35.53 & 45.50 & 38.28 & 35.94 & 47.41   \\ 
        \midrule
        \multirow{2}{*} {\emph{$\eta$=0.3}} & \multicolumn{3}{c|}{$E$=1}                      & \multicolumn{3}{c|}{$E$=5} & \multicolumn{3}{c|}{$E$=10} & \multicolumn{3}{c|}{$E$=15} & \multicolumn{3}{c}{$E$=20}  \\ \cmidrule{2-16}         
        & BABU & AVG & FN & BABU & AVG & FN & BABU & AVG & FN & BABU & AVG & FN & BABU & AVG & FN \\ \midrule

        $s$=100 & 46.65 & 41.76 & 19.52 & 40.57 & 46.67 & 50.80 & 37.79 & 41.28 & 44.34  & 36.16 & 38.29 & 41.18 & 35.91 & 38.22 & 39.38   \\ 
        
        $s$=50 & 44.03 & 37.83 & 19.63 & 40.02 & 45.28 & 49.92 & 37.28 & 42.65 & 43.62  & 36.10 & 41.66 & 41.71 & 34.93 & 40.61 & 38.08   \\   
        
        $s$=10 & 37.37 & 26.66 & 16.66 & 47.07 & 33.78 & 48.89 & 44.57 & 36.42 & 46.77  & 37.87 & - & 47.95 & 38.28 & 35.12 & 48.39   \\         
        \midrule
        \multirow{2}{*} {\emph{$\eta$=0.5}} & \multicolumn{3}{c|}{$E$=1}                      & \multicolumn{3}{c|}{$E$=5} & \multicolumn{3}{c|}{$E$=10} & \multicolumn{3}{c|}{$E$=15} & \multicolumn{3}{c}{$E$=20}  \\ \cmidrule{2-16}         
        & BABU & AVG & FN & BABU & AVG & FN & BABU & AVG & FN & BABU & AVG & FN & BABU & AVG & FN \\ \midrule

        $s$=100 & 47.18 & 35.81 & 28.46 & 39.33 & - & \textbf{51.51} & 38.23 & - & 44.54  & 35.59 & - & 39.56 & 35.99 & - & 38.20   \\ 
        
        $s$=50 & 47.53 & 36.28 & 25.71 & 40.42 & - & \textbf{50.6} & 37.71 & 44.76 & 42.47  & 37.66 & - & 40.96 & 38.38 & 43.15 & 38.42   \\   
        
        $s$=10 & 38.91 & 19.51 & 21.02 & 46.48 & 11.83 & \textbf{46.87} & 45.74 & 23.02 & 44.25  & 43.61 & 24.74 & 45.05 & 42.62 & 30.68 & 43.28   \\ 
        \midrule
        \multirow{2.5}{*} {\emph{$\eta$=1.0}} & \multicolumn{3}{c|}{$E$=1}                      & \multicolumn{3}{c|}{$E$=5} & \multicolumn{3}{c|}{$E$=10} & \multicolumn{3}{c|}{$E$=15} & \multicolumn{3}{c}{$E$=20}  \\ \cmidrule{2-16}         
        & BABU & AVG & FN & BABU & AVG & FN & BABU & AVG & FN & BABU & AVG & FN & BABU & AVG & FN \\ \midrule

        $s$=100 & 46.92 & 31.15 & 36.07 & 39.61 & 34.66 & 50.95 & 37.12 & 42.06 & 42.17  & 37.83 & 46.53 & 39.64 & 37.64 & 33.12 & 39.97   \\ 
        
        $s$=50 & 45.24 & 26.64 & 32.92 & 40.11 & 28.02 & 48.46 & 38.76 & 39.47 & 40.82 & 41.13 & 42.79 & 41.07 & 39.10 & 42.41 & 35.88   \\ 
        
        $s$=10 & 38.59 & 8.39 & 26.63 & 45.14 & - & 39.64 & 44.03 & - & 38.64  & 40.51 & - & 36.05 & 39.67 & 12.26 & 37.41   \\ \bottomrule

        \end{tabular}
        }
\label{ch2apptab:grid_mobile}
\end{table}
\newpage

\begin{table}[htp]
    \centering
    \caption{Grid Search Results for ResNet18 on CIFAR-10.} 
    \resizebox{\textwidth}{!}{
        \begin{tabular}{c|ccc|ccc|ccc|ccc}
        \toprule
        \multirow{2.5}{*} {\emph{$\eta$=0.01}} & \multicolumn{3}{c|}{$E$=5} & \multicolumn{3}{c|}{$E$=10} & \multicolumn{3}{c|}{$E$=15} & \multicolumn{3}{c}{$E$=20}  \\ \cmidrule{2-13}         
        & BABU & AVG & FN & BABU & AVG & FN & BABU & AVG & FN & BABU & AVG & FN  \\ \midrule 
        $s$=10 &  75.32  & 76.07 & 78.00 &  74.53 & 74.17 & 77.72 & \textbf{73.55} & \textbf{73.85} & 77.80  & 73.16 & 72.32 & 77.18    \\
        
        $s$=5 &  68.02  & 69.34 & 72.14 &  67.54 & 68.13 & 71.17 & \textbf{68.36} & \textbf{67.71} & 73.06  & 68.56 & 66.48 & 74.74    \\ 
        
        $s$=3 &  58.60  & 58.93 & 64.94 &  57.87 & 58.45 & 66.45 & \textbf{57.07} & \textbf{57.38} & 64.23  & 60.93 & 55.13 & 66.58    \\ 
        
        $s$=2 &  46.79  & 41.15 & 45.50 &  45.50 & 41.92 & 47.22 & \textbf{49.21} & \textbf{45.92} & 50.81  & 47.50 & 45.23 & 47.99    \\ \midrule
            
        \multirow{2}{*} {\emph{$\eta$=0.03}} & \multicolumn{3}{c|}{$E$=5} & \multicolumn{3}{c|}{$E$=10} & \multicolumn{3}{c|}{$E$=15} & \multicolumn{3}{c}{$E$=20}  \\ \cmidrule{2-13}         
        & BABU & AVG & FN & BABU & AVG & FN & BABU & AVG & FN & BABU & AVG & FN  \\ \midrule  
        $s$=10 &  78.64  & 78.77 & 80.63 &  77.76 & 77.73 & 79.86 & 76.83 & 76.57 & 79.34  & 76.02 & 76.87 & 78.78    \\
        
        $s$=5 &  70.04  & 71.50 & 75.57 & 72.01 & 71.84 & 75.77 & 71.45 & 71.72 & 75.99  & 72.64 & 71.83 & 75.94    \\ 
        
        $s$=3 &  57.53  & 61.54 & 67.55 & 56.60 & 59.58 & 66.88 & 58.06 & 60.53 & 68.63  & 60.80 & 62.07 & 68.65    \\ 
        
        $s$=2 &  47.56  & 40.02 & 52.70 & 46.26 & 46.27 & 50.25 & 50.16 & 45.60 & 48.52  & 50.60 & 46.01 & 52.37    \\ \midrule
        
        \multirow{2.5}{*} {\emph{$\eta$=0.05}} & \multicolumn{3}{c|}{$E$=5} & \multicolumn{3}{c|}{$E$=10} & \multicolumn{3}{c|}{$E$=15} & \multicolumn{3}{c}{$E$=20}  \\ \cmidrule{2-13}         
        & BABU & AVG & FN & BABU & AVG & FN & BABU & AVG & FN & BABU & AVG & FN  \\ \midrule 
        $s$=10 &  80.17  & 79.41 & 81.21 & 78.22 & 77.93 & 80.67 & 77.87 & 78.50 & 80.98  & 77.16 & 77.77 & 79.59    \\
        
        $s$=5 &  70.65  & 72.44 & 77.25 & 72.08 & 71.08 & 76.65 & 73.21 & 71.85 & 77.41  & 74.16 & 72.52 & 77.19    \\ 
        
        $s$=3 &  56.26  & 62.22 & 66.91 & 56.22 & 61.38 & 67.01 & 59.22 & 60.85 & 64.09  & 61.79 & 57.00 & 64.51    \\ 
        
        $s$=2 &  53.18  & 43.99 & 50.88 & 45.76 & 45.26 & 51.06 & 46.53 & 40.44 & 51.28  & 46.49 & 41.58 & 56.41    \\ \midrule

        \multirow{2.5}{*} {\emph{$\eta$=0.1}} & \multicolumn{3}{c|}{$E$=5} & \multicolumn{3}{c|}{$E$=10} & \multicolumn{3}{c|}{$E$=15} & \multicolumn{3}{c}{$E$=20}  \\ \cmidrule{2-13}         
        & BABU & AVG & FN & BABU & AVG & FN & BABU & AVG & FN & BABU & AVG & FN  \\ \hline  
        $s$=10 &  80.83  & 80.13  & 82.37 & 80.04 & 79.87 & 82.01 & 79.50 & 78.57 & \textbf{81.19}  & 78.64 & 78.21 & 80.37    \\
        
        $s$=5 & 72.81  & 71.56 & 77.91 & 74.37 & 69.65 & 76.89 & 74.71 & 69.66 & \textbf{75.98}  & 73.97 & 71.65 & 77.47    \\ 
        
        $s$=3 &  57.14  & 59.65 & 63.75 & 57.33 & 52.71 & 61.12 & 60.34 & 58.13 & \textbf{65.09}  & 58.46 & 54.72 & 67.08    \\ 
        
        $s$=2 & 47.35 & 50.51 & 50.58 & 42.78 & 41.36 & 51.19 & 47.63 & 44.81 & \textbf{50.12}  & 40.23 & 39.03 & 49.05    \\ \bottomrule
     
        \end{tabular}
        }
        \label{ch2apptab:grid_resnet}
\end{table}

\begin{table}[htp]
    \centering
    \caption{Grid Search Results for Pretrained ResNet18 on CIFAR-10.}
    \resizebox{\textwidth}{!}{
        \begin{tabular}{c|ccc|ccc|ccc|ccc}
        \toprule
        \multirow{2.5}{*} {\emph{$\eta$=0.01}} & \multicolumn{3}{c|}{$E$=5} & \multicolumn{3}{c|}{$E$=10} & \multicolumn{3}{c|}{$E$=15} & \multicolumn{3}{c}{$E$=20}  \\ \cmidrule{2-13}         
        & BABU & AVG & FN & BABU & AVG & FN & BABU & AVG & FN & BABU & AVG & FN  \\ \hline  
        $s$=10 &  83.85  & 83.80 & 84.65 & 84.06 & 84.29 & 84.20 & \textbf{84.02} & \textbf{83.96} & 84.43  & 84.43 & 84.33 & 84.81    \\
        
        $s$=5 &  68.83  & 73.31 & 74.90 & 69.73 & 69.24 & 74.72 & \textbf{69.44} & \textbf{70.77} & 75.26  & 68.58 & 70.89 & 76.75    \\ 
        
        $s$=3 & 54.21 & 58.52 & 63.87 & 51.85 & 48.52 & 61.76 & \textbf{48.96} & \textbf{57.85} & 60.72  & 47.84 & 51.11 & 62.06    \\ 
        
        $s$=2 & 48.21  & 33.84 & 47.74 & 43.02 & 35.91 & 47.30 & \textbf{49.00} & \textbf{39.04} & 50.82  & 44.91 & 36.95 & 49.98    \\ \midrule
            
        \multirow{2.5}{*} {\emph{$\eta$=0.03}} & \multicolumn{3}{c|}{$E$=5} & \multicolumn{3}{c|}{$E$=10} & \multicolumn{3}{c|}{$E$=15} & \multicolumn{3}{c}{$E$=20}  \\ \cmidrule{2-13}         
        & BABU & AVG & FN & BABU & AVG & FN & BABU & AVG & FN & BABU & AVG & FN  \\ \hline  
        $s$=10 &  84.86  & 82.73 & 85.28 & 85.02 & 82.00 & 85.07 & 84.85 & 82.71 & 85.10  & 84.82 & 83.04 & 85.27    \\
        
        $s$=5 & 67.27  & 60.13 & 72.24 & 68.64 & 59.54 & 70.55 & 67.81 & 65.43 & 73.14  & 68.23 & 68.91 & 73.27    \\ 
        
        $s$=3 & 53.48 & 47.72 & 54.61 & 53.74 & 50.32 & 56.27 & 54.18 & 53.78 & 59.92  & 56.35 & 53.82 & 60.28    \\ 
        
        $s$=2 & 43.10  & 36.69 & 46.89 &  45.80 & 32.61 & 49.99 & 51.62 & 39.08 & 45.50 & 55.78 & 41.35 & 51.88    \\ \midrule
        
        \multirow{2.5}{*} {\emph{$\eta$=0.05}} & \multicolumn{3}{c|}{$E$=5} & \multicolumn{3}{c|}{$E$=10} & \multicolumn{3}{c|}{$E$=15} & \multicolumn{3}{c}{$E$=20}  \\ \cmidrule{2-13}         
        & BABU & AVG & FN & BABU & AVG & FN & BABU & AVG & FN & BABU & AVG & FN  \\ \hline  
        $s$=10 &  86.06  & 71.37 & 85.84 & 85.71 & 75.94 & 85.52 & 85.31 & 76.50 & 84.73  & 85.41 & 77.77 & 85.11    \\
        
        $s$=5 &  69.47  & 65.13 & 69.89 & 69.72 & 60.01 & 71.02 & 69.49 & 62.47 & 73.75  & 70.17 & 68.17 & 73.24    \\ 
        
        $s$=3 & 57.91 & 52.70 & 53.10 & 55.63 & 49.75 & 54.86 & 62.72 & 54.51 & 63.03  & 65.22 & 52.00 & 65.71    \\ 
        
        $s$=2 & 44.49 & 42.07 & 45.55 & 43.65 & 43.69 & 45.46 & 48.36 & 37.11 & 52.96  & 50.87 & 41.37 & 51.87    \\ \midrule

        \multirow{2.5}{*} {\emph{$\eta$=0.1}} & \multicolumn{3}{c|}{$E$=5} & \multicolumn{3}{c|}{$E$=10} & \multicolumn{3}{c|}{$E$=15} & \multicolumn{3}{c}{$E$=20}  \\ \cmidrule{2-13}         
        & BABU & AVG & FN & BABU & AVG & FN & BABU & AVG & FN & BABU & AVG & FN  \\ \hline  
        $s$=10 &  85.89  & 72.78 & 86.02 & 85.67 & 74.95 & 85.65 & 84.98 & 78.00 & \textbf{85.45}  & 85.25 & 78.73 & 85.20    \\
        
        $s$=5 &  68.32  & 65.71 & 74.01 & 70.17 & 64.67 & 73.20 & 75.10 & 67.00 & \textbf{73.43}  & 73.03 & 71.16 & 77.12    \\ 
        
        $s$=3 & 62.57 & 46.52 & 66.26 & 64.15 & 59.18 & 74.36 & 63.56 & 56.59 & \textbf{73.85}  & 66.52 & 59.25 & 73.86    \\ 
        
        $s$=2 & 45.93 & 38.19 & 41.82 & 45.26 & 38.53 & 50.13 & 52.22 & 38.06 & \textbf{50.94}  & 55.18 & 47.67 & 53.42    \\ \bottomrule
     
        \end{tabular}
        }
        \label{ch2apptab:grid_resnet_pretrained}
\end{table}
\clearpage
\subsubsection{SphereFed}
We conduct grid searches for SphereFed (CE) and SphereFed (MSE)~\citep{dong2022spherefed} for MobileNet on CIFAR-100. In Table ~\ref{ch2apptab:grid_sphere}, we conduct a $\eta$ search within the range $\{0.1, 0.3, 0.5, 1.0\}$ for both SphereFed (CE) and SphereFed (MSE). However, SphereFed (MSE) exhibit suboptimal performance within this range. Consequently, in Table ~\ref{ch2apptab:grid_sphere_additional}, we extend the $\eta$ search exclusively for SphereFed (MSE) to include values $\{1.5, 3.0, 4.5, 5.0\}$. In summary, the determined optimal hyperparameter $\eta$ values are 0.03 for SphereFed (CE) and 4.5 for SphereFed (MSE).

\begin{table}[htp]
    \centering
        \caption{Grid Search Results for SphereFed with MobileNet on CIFAR-100.}
        \resizebox{0.6\textwidth}{!}{
        \begin{tabular}{c|cc|cc|cc}
        \toprule
        \multirow{2.5}{*} {\emph{$\eta$=0.1}} & \multicolumn{2}{c|}{$E$=1}                      & \multicolumn{2}{c|}{$E$=5} & \multicolumn{2}{c}{$E$=10} \\ \cmidrule{2-7}         
        & CE & MSE & CE & MSE & CE & MSE  \\  
        \midrule
        $s$=100 & 12.96 & 1.37 & 44.92 & 2.16 & 39.50 & 4.25 \\ 
        $s$=50 & 12.21 & 1.30 & 43.38 & 1.73 & 38.65 & 3.84 \\ 
        $s$=10 & 6.67 & 1.46 & 33.32 & 2.34 & 30.02 & 3.81     \\ \midrule        
        \multirow{2.5}{*} {\emph{$\eta$=0.3}} & \multicolumn{2}{c|}{$E$=1}                      & \multicolumn{2}{c|}{$E$=5} & \multicolumn{2}{c}{$E$=10} \\ \cmidrule{2-7}       
        & CE & MSE & CE &  MSE & CE & MSE  \\ \hline 
        $s$=100 & 24.62 & 1.39 & \textbf{44.96} & 5.25 & 40.47 & 19.41     \\ 
        $s$=50 & 23.56 & 1.44 & \textbf{43.68} & 5.89 & 38.21 & 19.65     \\   
        $s$=10 & 14.91 & 1.40 & \textbf{40.39} & 4.46 & 33.53 & 15.69     \\ \midrule        
        \multirow{2.5}{*} {\emph{$\eta$=0.5}} & \multicolumn{2}{c|}{$E$=1}                      & \multicolumn{2}{c|}{$E$=5} & \multicolumn{2}{c}{$E$=10} \\ \cmidrule{2-7}       
        & CE & MSE & CE &  MSE & CE & MSE  \\ \hline         
        $s$=100 & 32.12 & 1.46 & 45.47 & 12.30 & 40.11 & 36.38     \\    
        $s$=50 & 28.45 & 1.43 & 44.97 & 11.76 & 39.49 & 34.07     \\   
        $s$=10 & 18.70 & 1.61 & 37.02 & 8.32 & 33.83 & 24.65     \\ \midrule
        \multirow{2}{*} {\emph{$\eta$=1.0}} & \multicolumn{2}{c|}{$E$=1}                      & \multicolumn{2}{c|}{$E$=5} & \multicolumn{2}{c}{$E$=10} \\ \cmidrule{2-7}       
        & CE & MSE & CE &  MSE & CE & MSE  \\ \midrule
        $s$=100 & 38.98 & 2.33 & 46.45 & 29.94 & 41.58 & 40.02 \\
        $s$=50 & 35.69 & 2.27 & 43.95 & 29.39 & 41.20 & 40.87     \\ 
        $s$=10 & 26.40 & 1.70 & 36.72 & 23.38 & 35.73 & 32.64     \\ \bottomrule
        \end{tabular}
        }
        \label{ch2apptab:grid_sphere}
\end{table}

\begin{table}[htp]
    \centering
    \caption{Additional Grid Search Results for SphereFed (MSE) with MobileNet on CIFAR-100.}
    \resizebox{0.3\textwidth}{!}{
        \begin{tabular}{cccc}
        \toprule
        \multirow{2.5}{*} {\emph{$\eta$=1.5}} & {$E$=1}                      & {$E$=5} & {$E$=10} \\ \cmidrule{2-4}         
        &  MSE &   MSE &  MSE  \\ \midrule

        $s$=100 & 2.84  & 44.40 & 40.32     \\ 
        
        $s$=50 & 2.81  & 41.91 & 42.18     \\  
        
        $s$=10 & 2.69  & 33.68 & 35.66     \\ \midrule
        \multirow{2.5}{*} {\emph{$\eta$=3.0}} & {$E$=1}                      & {$E$=5} & {$E$=10} \\ \cmidrule{2-4}         
        &  MSE &   MSE &  MSE  \\ \midrule

        $s$=100 & 6.37  & 47.44 & 42.15     \\ 
        
        $s$=50 & 5.05  & 45.56 & 42.61     \\  
        
        $s$=10 & 4.87  & 40.84 & 34.68     \\ \midrule

        \multirow{2.5}{*} {\emph{$\eta$=4.5}} & {$E$=1}                      & {$E$=5} & {$E$=10} \\ \cmidrule{2-4}         
        &  MSE &   MSE &  MSE  \\ \midrule        
         
        $s$=100 & 9.43  & \textbf{49.51} & 43.19     \\ 
        
        $s$=50 & 8.19  & \textbf{48.05} & 42.42     \\ 
        
        $s$=10 & 7.50  & \textbf{43.42} & 37.93     \\ \midrule

        \multirow{2.5}{*} {\emph{$\eta$=5.0}} & {$E$=1}                      & {$E$=5} & {$E$=10} \\ \cmidrule{2-4}         
        &  MSE &   MSE &  MSE  \\ \midrule         
        $s$=100 & 10.58  & 49.29 & 36.07     \\ 
        
        $s$=50 & 10.95  & 48.08 & 38.49     \\  
        
        $s$=10 & 8.26  & 43.26 & 21.73     \\ \bottomrule
        \end{tabular}
        }
        \label{ch2apptab:grid_sphere_additional}
\end{table}
\clearpage
\subsection{Additional Experiments}
\label{ch2app:add_exp}
\subsubsection{Weight Norm Does Not Matter on FedFN}
We investigate the impact of increased data heterogeneity on weight norm disparity in the case of FedFN. Focusing on the $s$=2 setting, representing the most heterogeneous scenario, as shown in Figure~\ref{ch2appfig:weight_norm_bias_fn}, we explore the evolution of weight norms between global and local models. Initially, during early training stages, a noticeable weight norm bias emerges within local models, favoring seen (ID) classes over unseen (OOD) classes. However, this bias progressively diminishes as the learning rate decreases, eventually aligning local models with the weight norm mean of the global model.

Additionally, we analyze variations in weight norm means between local models on the ID classes and the global model on the total classes under varying data heterogeneity. Specifically, we consider $s \in \{2, 5, 10\}$ and IID (Exactly class balanced data distribution across clients) on the CIFAR-10 dataset. As illustrated in Figure~\ref{ch2appfig:weight_norm_disparity_fn}, during initial training phases, the norm disparity between the global model and local models increases with data heterogeneity (i.e., IID $\rightarrow$ $s$=10 $\rightarrow$ $s$=5 $\rightarrow$ $s$=2). However, this disparity gradually diminishes across all settings as the learning rate decreases.

\begin{figure}[h!]
    \centering
    \makebox[\textwidth]{
    \includegraphics[width=0.6\textwidth]{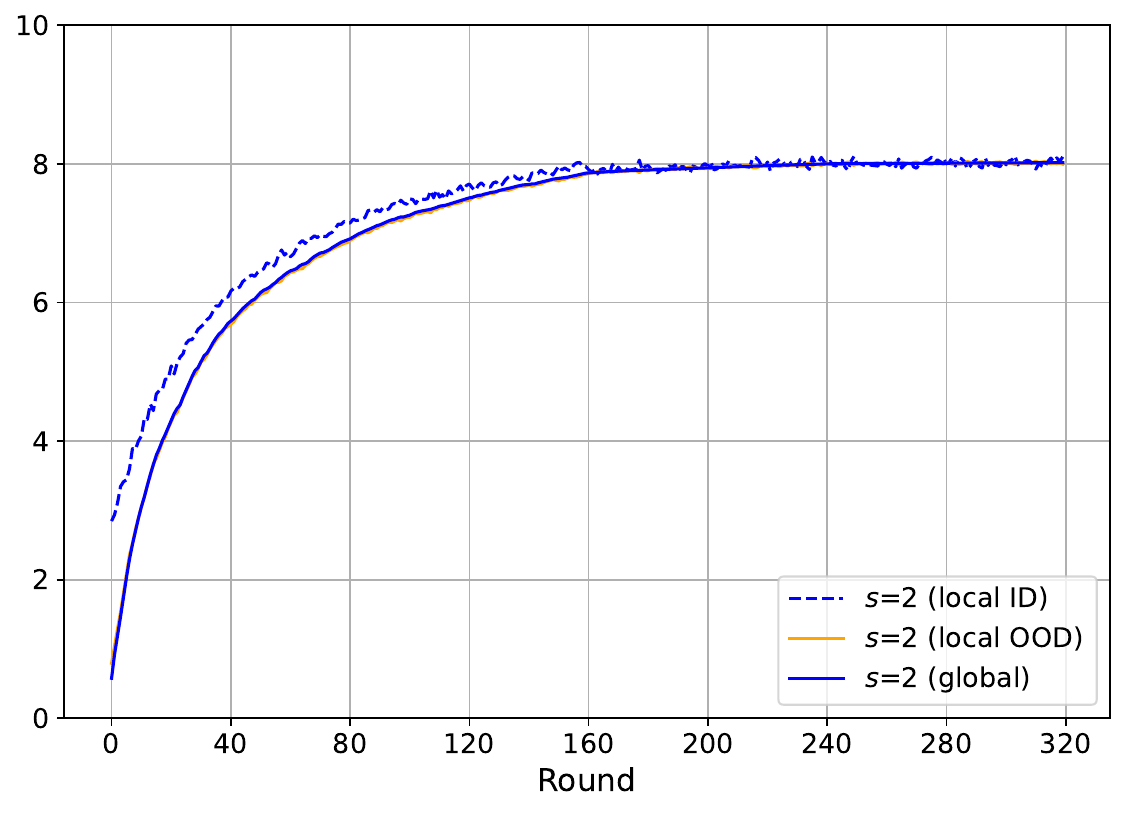}
    }
    \caption{Discrepancy in Weight Norms between Local and Global Models in the $s$=2 Setting.}
    \label{ch2appfig:weight_norm_bias_fn}
\end{figure}

\begin{figure}[h!]
    \centering
    \makebox[\textwidth]{
    \includegraphics[width=0.8\textwidth]{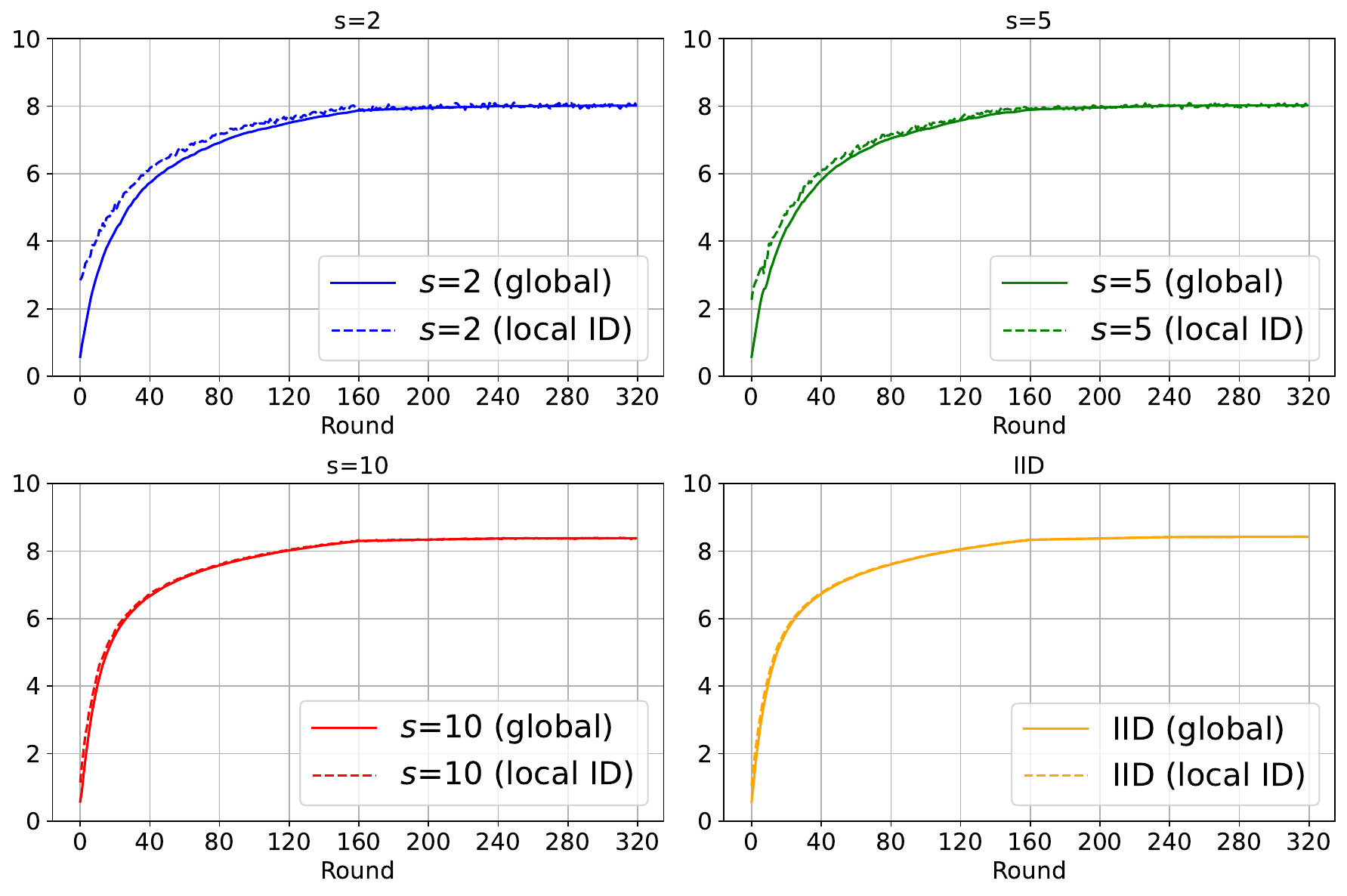}
    }
    \caption{Discrepancy in Weight Norms between Local and Global Models in Various Heterogeneity Settings.}
    \label{ch2appfig:weight_norm_disparity_fn}
\end{figure}

\newpage
\subsection{Additional GFL Results}\label{ch2app:add_gfl_result}

In the main paragraph of Section~\ref{ch2sec:exp}, we report results exclusively in a balanced environment, specifically in the sharding setting. In Table~\ref{ch2tab:gfl_acc_all}, results are presented, encompassing an unbalanced environment, specifically in the LDA setting. The reported outcomes stem from implementations with a consistent and identical seed, deviating from the context of the paragraph. In contrast to the main paragraph, we observe that in the MobileNet on CIFAR-100 environment with the $s=10$ setting, all baseline algorithms exhibit notably lower performance for the seed used in Table~\ref{ch2tab:gfl_acc_all}. However, when applying the BABU or FN module, the performance difference is less pronounced, showing a more stable outcome compared to the main paragraph. Furthermore, consistently across all settings, applying the FN module to the baseline consistently demonstrates superior performance.

\begin{table}[h!]
    \centering
    \caption{FL Accuracy Comparison for Baseline, +BABU, and +FN.}
    \small
    \resizebox{1.0\textwidth}{!}{
    \begin{tabular}{cc|cccc|cccc|ccc|cccc}
      \toprule 
      \multirow{2.5}{*}{Algorithm} & \multirow{2.5}{*}{Module} & \multicolumn{8}{c|}{VGG11 on CIFAR-10} & \multicolumn{7}{c}{MobileNet on CIFAR-100} \\ \cmidrule{3-17}
      & & $s$=2 & $s$=3 & $s$=5 & $s$=10 & $\alpha$=0.1 & $\alpha$=0.3 & $\alpha$=0.5 & $\alpha$=1.0 & $s$=10 & $s$=50 & $s$=100 & $\alpha$=0.1 & $\alpha$=0.3 & $\alpha$=0.5 & $\alpha$=1.0 \\ \midrule
      & Baseline  & 73.83 & 78.99 & 81.40 & 82.79 & 73.16 & 80.77 & 81.63 & 82.45 & 26.64 & 40.18 & 41.50 & 42.66 & 42.68 & 43.82 & 41.57  \\ 
      FedAVG  & + BABU & 74.31 & 78.88 & 81.17 & 82.37  & 72.42 & 80.59 & 80.94 & 82.93 &  43.96 & 39.94 & 40.37 & 42.99 & 39.61 & 39.53 & 39.58  \\ 
      & \textbf{+ FN} & \textbf{77.23} & \textbf{81.46} & \textbf{82.80} & \textbf{84.16}  & \textbf{75.58} & \textbf{81.85} & \textbf{83.06} & \textbf{83.65} & \textbf{45.16} & \textbf{48.83} & \textbf{49.54} & \textbf{47.87} & \textbf{47.66} & \textbf{47.86} & \textbf{47.80}  \\  \midrule
      
      & Baseline & 77.07 & (\textit{Failed}) & \textbf{84.50} & 85.28 & (\textit{Failed}) & 83.24 & 83.54 & 85.21 & 34.64 & 42.56 & 43.67 & \textbf{45.18} & 45.62 & 44.42 & 43.16  \\
      Scaffold & + BABU & 76.89 & 82.24 & 84.26 & 85.02 & (\textit{Failed}) & 82.95 & 84.12 & 85.06 & 46.80 & 42.73 & 44.50 & 43.28 & 43.76 & 44.37 & 44.20 \\ 
       & \textbf{+ FN} & \textbf{79.05} & \textbf{82.83} & \textbf{84.80} & \textbf{85.83} & \textbf{74.53} & \textbf{83.65} & \textbf{84.79} & \textbf{85.42} & \textbf{50.17} & \textbf{48.74} & \textbf{51.04} & 45.10 & \textbf{49.71} & \textbf{50.42} & \textbf{52.13}  \\  \midrule
       
      & Baseline & 73.92 & 78.86 & 81.47 & 82.61 & 73.39 & 80.94 & 81.16 & 82.64 & 26.57 & 39.93 & 42.11 & 42.71 & 42.77 & 44.54 & 41.55 \\
      FedEXP & +BABU & 74.48 & 78.91 & 81.16 & 82.17 & 71.61 & 80.50 & 81.24 & 82.72 & 45.44 & 41.49 & 40.24 & 42.73 & 40.09 & 39.54 & 40.66  \\ 
      & \textbf{+ FN} & \textbf{77.17} & \textbf{80.92} & \textbf{82.79} & \textbf{83.57} & \textbf{75.03} & \textbf{82.29} & \textbf{83.12} & \textbf{83.36} & \textbf{45.37} & \textbf{48.45} & \textbf{49.80} & \textbf{47.73} & \textbf{47.16} & \textbf{48.32} & \textbf{48.01} \\  \bottomrule 
    \end{tabular}
    }
    \label{ch2tab:gfl_acc_all}
\end{table}

\subsection{Personalized Federated Learning (PFL) Results}\label{ch2subsec:pfl_result}

We present FedFN-FT, a fine-tuned algorithm for PFL, inspired by prior work~\citep{oh2021fedbabu, dong2022spherefed}, utilizing local data.  We compare FedFN-FT with existing PFL methods, including simple local models, 1-step approaches like FedPer~\cite{arivazhagan2019federated}, Per-FedAVG~\cite{fallah2020personalized}, and FedRep~\cite{collins2021exploiting}, as well as 2-step methods such as FedAVG-FT, FedBABU-FT~\cite{oh2021fedbabu}, SphereFed-FT (CE, MSE)\cite{dong2022spherefed}. Table~\ref{ch2apptab:pfl_acc} provides detailed personalized accuracy results. The entries form of X±Y, representing the mean and standard deviation of personalized accuracies across all clients for PFL algorithms. Entries without standard deviation indicate performance on $D_{test}$, derived from the global model after the initial step of 2-step methods.

Regarding the SphereFed method, it constructs each element of the logit vector based on the cosine similarity between feature vectors and classifiers. Similar to FedFN, SphereFed necessitates rescaling the learning rate for classifier weights, leading to the need for separate learning rate tuning.  To address this requirement,  we conduct an extensive grid search to determine the appropriate initial learning rate, denoted as $\eta$. Further details about the grid search are provided in Section~\ref{ch2app:grid search}. 

For SphereFed (CE), SphereFed (MSE), and FedFN, we initialize the learning rates with $\eta$ values of 0.3, 4.5, and 0.5, respectively. The 2-step fine-tuning methods undergo a total of 5 local epochs, during which learning rate was carried out through grid search within the range of $\{\eta, 0.1\times\eta, 0.01\times\eta\}$. The comprehensive results of the grid search for SphereFed and FN, including learning rate adjustments, are confirmed to be available in Subsection~\ref{ch2app:finetuneing_lr_search}. The resulting tunned learning rates are documented in Table~\ref{ch2apptab:pfl_acc}. In summary, 2-step methods consistently outperform 1-step methods in terms of PFL performance. Among the 2-step methods, FedFN-FT consistently exhibits superior performance.

\begin{table}[htp]
    \centering
    \caption{PFL Accuracy Comparison for MobileNet on CIFAR-100.}
    \small
    \begin{tabular}{l|ccc}
    \toprule
    Algorithm & $s$=10 & $s$=50 & $s$=100 \\ \midrule
    Local only                              & 58.64{\tiny $\pm$7.21}  & 25.38{\tiny $\pm$4.12} & 18.52{\tiny $\pm$3.15} \\ \midrule
    FedPer (2019)                           & 70.92{\tiny $\pm$6.93}  & 33.73{\tiny $\pm$4.68} & 22.77{\tiny $\pm$4.30} \\
    Per-FedAVG (2020)                       & 32.57{\tiny $\pm$11.02} & 43.09{\tiny $\pm$7.36} & 45.00{\tiny $\pm$7.05} \\
    FedRep (2021)                           & 62.69{\tiny $\pm$7.26}  & 34.66{\tiny $\pm$5.34} & 26.53{\tiny $\pm$4.52} \\ \midrule
    FedAVG (2017)                           & 36.30 & 41.97   & 42.51 \\ 
    FedAVG-FT (0.1$\times\eta$)             & 77.39{\tiny $\pm$6.40}  & 50.57{\tiny $\pm$5.31} & 46.95{\tiny $\pm$4.90} \\
    FedBABU (2022)                          & 45.73 & 39.57   & 40.70  \\
    FedBABU-FT (0.01$\times\eta$)           & 79.76{\tiny $\pm$6.05}  & 50.93{\tiny $\pm$4.69} & 46.63{\tiny $\pm$5.25} \\
    SphereFed (CE) (2022)                   & 40.39 & 43.68   & 44.96 \\ 
    SphereFed-FT (CE) (0.1$\times\eta$)     & 77.24{\tiny $\pm$6.38}  & 55.08{\tiny $\pm$5.33} & 50.17{\tiny $\pm$5.04} \\ 
    SphereFed (MSE) (2022)                  & 43.42 & 48.05   & 49.51 \\ 
    SphereFed-FT (MSE) (0.01$\times\eta$)   & \textbf{82.50}{\tiny $\pm$6.04} & 56.45{\tiny $\pm$5.56} & 53.10{\tiny $\pm$4.90}   \\ \midrule
    FedFN                                   & 46.98 & 49.47   & 50.92 \\
    \textbf{FedFN-FT} (0.1$\times\eta$)     & \textbf{82.85}{\tiny $\pm$5.83} & \textbf{60.78}{\tiny $\pm$4.99} & \textbf{55.43}{\tiny $\pm$5.24} \\
    \bottomrule
    \end{tabular}
    \label{ch2apptab:pfl_acc}
\end{table}
\newpage

\subsection{Fine Tuning Learning Rate Search}\label{ch2app:finetuneing_lr_search}
\begin{table*}[h!]
  \centering  
  \caption{Search for Finetuning Learning Rates in Two-Step Methods.}
\resizebox{0.8\textwidth}{!}{%
\begin{tabular}{c|ccc}
\toprule
Algorithm &  $s$=10 & $s$=50 & $s$=100  \\ \midrule

FedAVG (2017) & 36.30    & 41.97    & 42.51        \\ 

FedAVG-FT ($\eta$) & 67.23 $\pm$ 6.32 & 35.98 $\pm$ 5.56 & 38.25 $\pm$ 5.29    \\ 

\textbf{FedAVG-FT} (0.1$\times\eta$) & \textbf{77.39 $\pm$ 6.40} & \textbf{50.57 $\pm$ 5.31} & \textbf{46.95 $\pm$ 4.90} \\

FedAVG-FT (0.01$\times\eta$) & 70.96 $\pm$ 6.81   & 44.78 $\pm$ 4.88   & 44.00 $\pm$ 4.93       \\ 
\midrule

FedBABU (2022) & 45.73 & 39.57  & 40.7  \\ 

FedBABU-FT ($\eta$) & 13.32 $\pm$ 4.51   & 4.72 $\pm$ 1.85   & 2.85 $\pm$ 1.27       \\ 

FedBABU-FT (0.1$\times\eta$) & 77.88 $\pm$ 6.59 & 51.75 $\pm$ 4.84 & 46.02 $\pm$ 5.24 \\ 

\textbf{FedBABU-FT} (0.01$\times\eta$) & \textbf{79.76 $\pm$ 6.05} & \textbf{50.93 $\pm$ 4.69} & \textbf{46.63 $\pm$ 5.25} \\ \midrule

SphereFed (CE) (2022) & 40.39  & 43.68  & 44.96  \\ 

SphereFed-FT (CE) ($\eta$) & 53.79 $\pm$ 9.47 & 32.11 $\pm$ 6.89 & 27.81 $\pm$ 4.98 \\ 

\textbf{SphereFed-FT (CE)} (0.1$\times\eta$) & \textbf{77.24 $\pm$ 6.38}   & \textbf{55.08 $\pm$ 5.33} & \textbf{50.17 $\pm$ 5.04} \\ 

SphereFed-FT (CE) (0.01$\times\eta$) & 77.18 $\pm$ 6.17 & 49.99 $\pm$ 4.79   & 47.45 $\pm$ 4.70       \\

\midrule

SphereFed (MSE) (2022) & 43.42  & 48.05  & 49.51  \\ 

SphereFed-FT (MSE) ($\eta$) & 57.34 $\pm$ 7.53 & 35.43 $\pm$ 5.32 & 27.95 $\pm$ 5.03 \\ 

SphereFed-FT (MSE) (0.1$\times\eta$) & 79.73 $\pm$ 6.63 & 56.80 $\pm$ 5.29 & 51.06 $\pm$ 5.59 \\ 

\textbf{SphereFed-FT (MSE)} (0.01$\times\eta$) & \textbf{82.50 $\pm$ 6.04} & \textbf{56.45 $\pm$ 5.56} & \textbf{53.10 $\pm$ 4.90} \\ 
\midrule
FedFN  & 46.98 & 49.47 & 50.92  \\

FedFN-FT ($\eta$) & 71.88 $\pm$ 6.73 & 43.80 $\pm$ 4.54 & 41.23 $\pm$ 5.46       \\ 

\textbf{FedFN-FT} (0.1$\times\eta$) & \textbf{82.85 $\pm$ 5.83}   & \textbf{60.78 $\pm$ 4.99} & \textbf{55.43 $\pm$ 5.24}    \\

FedFN-FT (0.01$\times\eta$) & 82.15 $\pm$ 5.74 & 54.24 $\pm$ 5.06 & 51.92 $\pm$ 5.06    \\ 
\bottomrule
   
        \end{tabular}%
    }
\label{ch2tab:2_step_fine_tune_lr_search}
\end{table*}

\newpage
\subsection{Logit Should Be Non-Restricted}\label{ch2app:modify_norm}
SphereFed~\cite{dong2022spherefed} modifies i-th index of the logit vector of an input $x$, represented as $\tilde{z}_{i}(x;\theta)=\tilde{\theta}_{cls, i}\frac{f(x;\theta_{ext})}{||f(x;\theta_{ext})||_{2}}$. It maintains the classifier $\Tilde{\theta}_{cls}$ in a frozen state, ensuring that the norms of $\Tilde{\theta}_{cls, i}$ are orthonormal to each other. As a result, $\tilde{z}_{i}(x;\theta)$ becomes the cosine similarity between $f(x;\theta_{ext})$ and $\tilde{\theta}_{cls, i}$, yielding values restricted to the range [-1,1]. Following this modification, the logit margin is constrained to a maximum value of 2.

As seen in Table~\ref{ch2apptab:pfl_acc}, despite utilizing feature normalization, SphereFed (CE) exhibits inferior performance compared to even FedBABU. To address this limitation, we propose a modification to the SphereFed (CE) logit vector. We transform it to $\tilde{z}^{\tau}_{i}(x;\theta)=\tau\,\tilde{\theta}_{cls, i}\frac{f(x;\theta_{ext})}{||f(x;\theta_{ext})||_{2}}$, which yields values in the range of [-$\tau$, $\tau$], providing less constrained outputs. We apply this approach with different values of $\tau$, specifically $\{10, 15, 20, 25, 30\}$, in the $s$=10 setting. We compared the results of this modified SphereFed (CE) with those of SphereFed (MSE), FedBABU, and FedFN, and the outcomes are presented in Table~\ref{ch2apptab:modified_spherefed}.
Increasing $\tau$ up to 15 results in improvements in SphereFed(CE), although not as significant as compared to FedFN. However, it shows enhancements over FedBABU and SphereFed(MSE) at 15. Consequently, this indicates that creating the logit vector through feature normalization with relaxed constraints on the elements of the logit is recommended.
 
\begin{table}[h!]
\centering
\caption{Comparison of Modified SphereFed (CE) with SphereFed (MSE), FedBABU, and FedFN on $s$=10 Setting of CIFAR-100.}
\resizebox{0.4\textwidth}{!}{
\begin{tabular}{c|c}
\toprule
Algorithm & Accuracy \\ \midrule
SphereFed (CE), $\tau$=1 & 40.39 \\   
SphereFed (CE), $\tau$=10  & 42.84 \\ 
SphereFed (CE), $\tau$=15  & \textbf{45.78} \\ 
SphereFed (CE), $\tau$=20  & 44.95 \\ 
SphereFed (CE), $\tau$=25  & 44.46 \\ 
SphereFed (CE), $\tau$=30  & 39.62 \\  \midrule
SphereFed (MSE)  & 43.42 \\\midrule
FedBABU  & 45.73 \\ \midrule
FedFN  & \textbf{46.98} \\
\bottomrule
\end{tabular}
}
\label{ch2apptab:modified_spherefed}
\end{table}

\newpage
\subsection{Reproduced Result from SphereFed}

We present the experimental results for SphereFed~\cite{dong2022spherefed} trained with LDA settings ($\alpha\in\{0.1, 0.5\}$) on CIFAR-100. To reproduce the experiments presented in the original paper, we deviated from our previous experimental settings. Specifically, for the MobileNetV2 model architecture, we constructed the layers exactly as described in Table 7 of \citep{dong2022spherefed}. Regarding the training setup, each case is trained for 500 rounds using cosine annealing, following the guideline of original paper. We follow the instructions of original paper for all other hyperparameters as well. It should be noted that we did not employ the FFC algorithm in any of the experiments, including those using FedAVG and FedFN.

Table~\ref{ch2apptab:spherefed_reproduce} presents the new results we obtain and compares them with the original outcomes of SphereFed, encompassing FedAVG, FedFN and centralized learning. If certain algorithms are not indicated at the original results, we represent them with a dash (``-"). For the algorithms implemented as described in the original paper, we provide specific details like the actual learning rate $\eta$. Furthermore, for each FL algorithm, we present reproduced results across a specified range of initial learning rates $\eta\in\{0.1, 0.3, 0.5, 1.0, 1.5, 3.0, 4.5, 5.0\}$. In the case of centralized learning, results are specifically provided for $\eta=0.1$.

Following this, we conclude that the results of the original paper could not be reproduced. The reported performance of FL algorithms in the actual original paper (71.85, 68.78) appears surprisingly higher than the reproduced results in centralized learning (68.27). Moreover, implementing the algorithms with the exact settings reported in the original results consistently leads to lower performance (71.85 vs 18.01, 68.78 vs 37.76). Even when implemented in accordance with the specifications of the original paper ($\eta$=0.5), SphereFed (MSE) demonstrated significantly poor performance at 18.01. Subsequently, despite a thorough investigation through grid search, the best-performing configuration obtained is 52.69, still falling below the reported performance. Furthermore, when comparing the performance at the optimal learning rate for each FL algorithm, we consistently observe that FedFN outperforms the baselines.

\begin{table}[h!]
    \centering
    \caption{Reproduced Results for SphereFed, FedAVG, and FedFN under the Same Settings, Utilizing MobileNet (as Described in \citep{dong2022spherefed}) on CIFAR-100.}

    \resizebox{\textwidth}{!}{
        \begin{tabular}{c|cccccccc|c}
        \toprule
        \emph{SphereFed (MSE)} & \emph{$\eta$=0.1} & \emph{$\eta$=0.3} & \emph{$\eta$=0.5} & \emph{$\eta$=1.0} & \emph{$\eta$=1.5} & \emph{$\eta$=3.0} & \emph{$\eta$=4.5} & \emph{$\eta$=5.0} & Original Result (\emph{$\eta$=0.5})   \\ \cmidrule{1-10}         
        $\alpha$=0.5 & 2.77 & 9.73 & 18.01 & 43.26 & 52.19 & \textbf{52.69} & 48.20 & 40.87 & 71.85 \\
        $\alpha$=0.1 & 2.88 & 9.65 & 20.19 & 41.48 & 45.41 & \textbf{46.34} & 43.62 & 40.56 & - \\ \midrule
        \emph{SphereFed (CE)} & \emph{$\eta$=0.1} & \emph{$\eta$=0.3} & \emph{$\eta$=0.5} & \emph{$\eta$=1.0} & \emph{$\eta$=1.5} & \emph{$\eta$=3.0} & \emph{$\eta$=4.5} & \emph{$\eta$=5.0} & Original Result   \\ \cmidrule{1-10}         
        $\alpha$=0.5 & 37.57 & 49.39 & 48.18 & 52.51 & \textbf{52.99} & 51.09 & 44.87 & 44.23 & - \\
        $\alpha$=0.1 & 22.37 & 40.58 & 42.85 & 40.92 & \textbf{47.72} & 42.11 & 35.30 & 33.83 & - \\ \midrule
        \emph{FedAVG} & \emph{$\eta$=0.1} & \emph{$\eta$=0.3} & \emph{$\eta$=0.5} & \emph{$\eta$=1.0} & \emph{$\eta$=1.5} & \emph{$\eta$=3.0} & \emph{$\eta$=4.5} & \emph{$\eta$=5.0} & Original Result (\emph{$\eta$=0.1})   \\ \cmidrule{1-10}         
        $\alpha$=0.5 & 37.76 & \textbf{38.82} & 23.76 & 1.04 & 1.02 & 1.03 & 1.25 & 1.01 & 68.78 \\
        $\alpha$=0.1 & 38.58 & \textbf{40.47} & 27.35 & 1.22 & 1.04 & 1.02 & 1.09 & 1.17 & - \\ \midrule
        \emph{FedFN} & \emph{$\eta$=0.1} & \emph{$\eta$=0.3} & \emph{$\eta$=0.5} & \emph{$\eta$=1.0} & \emph{$\eta$=1.5} & \emph{$\eta$=3.0} & \emph{$\eta$=4.5} & \emph{$\eta$=5.0} & Original Result   \\ \cmidrule{1-10}         
        $\alpha$=0.5 & 55.00 & \textbf{53.38} & 48.55 & 49.89 & 45.79 & 42.45 & 35.43 & 34.82 & - \\
        $\alpha$=0.1 & 46.38 & \textbf{49.16} & 46.61 & 41.69 & 42.03 & 38.80 & 30.92 & 2.89 & - \\ \midrule
        \emph{Centralized Learning} &   \multicolumn{8}{c|}{\textbf{68.27} (\emph{$\eta$=0.1})} & - \\ \bottomrule

        \end{tabular}
        }
        \label{ch2apptab:spherefed_reproduce}
\end{table}

\newpage
\subsection{FedFN vs FedFR}

We compare the performance of FedFN with Federated Averaging with Feature Norm Regularization (FedFR) introduced in Eq.~\eqref{ch2eqn:fedfr} in the main paragraph. Table~\ref{ch2apptab:cifar10_fedfr} and Table~\ref{ch2apptab:cifar100_fedfr} present the performance on CIFAR-10 and CIFAR-100 with $s=10$ setting, respectively.

Table~\ref{ch2apptab:cifar10_fedfr} reports results of FedFR on CIFAR-10, referring to the optimal hyperparameter $\mu=0.005$ from Figure~\ref{ch2fig:fedfr-result} in the main paragraph. FedFR exhibits slightly lower performance compared to FedFN but demonstrates superiority over FedAVG and FedBABU. In the $s=10$ setting of CIFAR-100, FedFR shows comparable or superior performance to FedAVG. However, FedFR performs worse than both FedBABU and FedFN across all hyperparameter candidates. In contrast to FedFR, FedFN consistently demonstrates superior performance across all settings.

\begin{table}[htp]
    \centering
    \vspace{-0.15 in}
    \caption{Accuracy Comparison on CIFAR-10.}
    \small
    \begin{tabular}{c|cccc}
    \toprule
    \multirow{2.5}{*}{Algorithm} & \multicolumn{4}{c}{VGG11 on CIFAR-10}\\ \cmidrule{2-5} 
                                 & $s$=2 & $s$=3 & $s$=5 & $s$=10 \\ \midrule
    FedAVG   & 74.24 & 77.29 & 81.08 & 81.97  \\ 
    FedBABU   & 75.05 & 77.73 & 81.04 & 82.16  \\     
    FedFR  & 76.14 & 77.89 & 81.61 & 82.18 \\ 
    FedFN    & \textbf{77.77} & \textbf{78.93} & \textbf{82.43} & \textbf{83.80} \\ \bottomrule                            
    \end{tabular}    \label{ch2apptab:cifar10_fedfr}
\end{table}

\begin{table}[h!]
    \centering
    \caption{Accuracy Comparison on the $s=10$ Setting of MobileNet on CIFAR-100.}
    \resizebox{\textwidth}{!}{
        \begin{tabular}{c|cccccccc}
        \toprule
        \multirow{2.5}{*}{FedFR} & \emph{$\mu$=0.5} & \emph{$\mu$=0.1} &\emph{$\mu$=0.05} & \emph{$\mu$=0.01} & \emph{$\mu$=0.005} & \emph{$\mu$=0.001} & \emph{$\mu$=0.0005} & \emph{$\mu$=0.0001}\\ \cmidrule{2-9}         
          & \emph{(Failed)} & 37.74 & \textbf{39.50} & 36.72 & 36.51 & 36.77 & 37.04 & 37.30\\ \midrule
        FedAVG &   \multicolumn{8}{c}{\textbf{36.30} (\emph{$\mu$=0.0})} \\ \midrule
        FedBABU &   \multicolumn{8}{c}{\textbf{45.73}} \\ \midrule
        FedFN &   \multicolumn{8}{c}{\textbf{46.98}} \\ \bottomrule
        
        \end{tabular}
        }
        \label{ch2apptab:cifar100_fedfr}
\end{table}
\newpage
\subsection{FN in the Centralized Learning}
\begin{figure}[h!]
    \centering    
    \includegraphics[width=1.0\textwidth]{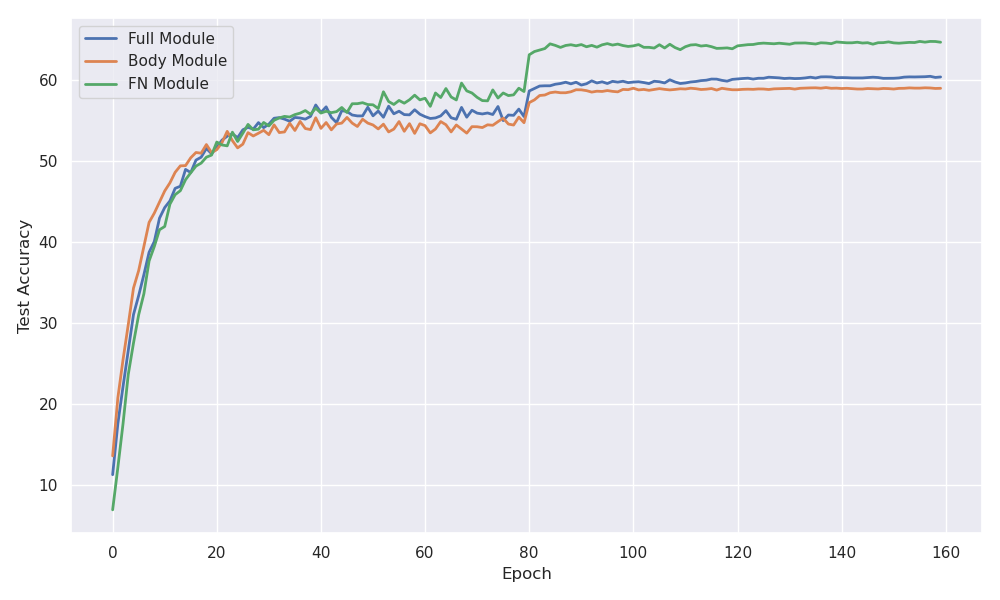}
    \caption{Module Comparison in Centralized Learning}
    \label{ch2fig:centralized_acc_cifar100}
\end{figure}
In FedBABU~\cite{oh2021fedbabu}, the authors comprehensively evaluate the performance of the Full and Body modules in a centralized learning setup. Expanding upon their analysis, we incorporate the FN module to assess its performance in comparison to that of the Full and Body modules. To ensure fair comparisons, we replicate the experimental settings outlined in \cite{oh2021fedbabu}. The experiments are carried out on the CIFAR-100 dataset, utilizing the MobileNet architecture. The results of the conducted experiments are depicted in Figure~\ref{ch2fig:centralized_acc_cifar100}.

As expected, our evaluation reveals a slight decline in performance within the Body module compared to the Full module, aligning with the findings reported in \cite{oh2021fedbabu}. This performance difference can be attributed to the partial update constraint imposed on the model during the training of the Body module. In contrast, the FN module, despite incorporating the constraint of feature normalization, exhibits significant improvements over the Full module. This enhancement is attributed to the capacity of FN module to empower the model to acquire more effective and discriminative representations, thereby enhancing overall performance.

\begin{table}[t]
    \centering
    \begin{tabular}{lrrr|rrr}
        \toprule
        Algorithm & shard 10 & shard 50 & shard 100 & lda 0.1 & lda 0.5 & lda 1.0 \\
        \midrule
        Local only        & 58.64 & 25.38 & 18.52 & 43.63 & 21.03 & 14.88 \\
        \midrule
        FedPer (2019)     & 70.92 & 33.73 & 22.77 & 55.93 & 27.32 & 18.93 \\
        Per-FedAVG (2020) & 32.57 & 43.09 & 45.00 & 35.12 & 42.95 & 42.82 \\
        FedRep (2021)     & 62.69 & 34.66 & 26.53 & 48.52 & 30.94 & 23.93 \\
        \midrule
        FedAVG (2017)     & 36.63 & 40.51 & 43.18 & 33.70 & 39.91 & 41.36 \\
        FedAVG-FT (2017)  & 70.39 & 43.54 & 44.08 & 55.06 & 45.07 & 43.66 \\
        FedBABU (2022)    & 44.92 & 40.70 & 39.81 & 38.71 & 39.87 & 37.55 \\
        FedBABU-FT (2022) & 79.11 & 51.36 & 45.12 & 69.05 & 51.14 & 43.39 \\
        FedFN (Ours)      & 47.23 & 49.72 & 51.33 & 41.30 & 45.73 & 44.00 \\
        \textbf{FedFN-FT} & \textbf{82.02} & \textbf{54.72} & \textbf{52.99} &
                           \textbf{69.50} & \textbf{51.76} & \textbf{46.85} \\
        FedDr+ (Ours)      & 48.69 & 51.49 & 53.23 & 45.83 & 47.85 & 47.35 \\
        \textbf{FedDr+FT} & \textbf{84.10} & \textbf{61.34} & \textbf{56.76} &
                           \textbf{74.79} & \textbf{56.01} & \textbf{51.62} \\

        \bottomrule
    \end{tabular}
\end{table}

\newpage

\section{Conclusion}\label{ch2sec: discussion}

In this study, we observe that increasing data heterogeneity leads to larger feature norm disparities between global and local models, which are influenced by feature norm bias within local models. 
We address this issue by introducing feature normalization techniques. Extensive experiments across various FL confirm the superior performance of feature normalization, emphasizing its role in enhancing the feature representations. Our experiments showcase the exceptional performance of FedFN, extending its effectiveness to pretrained ResNet18 models and confirming its applicability to foundational models.

\chapter{FedDr+: Stabilizing Dot-regression with Global Feature Distillation for Federated Learning}

\begin{tcolorbox}[
    colback=gray!10, 
    colframe=black, 
    width=0.9\textwidth, 
    boxrule=1pt, 
    arc=4pt, 
    left=5pt, 
    right=5pt, 
    center
]

Federated Learning (FL) has emerged as a pivotal framework for the development of effective global models (global FL) or personalized models (personalized FL) across clients with heterogeneous, non-iid data distribution. A key challenge in FL is client drift, where data heterogeneity impedes the aggregation of scattered knowledge. Recent studies have tackled the client drift issue by identifying significant divergence in the last linear (classifier) layer. To mitigate this divergence, strategies such as freezing the classifier weights and aligning the feature extractor accordingly have proven effective. 
Although the local alignment between classifier and feature extractor has been studied as a crucial factor in FL, we observe that it may lead the model to overemphasize the observed classes and underestimate the unobserved classes within each client. 
Therefore, our goals are twofold: (1) \emph{improving local alignment} and (2) \emph{maintaining the representation of unseen class samples}, ensuring that the solution seamlessly incorporates knowledge from individual clients, thus enhancing performance in both global and personalized FL.
To achieve this, we introduce a novel algorithm named \alg, which empowers local model alignment using dot-regression loss. \alg freezes the classifier as a simplex ETF to align the features and improves aggregated global models by employing a feature distillation mechanism to retain information about unseen/missing classes. Our empirical results demonstrate that \alg not only outperforms methods with a frozen classifier but also surpasses other state-of-the-art approaches, ensuring robust performance across diverse data distributions.

\end{tcolorbox}

\section{Introduction} \label{ch3sec:intro}

Federated Learning (FL)~\citep{mcmahan2017communication, he2020fedml} is a distributed learning strategy that enables multiple clients to collaboratively train a model while preserving data privacy. The foundational method, FedAvg~\citep{mcmahan2017communication}, involves distributing a global model, training local models on each client's private data, and aggregating these models without transmitting raw data. However, a major challenge in FL is data heterogeneity, or \emph{non-iidness}, where differing data distributions across clients lead to \emph{client drift}, hindering the convergence and effectiveness of the global model.
Addressing this challenge involves improving two key aspects: \emph{local alignment} and \emph{global knowledge preservation}. Local alignment refers to the cosine similarity between the features extracted by the local model and the classifier’s true class vectors, computed on the client’s training data, aiming to maximize alignment for improved local training. Global knowledge preservation aims to retain the global model’s knowledge of rare or unobserved classes in the client’s training data, preventing forgetting during local updates.

While both local alignment and global knowledge preservation are essential, they have generally been studied separately. Global knowledge preservation is crucial because it prevents the model from becoming overly biased toward the data of individual clients, ensuring decisions are based on a broader, shared understanding. This approach enables better generalization across all clients, particularly for unseen classes~\citep{lee2022preservation, lee2024fedsol}. The challenge of balancing global and local knowledge in FL resembles Catastrophic Forgetting in Continual Learning (CL)~\citep{mccloskey1989catastrophic}, where learning new tasks can cause models to forget previously learned ones. To achieve this, strategies like FedProx~\citep{MLSYS2020_1f5fe839}, MOON~\citep{li2021model}, and FedNTD~\citep{lee2022preservation} integrate global model regularization during local training. These methods align local models with the global objective through techniques like proximal terms, contrastive learning, and logit-based regularizers.

\begin{figure}
  \centering
  \small
  \includegraphics[width=0.73\linewidth]{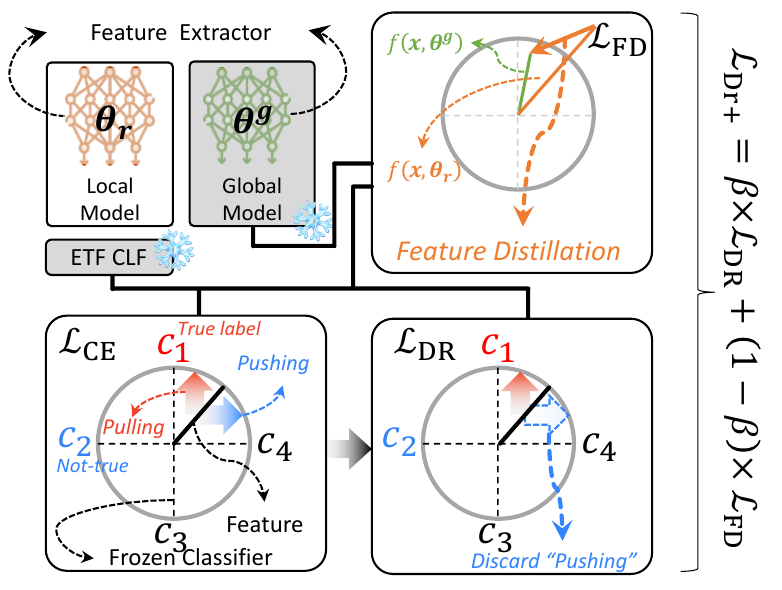}
  \caption{Overview of the proposed method, \alg trained with $\mc{L}_{\text{Dr+}}$. To enhance the local alignment, we employ dot-regression loss $\mc{L}_{\text{DR}}$, which discards the pushing term of cross-entropy loss, and propose a feature distillation $\mc{L}_{\text{FD}}$ to preserve the knowledge imbued in the global model. }
  \label{ch3fig:overview}
  \vspace{-16pt}
\end{figure}

On the other direction, to improve the local alignment, recent studies have extensively focused on freezing the last linear layer (called classifier) while updating only the feature extractor. This approach is motivated by the fact that the classifier is most sensitive to data heterogeneity~\citep{luo2021no, li2023no, fan2023federated}; freezing it ensures that all local models align their features to a consistent classifier across clients. For example, FedBABU~\citep{oh2021fedbabu} fixes the classifier after initialization, allowing only the feature extractor to adapt. Other methods~\citep{dong2022spherefed, li2023no, fan2023federated, huang2023neural, xiao2024fedloge} further enhance local alignment by modifying the loss function or by using robust initialization techniques, such as the Equiangular Tight Frame (ETF) classifier.

A frozen classifier is also extensively explored in other research areas, such as class imbalance~\citep{yang2022inducing} and class incremental learning~\citep{yang2023neural}, with a consistent objective similar to aforementioned FL studies---enhancing alignment. Recently, these fields have advanced by introducing and utilizing a novel type of loss, called dot-regression loss $\mc{L}_\text{DR}$, which aims to achieve alignment rapidly. In summary, $\mc{L}_\text{DR}$ originates from the decomposition analysis of cross-entropy (CE) loss, which includes \emph{pulling} and \emph{pushing}. As suggested in~\citep{yang2022inducing}, the \emph{pulling} component is a force that {attracts} features to the target class, whereas the \emph{pushing} component is a force that drives features {away from} other non-target classes.
$\mc{L}_\text{DR}$ discards the \emph{pushing} component, as it slows down convergence to the desired alignment (refer to Figure~\ref{ch3fig:overview}).

However, our findings indicate that while dot-regression loss enhances \emph{local alignment} as intended, it does not lead to sufficient performance improvement of the aggregated server-side model. The main issue arises from the insufficient \emph{global knowledge preservation} of unobserved classes during local training.  Specifically, the focus on improving alignment for classes present in the local training dataset induces significant feature dynamics, which inadvertently disrupt the representation of features associated with unobserved classes. This disruption leads to forgetting and deteriorated alignment for these classes, highlighting the need for better preservation of global knowledge during local training.

We emphasize the importance of addressing both local alignment and global knowledge preservation together. \alg provides a well-generalized global model by combining dot-regression loss with feature distillation, reducing the distance between feature vectors of local and global models. \textbf{{\alg}\,FT} extends this by fine-tuning the \alg global model using the same \alg loss function, enhancing local alignment for client-specific data. Starting with a well-trained global model is essential for achieving effective personalization while maintaining global generalization~\citep{nguyen2022begin,chen2022importance}.

\vspace{1pt}
\myparagraph{Contributions.} Our main contributions are summarized as follows:
\begin{itemize}[leftmargin=15pt]
    \item In high-heterogeneity FL settings, we observe a trade-off in the classifier-freezing setup: dot-regression loss improves local alignment with observed classes but leads to lower global model performance compared to CE loss, due to a significant loss of information on unseen classes, which is critical for the global model.
    \item To address this, we propose \alg, which preserves global knowledge through feature distillation while maintaining the advantages of dot-regression loss for local alignment. This contribution focuses on improving global federated learning (GFL).
    \item We extend \alg to personalized federated learning (PFL) via \textbf{{\alg}\,FT}, which fine-tunes the \alg global model using the \alg loss function for client-specific data. This highlights the importance of starting with a well-generalized global model for personalization.
    \item We demonstrate the superiority of our method across various datasets and non-iid settings.
\end{itemize}

\section{Related Work}
\label{ch3sec:related}


\myparagraph{Federated learning.}
Federated Learning (FL) is a decentralized approach to deep learning where multiple clients collaboratively train a global model using their own datasets~\citep{mcmahan2017communication, MLSYS2020_1f5fe839}. This approach faces challenges due to data heterogeneity across clients, causing instability in the learning process~\citep{karimireddy2020scaffold, luo2021no}. To address this problem, strategies like classifier variance reduction in FedPVR~\citep{li2022effectiveness} and virtual features in CCVR~\citep{luo2021no} have been proposed. Additionally, it is essential to distinguish between Global Federated Learning (GFL) and Personalized Federated Learning (PFL), as these are crucial concepts in FL. GFL aims to improve a single global model's performance across clients by addressing data heterogeneity through methods like client drift mitigation~\citep{MLSYS2020_1f5fe839, karimireddy2020scaffold, jhunjhunwala2023fedexp}, enhanced aggregation schemes~\citep{wang2020federated, wang2020tackling}, and data sharing techniques using public or synthesized datasets~\citep{lin2020ensemble, luo2021no}. Otherwise, PFL focuses on creating personalized models for individual clients by decoupling feature extractors and classifiers for unique updates~\citep{oh2021fedbabu, arivazhagan2019federated, collins2021exploiting}, modifying local loss functions~\citep{fallah2020personalized, li2021ditto}, and using prototype communication techniques~\citep{tan2022fedproto, xu2023personalized}.

\myparagraph{Frozen classifier in FL.}
By focusing on alignment, previous studies have attempted to mitigate data heterogeneity by freezing the classifier \citep{oh2021fedbabu, dong2022spherefed, li2023no}. Nevertheless, these methods have yet to effectively improve the alignment between features and their corresponding classifier weights. Motivated by this, we integrated the dot-regression method into FL to achieve a better-aligned local model by freezing the classifier. Dot-regression, proposed to address class imbalance, focuses on aligning feature vectors to a fixed classifier, demonstrating superior alignment performance compared to previous approaches. However, optimizing the dot-regression loss to align feature vectors with a fixed classifier caused the local model to lose information on unobserved classes, thereby degrading global model performance. To address these issues, FedLoGe~\citep{xiao2024fedloge} employing realignment techniques to ensure the well-aligned local model's performance translated to the global model. Additionally, in FedGELA~\citep{fan2023federated}, the classifier is globally fixed as a simplex ETF while being locally adapted to personal distributions. Also, FedPAC~\cite{xu2023personalized} addressed these challenges by leveraging global semantic knowledge for explicit local-global feature alignment. Besides alignment-focused methods, there have been various attempts to maintain good local model performance in the global model~\cite{jiang2023heterogeneous, an2024federated, chen2021bridging}. 

\myparagraph{Knowledge distillation in FL.}
Knowledge distillation (KD) has been widely studied in FL settings, such as in FedMD~\citep{li2019fedmd} and FedDF~\citep{lin2020ensemble}, where a pretrained teacher model transfers knowledge to a student model. Additional distillation-based methods, such as FedFed~\citep{yang2024fedfed} and co-distillation framework for PFL~\citep{chen2024spectral, cho2023communication}, have also been explored. In contrast to existing methods, we propose a loss function incorporating feature distillation to maintain the performance of both local and global models. To our knowledge, this is the first application of feature distillation in FL. This approach highlights the importance of distinguishing between GFL and PFL.

\section{Preliminaries}
\label{ch3sec:pre}

In this section, we describe the basic settings of Federated Learning (FL), including the \code{FedAvg} pipeline~\citep{mcmahan2017communication} and the dot-regression loss~\citep{yang2022inducing}, both of which are utilized in our framework. For clarity, we summarize the main notations in Table~\ref{ch3tab:main_notation_summary}.

\begin{table}[htp]
\centering
\caption{Main used throughout the paper.}
\label{ch3tab:main_notation_summary}
\resizebox{0.75\textwidth}{!}{
\begin{tabular}{ll}
\toprule
\textbf{Indices} & \\ 
$c \in [C]$  & Index for a class \\
$r \in [R]$ & Index for FL round \\
$i \in [N]$ & Index for a client \\ 
\midrule

\textbf{Dataset} & \\
$D_{\text{train}}^{i}$ & Training dataset for client $i$ \\
$D_{\text{test}}^{i}$ & Test dataset for client $i$ \\
$(x, y) \in D_{\text{train,test}}^{i}\,; (x, y) \sim \mc{D}^{i}$ & Data on client $i$ sampled from distribution $\mc{D}^{i}$ \\
 & ($x$: input data, $y$: class label) \\
$\mc{O}^i$ & Dataset consists of observed classes in client $i$ \\
$\mc{U}^i$ & Dataset consists of unobserved classes in client $i$ \\

\midrule
\textbf{Parameters} & \\ 
$\bm{\theta}$ & Feature extractor weight parameters \\
$\bm{V} = [v_1, \ldots, v_C] \in \mathbb{R}^{C \times d}$ & Classifier weight parameters (frozen during training) \\
$v_c, c \in [C]$ & $c$-th row vector of $\bm{V}$ \\
$\bm{\Theta} = (\bm{\theta}, \bm{V})$ & All model parameters \\
$\bm{\Theta}_{r}^{g} = (\bm{\theta}_{r}^{g}, \bm{V})$ & Aggregated global model parameters at round $r$ \\
$\bm{\Theta}_{r}^{i}= (\bm{\theta}_{r}^{i}, \bm{V})$ & Trained model parameters on client $i$ at round $r$ \\

\midrule
\textbf{Model Forward} & \\
$p(x; \bm{\theta}) \in \mathbb{R}^{C}$ & Softmax probability of input $x$ \\ 
$p_c(x; \bm{\theta}), c \in [C]$ & $c$-th element of $p(x; \bm{\theta})$ \\ 
$\mathcal{L}_{\text{CE}}(x; \theta) = -\log p_{y}(x; \bm{\theta})$ & Cross-entropy loss of input $x$ \\
$f(x; \bm{\theta}) \in \mathbb{R}^{d}$ & Feature vector of input $x$ \\ 
\bottomrule
\end{tabular}
}
\end{table}

\subsection{Basic Setup of Conventional FedAvg Pipeline}
\label{ch3subsec:fl}

\myparagraph{Basic FL setup.}
Let $[N] = \{1, \ldots, N\}$ denote the indices of clients, each with a unique training dataset $D_{\text{train}}^{i} = \{(x_m, y_m)\}_{m=1}^{|D_{\text{train}}^{i}|}$, where $(x_m, y_m) \sim \mc{D}^{i}$ for the $i^{\text{th}}$ client, $x_m$ is the input data, and $y_m \in [C]$ is the corresponding label among $C$ classes. Importantly, FL studies predominantly address the scenario where the data distributions are heterogeneous, \ie $\mc{D}^{i}$ varies across clients. Knowledge distributed among clients is collected over $R$ communication rounds.
The general objective of FL is to train a model fit to the aggregated knowledge, $\bigcup_{i \in [N]} \mc{D}^i$. This objective can be seen as solving the optimization problem:
\begin{equation*}
\label{ch3eq:flgoal}
\min_{\bm{\Theta} = (\bm{\theta}, \bm{V})} \sum_{i \in [N]} \frac{|D_{\text{train}}^{i}|}{\sum_{j \in [N]} |D_{\text{train}}^{j}|} \, \scalebox{1.45}{$\displaystyle\mathop{\mathbb{E}}$}_{(x, y) \sim \mc{D}^i} \Big[\mc{L}(x, y; \bm{\theta}, \bm{V})\Big]\,,
\end{equation*}
where $\mc{L}$ is the instance-wise loss function, $\bm{\theta}$ is the weight parameter for the feature extractor, and $\bm{V} = [v_1, \ldots, v_C]\in \mathbb{R}^{d \times C}$ is the classifier weight matrix. We use the notation $\bm{\Theta}$ to denote the entire set of model parameters.

At the beginning of each round $r \in [R]$, the server has access to only a subset of clients $\mc{S}_{r} \subset [N]$ participating in the $r^\text{th}$ round. At each round $r$, the server transmits the global model parameters $\bm{\Theta}_{r-1}^g$ to the participating clients. Each client then updates the parameters with their private data $D_{\text{train}}^{i}$ and uploads $\bm{\Theta}_{r}^i$ to the global server. By incorporating the locally trained weights, the server then updates the global model parameters to $\bm{\Theta}_{r}^g$.

\myparagraph{{FedAvg} pipeline.}
Our study follows the conventional {FedAvg}~\citep{mcmahan2017communication} framework to address the FL problem. {FedAvg} updates the global model parameters from locally trained parameters by aggregating these local models into $\bm{\Theta}_{r}^{g} = \sum_{i \in S_r} w_r^i \bm{\Theta}_{r}^{i}$, where $w_r^i = |D_{\text{train}}^{i}| \,/\, {\sum_{j \in S_{r}} |D_{\text{train}}^{j}|}$ is the importance weight of the $i^{\text{th}}$ client. 


\subsection{Dot-Regression Loss for Feature Alignment}
\myparagraph{Dot-regression loss $\mc{L}_{\text{DR}}$.}
This loss~\citep{yang2022inducing}
facilitates a faster alignment of feature vectors (penultimate layer
outputs) $f(x;\bm{\theta})\in \mathbb{R}^{d}$ to the true class direction of $v_y$, reducing the cosine angle as follows:
\begin{equation*}
    \mc{L}_{\text{DR}} (x, y; \bm{\theta}, \bm{V}) = \frac{1}{2}\Big(\cos\big(f(x; \bm{\theta}), v_y\big) - 1\Big)^2
\end{equation*}
where $\cos(\mathrm{vec}_1, \mathrm{vec}_2)$ denotes the cosine of the angle between two vectors  $\angle (\mathrm{vec}_1,\mathrm{vec}_2)$.

The main motivation is that the gradient of the cross-entropy (CE) loss for the feature vector can be decomposed into a \emph{pulling} and \emph{pushing} gradient, and recent work indicates that we can achieve better convergence by removing the pushing effect~\citep{yang2022inducing, li2021fedrs}. The \emph{pulling} gradient aligns $f(x; \bm{\theta})$ with $v_y$, while the \emph{pushing} gradient ensures $f(x; \bm{\theta})$ does not align with $v_c$ for all $c \neq y$ (Subsection~\ref{ch3app:prelim} details the exact form of pulling and pushing gradients).
Since $\mc{L}_{\text{DR}}$ directly attracts features to the true-class classifier, it drops the \emph{pushing} gradient, thereby increasing the convergence speed for maximizing $\cos(f(x; \bm{\theta}), v_y)$. 

\myparagraph{Frozen ETF classifier.}
Since $\mc{L}_{\text{DR}}$ focuses on aligning feature vectors with the true-class classifier, the classifier is not required to be trained. Instead, we construct the classifier to satisfy the simplex Equiangular Tight Frame (ETF) condition, a constructive way to achieve maximum angular separation between class vectors~\citep{yang2022inducing, yang2023neural}. Concretely, we initialize the classifier weight $\bm{V}$ as follows and freeze it throughout training: 
\begin{equation*}
\label{ch3eq:etf}
\bm{V} \longleftarrow \sqrt{\frac{C}{C-1}} \bm{U} \left( \bm{I}_C - \frac{1}{C} \bm{1}_C \bm{1}_C^\top \right),
\end{equation*}
where $\bm{U} \in \mathbb{R}^{d \times C}$ is a randomly initialized orthogonal matrix. Note that each $v_i$ in the classifier weight $\bm{V}$ satisfies $\cos(v_i, v_j) = -\frac{1}{C-1}$ for all $i \neq j \in [C]$\footnote{This relation for cosines holds if the $v_i$'s are symmetrically distributed such that $\bar{v} = \frac{1}{C} \sum_{i \in [C]} v_i = 0$, and $\cos(v_i, v_j)$ are all the same for $i \neq j$ .}.
\section{When Dot-Regression Loss Meets FL}
\label{ch3sec:prob}

Given our focus on applying $\mathcal{L}_{\text{DR}}$ to FL, we first examine its impact on FL models compared to the CE loss $\mathcal{L}_{\text{CE}}$. In summary, we find that while $\mathcal{L}_{\text{DR}}$ improves alignment-related performance on \textcolor{red}{observed} class labels, it faces challenge with \textcolor{blue}{unobserved} classes\footnote{While we use the term ``unobserved'' in this context, it also applies to ``rarely'' existing classes.}, which are essential for the generalization objective. To address this issue, we propose \alg, which integrates $\mathcal{L}_{\text{DR}}$ with a novel feature distillation loss. We then evaluate \alg by analyzing the effect of feature distillation and compare it with various FL algorithms and regularizers.

\myparagraph{Experimental configuration.} 
In this section, we conduct experiments on CIFAR-100~\citep{krizhevsky2009cifar} with a shard non-iid setting ($s$=10), where each client contains at most 10 classes. We additionally employ LDA setting ($\alpha$=0.1) in Section~\ref{ch3subsec:synergy_effect}. Refer to Section~\ref{ch3sec:exp} for more details on the dataset configuration. The model is trained for 320 communication rounds, randomly selecting 10\% of clients in each round, and the learning rate is decayed at $160^\text{th}$ and $240^\text{th}$ rounds. The experimental configuration for this section is detailed in subsection~\ref{ch3subsec:exp_setup}.

\subsection{Impact of $\mc{L}_{\text{DR}}$ on Local and Global Models}
\label{ch3subsec:dotreg}
 
We analyze the average performance of local and global models trained with $\mathcal{L}_{\text{DR}}$ compared to $\mathcal{L}_{\text{CE}}$, focusing on their ability to generalize. In Figure~\ref{ch3fig:alignment_gap}--\ref{ch3fig:problem_local_alignment}, we evaluate statistics on two datasets: the {\textcolor{red}{observed} class set} $\mathcal{O}^i$, containing classes in each client's training data $D_\text{train}^i$, and the {\textcolor{blue}{unobserved} class set} $\mathcal{U}^i$, representing unseen classes. Separately, Figure~\ref{ch3fig:problem_global_acc} reports the evaluation on all classes.

First, we examine the amount of change from the given global model to each local model in every communication round (Figure~\ref{ch3fig:alignment_gap}). The alignment gap is denoted by $\cos(f(x;\bm{\theta}_r^i), v_y) - \cos(f(x;\bm{\theta}_{r-1}^g),v_y)$. We then evaluate the feature-classifier alignment $\cos(f(x; \bm{\theta}_r^i), v_y)$ of each local model on the test data (Figure~\ref{ch3fig:problem_local_alignment}). Finally, we observe the test accuracy of the global model $\bm{\theta}_{r}^g$ (Figure~\ref{ch3fig:problem_global_acc}).


\begin{figure}[t!]
    \centering
        \begin{minipage}{\textwidth}
            \centering    
            \begin{subfigure}[b]{0.49\textwidth}
                \centering
                \includegraphics[width=\textwidth]{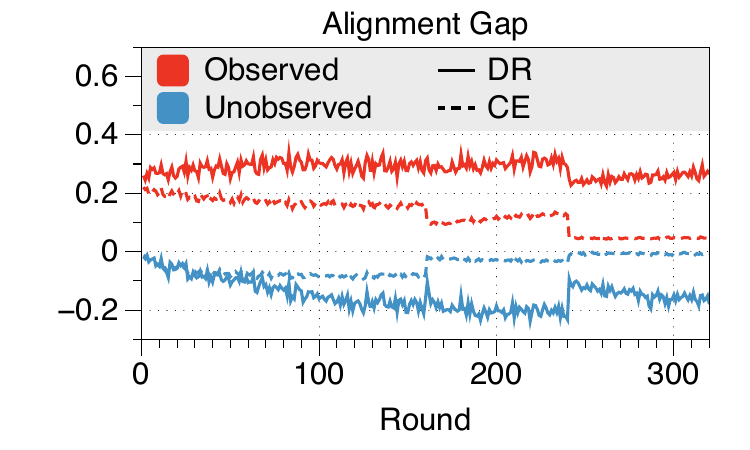}
                \vspace*{-15pt}
                \subcaption{Alignment gap}
                \label{ch3fig:alignment_gap}
            \end{subfigure}
            \hfill
            \begin{subfigure}[b]{0.49\textwidth}
                \centering
                    \includegraphics[width=\textwidth]{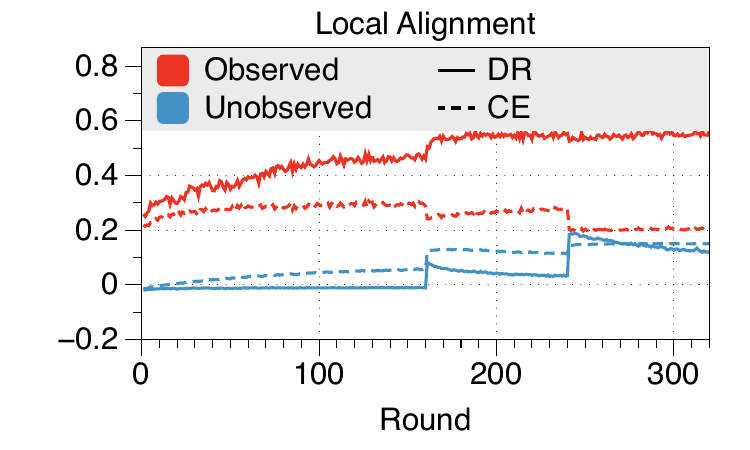}  
                \vspace*{-15pt}
                \subcaption{Local Alignment}
                \label{ch3fig:problem_local_alignment}
            \end{subfigure}
            \hspace{5pt}
            \vspace{-15pt}
            \caption{Comparison of (a) feature-classifier alignment gap and (b) feature-classifier alignment on the \textcolor{red}{observed} and \textcolor{blue}{unobserved} classes test data for $\bm{\theta}_r^i$ trained with $\mathcal{L}_\text{CE}$ and $\mathcal{L}_\text{DR}$.}
            \vspace*{-10pt}
            \label{ch3fig:local_align}
            \hspace{5pt}
        \end{minipage}
        \hspace{5pt}         
\end{figure}

\myparagraph{Alignment analysis of local models.} 
As shown in Figure~\ref{ch3fig:alignment_gap}--\ref{ch3fig:problem_local_alignment}, 
$\mathcal{L}_\text{DR}$ outperforms $\mathcal{L}_\text{CE}$ on \textcolor{red}{observed} classes in terms of alignment gap and alignment, while $\mathcal{L}_\text{CE}$ achieves better results on \textcolor{blue}{unobserved} classes for these metrics. The improvement on \textcolor{red}{observed} classes is attributed to $\mathcal{L}_\text{DR}$, which removes the pushing term present in $\mathcal{L}_\text{CE}$ and concentrates its pulling effects on these classes. However, this design inherently overlooks \textcolor{blue}{unobserved} classes, resulting in poorer performance on these classes compared to $\mathcal{L}_\text{CE}$.\footnote{A theoretical justification for why $\mathcal{L}_\text{DR}$ struggles with \textcolor{blue}{unobserved} classes is provided in Subsection~\ref{ch3app:ntk}, where we analyze feature gradients under the NTK framework.}


\myparagraph{Accuracy result of global models.}
Figure~\ref{ch3fig:problem_global_acc} shows that in the shard setting ($s=10$), $\mathcal{L}_\text{CE}$ consistently outperforms $\mathcal{L}_\text{DR}$. This difference is due to the higher proportion of \textcolor{blue}{unobserved} in this setting, where each client has access to at most 10 out of 100 classes. 

\begin{wrapfigure}[13]{r}{0.45\textwidth}
    \begin{minipage}{0.45\textwidth}
        \centering
        \vspace{-2pt}
        \includegraphics[width=\textwidth]{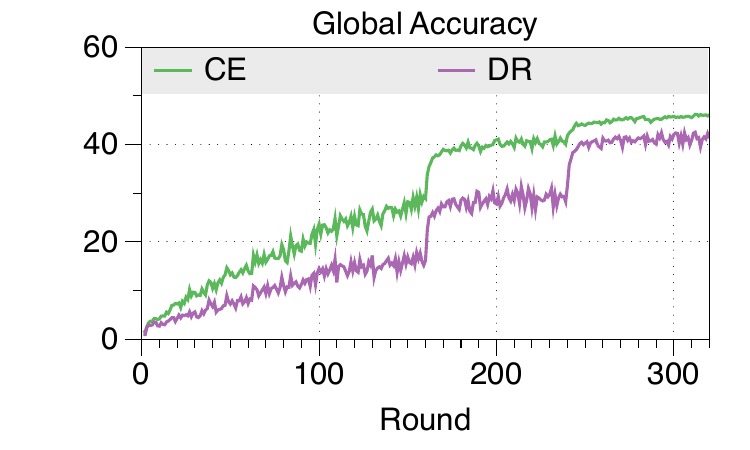}
        \vspace{-24pt}
        \caption{Comparison of the global test accuracy of $\bm{\theta}_r^g$ on all classes trained using $\mathcal{L}_\text{CE}$ and $\mathcal{L}_\text{DR}$.}
        \label{ch3fig:problem_global_acc}
    \end{minipage}
\end{wrapfigure}

As observed in the alignment analysis, $\mathcal{L}_\text{DR}$ performs particularly poorly on the \textcolor{blue}{unobserved}, with a significantly negative alignment gap and lower alignment compared to models trained with $\mathcal{L}_\text{CE}$, which contributes to the lower overall accuracy of the global model when using $\mathcal{L}_\text{DR}$.

\myparagraph{Importance of local alignment in observed classes.}
While $\mathcal{L}_\text{DR}$ struggles with local alignment on \textcolor{blue}{unobserved} classes, leading to lower global accuracy compared to $\mathcal{L}_\text{CE}$, local alignment in \textcolor{red}{observed} classes remains critical. At the last learning rate decay round ($240^\text{th}$), both methods achieve higher or comparable performance on \textcolor{blue}{unobserved} classes than in the final round ($320^\text{th}$). However, the final round shows higher global accuracy due to improved local alignment on \textcolor{red}{observed} classes. This underscores the need to preserve the advantages of $\mathcal{L}_\text{DR}$ in \textcolor{red}{observed} classes while mitigating its degradation on \textcolor{blue}{unobserved} classes.

\subsection{\alg: $\mc{L}_{\text{DR}}$ With Feature Distillation for FL}\label{ch3subsec:method}

We propose \alg to mitigate forgetting unobserved classes while retaining the strengths of dot-regression loss in aligning features of observed classes. Using $\mathcal{L}_\text{DR}$ with the frozen classifier $\bm{V}$, \alg includes a regularizer that fully distills the global model's feature vectors $f(x;\bm{\theta}^g)\in\mathbb{R}^{d}$ to the client features $f(x;\bm{\theta})$, to enhance generalization across all classes. The proposed loss function $\mc{L}_{\text{Dr+}}$, shown in~\eqref{ch3alg:FedDR}, combines $\mc{L}_\text{DR}$ with a regularizer $\mc{L}_\text{FD}(x; \bm{\theta}, \bm{\theta}^{g})=\frac{1}{d}\|f(x;\bm{\theta})-f(x;\bm{\theta}^g)\|_2^2$. Unless specified, we use a scaling parameter $\beta = 0.9$ throughout the paper. The overall pseudocode of \alg can be found in Appendix~\ref{ch3app:pseudo_code}.

\begin{equation}
    \label{ch3alg:FedDR}
    \mc{L}_{\text{Dr+}}(x,y; \bm{\theta}, \bm{\theta}^g, \bm{V})=\beta \cdot \mc{L}_{\text{DR}}(x,y;\bm{\theta}, \bm{V})
     +(1-\beta)\cdot \mc{L}_{\text{FD}}(x; \bm{\theta}, \bm{\theta}^{g})
\end{equation}

\myparagraph{Why feature distillation?}
To address data heterogeneity in FL, various distillation methods have been explored, including model parameters~\citep{oh2021fedbabu, MLSYS2020_1f5fe839, he2020group, li2019fedmd}, logit-related measurement~\citep{li2019fedmd, lee2022preservation, itahara2021distillation, ye2023fake, lin2020ensemble, chen2019knowledge, qian2022switchable}, and co-distillation~\citep{chen2024spectral, cho2023communication}. 
In contrast, we utilize the \textit{feature} distillation~\citep{heo2019comprehensive} technique because the feature directly concerns alignment. On the other hand, logits lose information from features when projected onto a frozen ETF classifier~\citep{heo2019comprehensive, li2017mimicking, li2023rethinking, ben2022s}.
 By distilling features, we leverage the global, differentiated knowledge for each data input $x$. This approach aims to minimize drift towards observed classes, and hence, we expect it to enhance overall generalization.

\subsection{Effect of Feature Distillation}\label{ch3subsec:effect_fd}
Our findings from Section~\ref{ch3subsec:dotreg} indicate that $\mathcal{L}_\text{DR}$ is unsuitable for the heterogeneous FL environment. This is primarily because there is a notable gap in how features align with the fixed classifier between $\mc{O}^i$ and $\mc{U}^i$. To assess the effect of feature distillation ($\mathcal{L}_{\text{FD}}$), which imposes a constraint on the feature distance $\|f(x; \bm{\theta}_{r}^i) - f(x; \bm{\theta}_{r-1}^g)\|_2$ for $x \in \mc{O}^i$, we measure this distance for both $\mc{O}^i$ and $\mc{U}^i$ from the models trained with $\mathcal{L}_{\text{DR}}$ and $\mathcal{L}_{\text{Dr+}}$. We additionally analyze the angle distance, $\angle(f(x; \bm{\theta}_{r}^i), f(x; \bm{\theta}_{r-1}^g))$, as it impacts feature-classifier alignment. These values are averaged over the selected client set $\mathcal{S}_r$.

\myparagraph{Feature distillation stabilizes the feature dynamics.}
By adding $\mathcal{L}_\text{FD}$, as revealed in Figure~\ref{ch3fig:feature_distance_dynamic}, the local model trained with $\mathcal{L}_\text{Dr+}$ shows a reduction in feature distance for \textcolor{red}{observed} classes, compared to the model trained with $\mathcal{L}_\text{DR}$. This reduction happens even for \textcolor{blue}{unobserved} classes. As demonstrated in Figure~\ref{ch3fig:feature_angle_dynamics}, the overall decrease in feature distance leads to a reduction in feature angle distance for both class sets. In both local models trained with $\mathcal{L}_\text{DR}$ and $\mathcal{L}_\text{Dr+}$, there is a trend where the angle distance is significantly larger for $\mc{U}^i$ than for $\mc{O}^i$(Figure~\ref{ch3fig:feature_angle_dynamics}). This large angle distance of $\mc{U}^i$ leads to the degradation of the feature-classifier alignment.

\begin{figure}[t]
    \centering
        \begin{minipage}{\textwidth}
            \centering    
            \begin{subfigure}[b]{0.47\textwidth}
                \centering
                \includegraphics[width=\textwidth]{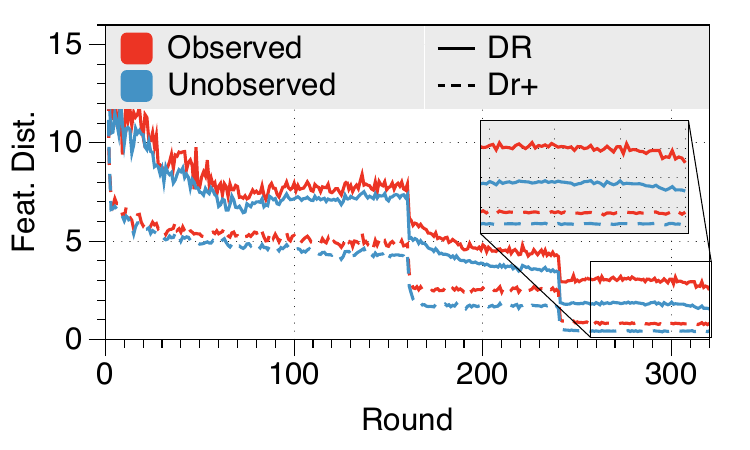}
                \subcaption{Feature distance}
                \label{ch3fig:feature_distance_dynamic}
            \end{subfigure}
            \hspace{15pt}
            \begin{subfigure}[b]{0.47\textwidth}
                \centering
                    \includegraphics[width=\textwidth]{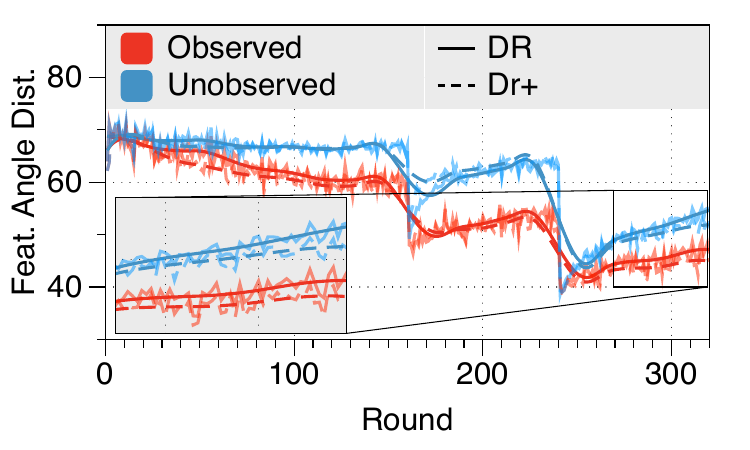}  
                    \subcaption{Feature angle distance}
                    \label{ch3fig:feature_angle_dynamics}
            \end{subfigure}
            \caption{We present (a) feature distance and (b) feature angle distance from $\bm{\theta}_{r-1}^g$ to $\bm{\theta}_{r}^i$ for \textcolor{red}{observed} and \textcolor{blue}{unobserved} classes  by training with $\mathcal{L}_\text{DR}$ and $\mathcal{L}_\text{Dr+}$.}
            \label{ch3fig:feature_dynamic_change}
        \end{minipage}
        
\end{figure}

\begin{figure}[t!]
    \centering
        \begin{minipage}{0.49\textwidth}
            \hspace{-15pt}
            \includegraphics[width=\textwidth]{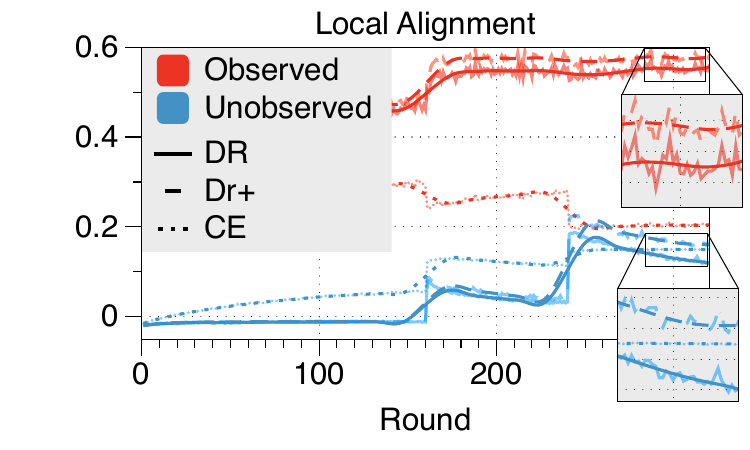}
            \caption{Comparison of feature-classifier alignment on the \textcolor{red}{observed} and \textcolor{blue}{unobserved} classes test data for $\bm{\theta}_r^i$ trained with $\mathcal{L}_\text{CE}, \mathcal{L}_\text{DR}$ and $\mathcal{L}_\text{Dr+}$.}
            \label{ch3fig:all_local_alignment}      
        \end{minipage}
        \hspace{3pt}
        \begin{minipage}{0.49\textwidth}
            \includegraphics[width=\textwidth]{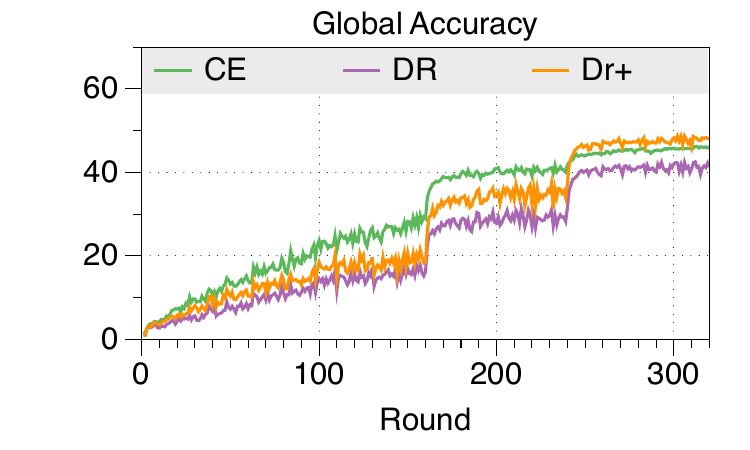}
            \caption{Comparison of the global test accuracy of $\bm{\theta}_r^g$ on all classes trained using $\mathcal{L}_{\text{CE}}, \mathcal{L}_{\text{DR}}$ and $\mathcal{L}_\text{Dr+}$.}
            \label{ch3fig:all_global_acc}   
        \end{minipage}
        
\end{figure}

\myparagraph{Feature distillation enhances local alignment and global accuracy.}
As in Figure~\ref{ch3fig:all_local_alignment}--\ref{ch3fig:all_global_acc}, our proposed algorithm, \ie $\mathcal{L}_{\text{Dr+}}$, improves feature-classifier alignment for both $\mathcal{O}^i$ and $\mathcal{U}^i$, along with enhanced global accuracy compared to $\mathcal{L}_{\text{DR}}$. We attribute this improvement to the enhanced knowledge of the global model which is preserved by preventing the forgetting of previously trained knowledge.  During the final convergence phase ($240^\text{th}$-$320^\text{th}$), $\mathcal{L}_{\text{Dr+}}$ achieves \textcolor{blue}{better feature-classifier alignment even on unobserved} classes compared to $\mathcal{L}_\text{CE}$. As a result, at the final round ($320^\text{th}$), $\mathcal{L}_{\text{Dr+}}$ demonstrates the best local alignment across all classes, surpassing both $\mathcal{L}_\text{CE}$ and $\mathcal{L}_{\text{DR}}$, contributing to its global model's superior performance. Even though the proposed regularizer demonstrates a reasonable regularizing effect, one question remains: ``\emph{Is it superior to other previously used regularizers?}''

\subsection{Synergistic Effect with Different Types of FL Algorithms and Regularizers}\label{ch3subsec:synergy_effect}

We answer the above question by evaluating the synergy effect of various FL algorithms by maintaining their original training loss and incorporating specific regularizers, following the approach suggested in Equation~\ref{ch3alg:FedDR}. To address the issue of differing loss scales between the baseline FL algorithms and the regularizers, we thoroughly tune the coefficient $\beta$ within the range $\{0.1, 0.3, 0.5, 0.7, 0.9, 0.99, 0.999, 0.9999\}$, and report the resulting performance in Table~\ref{ch3tab:synergy_effect}. The selected $\beta$ values are detailed in Subsection~\ref{ch3app:synergey}.

\myparagraph{FL algorithms and regularizers.}
We evaluate several baseline FL algorithms, including dot-regression. 
The baseline also include FedAvg~\citep{mcmahan2017communication}—using an unfrozen classifier, FedBABU~\citep{oh2021fedbabu}—extending FedAvg by freezing the classifier during local training, and SphereFed\citep{dong2022spherefed}—enhancing feature-classifier alignment by using MSE loss between one-hot encoded labels and cosine similarity based logits. We also considered FedGELA and FedETF—both applying client-specific adaptive loss functions tailored to data distribution.  

Alongside the FD regularizer, we evaluate a range of regularizers, including Prox~\citep{MLSYS2020_1f5fe839}—constraining the distance between local and global model parameters; MOON~\citep{li2021model}—minimizing the angle distance between feature vectors of global and local models through contrastive learning; and several logit-based regularizers—KD~\citep{hinton2015distilling}, NTD~\citep{lee2022preservation, zhao2022decoupled}, and LD~\citep{kim2021comparing}—keeping logit-related measurements of local models closely aligned with the global model. Specifically, KD applies softened softmax probability from the logit vector, NTD does the same but excludes the true class dimension, and LD distills the entire logit vector.

\begin{table*}[t!]
    \centering
    \caption{Synergy of various FL algorithms and regularizers. Baseline indicates training FL models without a regularizer. FD denotes feature distillation, which is the regularizer we use in \alg.}
    \label{ch3tab:synergy_effect}
    \small
    \addtolength{\tabcolsep}{-1pt}
    \resizebox{\textwidth}{!}{
    \begin{tabular}{l|ccccccc|ccccccc}
      \toprule 
       & \multicolumn{7}{c|}{Sharding ($s=10$)} & \multicolumn{7}{c}{LDA ($\alpha=0.1$)} \\ \cmidrule{2-15}
      Algorithm & Baseline & \!\!+Prox & \!\!+KD & \!\!+NTD & \!\!+LD & \!\!+MOON & +FD & Baseline & \!\!+Prox & \!\!+KD & \!\!+NTD & \!\!+LD & \!\!+MOON & +FD\\ \midrule
      FedAvg & 37.22 &  36.87 & 36.25 & 37.71  & 37.17 & 37.43 &  37.82 & 42.52 &  43.22 & 44.21 & 43.39 & 43.43 & 44.79 & 43.76\\ \midrule      
      FedBABU  & 46.20 &  46.03 & 46.37 & 47.22  & 46.71 & 46.49 &  46.95 & 47.37 &  46.62 & 47.60 & 46.48 & 45.78 & 46.27 & 46.49\\   
   
      SphereFed & 43.90 &  41.96 & 44.94 & 43.47  & 43.95 & 43.13 &  45.21 & 46.98 &  43.77 & 47.76 & 47.25 & 47.01 & 46.81 & 49.74\\   
      
      FedETF & 32.42 &  31.87 & 32.76 & 32.65  & 32.25 & 34.30 &  32.77 & 46.27 &  45.71 & 46.67 & 46.16 & 45.91 & 45.98 & 46.47\\   
      
      FedGELA & 29.17 &  28.69 & 29.11 & 28.84  & 29.36 & 28.80 &  30.33 & 27.11 &  29.03 & 28.45 & 29.62 & 29.41 & 28.09 & 29.75\\ \midrule  
      
      Dot-Regression & 42.52 &  41.95 & 47.45 & 48.32  & 47.52 & 44.72 &  \textbf{48.69} & 42.72 &  46.35 & 49.47 & 50.36 & 49.28 & 50.36 & \textbf{50.86}\\   
      
      \bottomrule
    \end{tabular}
    }
    \vspace{-5pt}
\end{table*}

\myparagraph{\alg shows best synergy.}
Table~\ref{ch3tab:synergy_effect} shows that \alg (dot-regression + FD) achieves the strongest performance among the tested combinations. FD performs exceptionally well when combined with dot-regression, SphereFed, and FedBABU. These baselines freeze the classifier while not using client-specific adaptive loss. Among them, the synergy is most effective in the order of dot-regression, SphereFed, and FedBABU—reflecting how each optimizes feature-classifier alignment well. FD outperforms other regularizers by effectively stabilizing feature dynamics for observed classes, which helps mitigate the misalignment and performance degradation for unobserved classes. MOON,on the other hand, prioritizes the cosine similarity between feature vectors from local and global models but fails to adequately control feature norm dynamics. Prox applies uniform regularization that is independent of specific data instances, leading to less refined control over feature dynamics. Logit-based regularizers lose effectiveness due to information loss when features are projected onto the classifier, as they focus on mitigating the dynamics of the less informative projected vectors rather than the richer original feature vectors.

For baselines like FedGELA and FedETF, which apply client-specific loss functions, none of the regularizers, including FD, lead to consistently lead to significant performance, particularly in the sharding setting. In non-frozen settings, FedAvg, the FD regularizer does not offer a significant performance boost. Under LDA, FedAvg combined with MOON outperforms FedAvg with FD, consistent with the claim of MOON~\citep{li2021model}.

\section{Experiments and Results}
\label{ch3sec:exp}

In this section, we first present the experimental results of \alg in the context of global federated learning (GFL). We then analyze the elapsed time and conduct a sensitivity analysis for GFL, investigating the effects of varying local epochs, client sampling ratios, and different $\beta$ values on the performance of \alg. Finally, we propose \alg\ FT, which fine-tunes the \alg GFL model with $\mathcal{L}_\text{Dr+}$, and report the results comparing it with existing PFL methods.

\subsection{Experimental Setup}\label{ch3subsec:exp_setup}

\myparagraph{Dataset and models.}
To simulate a realistic FL scenario involving 100 clients, we conduct extensive studies on three widely used datasets: CIFAR-10~\citep{krizhevsky2009cifar}, CIFAR-100~\citep{krizhevsky2009cifar} and ImageNet-100~\citep{deng2009imagenet}. We use VGG11~\citep{simonyan2014very} for CIFAR-10, MobileNet~\citep{howard2017mobilenets} for CIFAR-100, and ResNet-18~\citep{he2016identity} for ImageNet-100. The training data is distributed among the 100 clients using sharding and the LDA (Latent Dirichlet Allocation) partition strategies. 

Following the convention, sharding distributes the data into non-overlapping shards of equal size, each shard encompassing $\frac{|D_\text{train}|}{100\times s}$ and $\frac{|D_\text{test}|}{100\times s}$ samples per class, where $s$ denotes the number of shards per client. On the other hand, LDA involves sampling a probability vector from Dirichlet distribution, $p_c = (p_{c,1}, p_{c,2}, \cdots, p_{c,100}) \sim \text{Dir}(\alpha)$, and allocating a proportion $p_{c,k}$ of instances of class $c \in [C]$ to each client $k \in [100]$. Smaller values of $s$ and $\alpha$ increase the level of data heterogeneity. For CIFAR-10 and CIFAR-100, we explore a range of $s$ and $\alpha$ values to assess the impact of different data heterogeneity levels. For ImageNet-100, we focus on experiments with $s=20$ and $\alpha=0.1$.

\myparagraph{Implementation details.} In each round of communication, a random 10\% of clients are selected to participate in the training process. The total number of communication rounds is set to 320. The initial learning rate and the number of local epochs for CIFAR-10, CIFAR-100, and ImageNet-100 are determined through grid searches, with the detailed process and results provided in Subsection~\ref{ch3app:grid_search}. The learning rate $\eta$ is decayed by a factor of 0.1 at the 160th and 240th communication rounds.

\begin{table*}[t!]
    \centering
    \small
    \caption{Accuracy comparison in the GFL setting. 
    The entries are based on results obtained from three different seeds, indicating the mean and standard deviation of the accuracy of the global model, represented as X{\tiny $\pm$Y}. The best performance in each case is highlighted in \textbf{bold}.}
    \label{ch3tab:gfl_acc_all}
    \vspace{-5pt}
    \resizebox{\textwidth}{!}{
    \begin{tabular}{l|cccc|ccc|c}
      \toprule 
      \multicolumn{9}{c}{\textbf{NIID Partition Strategy: Sharding}}\\ \midrule
       & \multicolumn{4}{c|}{CIFAR-100} & \multicolumn{3}{c|}{CIFAR-10} & \multirow{2}{*}{\centering ImageNet-100}\\ \cmidrule{2-8} 
       & $s$=10 & $s$=20 & $s$=50 & $s$=100  & $s$=2 & $s$=5 & $s$=10 & \\ \midrule
      FedAvg  & 36.63{\tiny $\pm$ 0.22} & 42.25{\tiny $\pm$ 1.42} & 45.57{\tiny $\pm$ 0.22} & 48.20{\tiny $\pm$ 1.36} & 72.08{\tiny $\pm$ 0.67} & 81.53{\tiny $\pm$ 0.35} & 82.38{\tiny $\pm$ 0.40} & 67.78{\tiny $\pm$ 0.41}   \\ 

      FedProx & 37.07{\tiny $\pm$ 0.21} & 42.35{\tiny $\pm$ 0.83} & 45.18{\tiny $\pm$ 1.07} & 47.78{\tiny $\pm$ 0.79} & 71.92{\tiny $\pm$ 0.51} & 81.29{\tiny $\pm$ 0.40} & 82.45{\tiny $\pm$ 0.35} & 67.81{\tiny $\pm$ 0.65}  \\      
      SCAFFOLD\,($\times 2$)\!\!
      & 46.08{\tiny $\pm$ 0.37} & 48.15{\tiny $\pm$ 1.21} & 49.31{\tiny $\pm$ 0.62} & 50.73{\tiny $\pm$ 0.42} & 75.49{\tiny $\pm$ 0.42} & \textbf{84.14}{\tiny $\pm$ 0.13} & \textbf{85.11}{\tiny $\pm$ 0.29} & 70.47{\tiny $\pm$ 0.46} \\

      MOON & 36.95{\tiny $\pm$ 0.37} & 43.05{\tiny $\pm$ 0.27} & 43.95{\tiny $\pm$ 0.12} & 46.92{\tiny $\pm$ 0.08} & 67.55{\tiny $\pm$ 1.16} & 80.90{\tiny $\pm$ 0.26} & 82.62{\tiny $\pm$ 0.31} & 68.19{\tiny $\pm$ 0.32} \\      

      FedNTD  & 34.05{\tiny $\pm$ 1.19} & 41.78{\tiny $\pm$ 0.31} & 46.42{\tiny $\pm$ 0.63} & 47.17{\tiny $\pm$ 0.32} & 72.21{\tiny $\pm$ 0.59} & 69.96{\tiny $\pm$ 17.10} & 81.99{\tiny $\pm$ 0.42} & 67.51{\tiny $\pm$ 0.25} \\
      FedExP  & 36.85{\tiny $\pm$ 0.11} & 42.49{\tiny $\pm$ 1.22} & 45.07{\tiny $\pm$ 0.92} & 48.09{\tiny $\pm$ 1.00} & 72.31{\tiny $\pm$ 0.60} & 81.41{\tiny $\pm$ 0.19} & 82.47{\tiny $\pm$ 0.16} & 63.34{\tiny $\pm$ 0.51} \\
      
      FedSOL & 32.18{\tiny $\pm$ 0.18} & 41.54{\tiny $\pm$ 1.03} & 47.42{\tiny $\pm$ 0.76} & 47.70{\tiny $\pm$ 1.13} & 54.93{\tiny $\pm$ 3.52} & 75.73{\tiny $\pm$ 0.10} & 77.00{\tiny $\pm$ 0.41} & 66.61{\tiny $\pm$ 1.17} \\\midrule

      FedBABU  & 45.97{\tiny $\pm$ 0.48} & 45.53{\tiny $\pm$ 0.79} & 46.52{\tiny $\pm$ 0.51} & 46.02{\tiny $\pm$ 0.28} & 71.99{\tiny $\pm$ 0.52} & 81.07{\tiny $\pm$ 0.60} & 82.32{\tiny $\pm$ 0.06} & 68.82{\tiny $\pm$ 0.46} \\
      
      SphereFed & 42.71{\tiny $\pm$ 0.65} & 48.63{\tiny $\pm$ 0.90} & {\textbf{52.16}}{\tiny $\pm$ 0.22} & \textbf{53.41}{\tiny $\pm$ 0.19} & \textbf{76.33}{\tiny $\pm$ 0.33} & 83.67{\tiny $\pm$ 0.18} & 84.36{\tiny $\pm$ 0.30} & 69.71{\tiny $\pm$ 0.39} \\
      FedETF & 31.37{\tiny $\pm$ 0.72} & 42.22{\tiny $\pm$ 0.77} & 47.47{\tiny $\pm$ 0.67} & 49.00{\tiny $\pm$ 0.74} & 67.81{\tiny $\pm$ 0.94} & 80.78{\tiny $\pm$ 0.68} & 82.60{\tiny $\pm$ 0.46} & 70.81{\tiny $\pm$ 0.28} \\ 
      FedGELA & 27.95{\tiny $\pm$ 0.81} & 38.63{\tiny $\pm$ 0.66} & 44.67{\tiny $\pm$ 0.51} & 47.95{\tiny $\pm$ 0.85} & 63.77{\tiny $\pm$ 2.34} & 79.05{\tiny $\pm$ 0.14} & 81.56{\tiny $\pm$ 0.09}& 67.08{\tiny $\pm$ 0.18} \\
      \alg \textbf{(Ours)} & \textbf{48.21}{\tiny $\pm$ 0.56} & \textbf{50.77}{\tiny $\pm$ 0.14} & {\textbf{52.15}}{\tiny $\pm$ 0.03} & \textbf{52.41}{\tiny $\pm$ 0.81} & \textbf{76.57}{\tiny $\pm$ 0.51} & 83.22{\tiny $\pm$ 0.34} & 84.14{\tiny $\pm$ 0.27} & \textbf{71.47}{\tiny $\pm$ 0.45} \\ \bottomrule
      \midrule
      \multicolumn{9}{c}{\textbf{NIID Partition Strategy: LDA}}\\ \midrule
       & \multicolumn{4}{c|}{CIFAR-100} & \multicolumn{3}{c|}{CIFAR-10} & \multirow{2}{*}{\centering ImageNet-100}\\ \cmidrule{2-8} 
       & $\alpha$=0.05 & $\alpha$=0.1 & $\alpha$=0.2 & $\alpha$=0.3  & $\alpha$=0.1 & $\alpha$=0.2 & $\alpha$=0.3 & \\ \midrule
      FedAvg  & 35.58{\tiny $\pm$ 1.35} & 42.10{\tiny $\pm$ 0.60} & 44.78{\tiny $\pm$ 0.72} & 45.73{\tiny $\pm$ 0.88} & 68.71{\tiny $\pm$ 1.82} & 77.75{\tiny $\pm$ 0.26} & 80.76{\tiny $\pm$ 0.51} & 65.11{\tiny $\pm$ 0.25}   \\ 
      
      FedProx  & 37.07{\tiny $\pm$ 0.21} & 42.35{\tiny $\pm$ 0.83} & 45.18{\tiny $\pm$ 1.06} & 48.18{\tiny $\pm$ 0.51} & 69.00{\tiny $\pm$ 2.27} & 77.81{\tiny $\pm$ 0.24} & 80.55{\tiny $\pm$ 0.19} & 40.48{\tiny $\pm$ 1.28}   \\ 
      SCAFFOLD\,($\times 2$)\!\! & 40.54{\tiny $\pm$ 0.48} & 46.14{\tiny $\pm$ 0.70} & 47.98{\tiny $\pm$ 0.93} & 48.06{\tiny $\pm$ 1.08} & \textit{(Failed)} & 80.15{\tiny $\pm$ 0.29} & {\textbf{82.63}}{\tiny $\pm$ 0.23} & 66.84{\tiny $\pm$ 0.77}   \\ 
      MOON  & 23.97{\tiny $\pm$ 1.15} & 30.86{\tiny $\pm$ 0.21} & 33.60{\tiny $\pm$ 0.62} & 35.54{\tiny $\pm$ 0.45} & 66.44{\tiny $\pm$ 3.28} & 77.36{\tiny $\pm$ 0.08} & 80.15{\tiny $\pm$ 0.22} & 41.25{\tiny $\pm$ 0.60}   \\ 
      FedNTD  & 31.78{\tiny $\pm$ 3.14} & 40.41{\tiny $\pm$ 0.96} & 43.10{\tiny $\pm$ 2.03} & 43.04{\tiny $\pm$ 0.82} & 70.22{\tiny $\pm$ 0.40} & 77.16{\tiny $\pm$ 0.20} & 79.50{\tiny $\pm$ 0.56} & 64.87{\tiny $\pm$ 0.20}   \\ 
      FedExP  & 34.39{\tiny $\pm$ 1.77} & 40.85{\tiny $\pm$ 1.32} & 44.47{\tiny $\pm$ 0.28} & 45.44{\tiny $\pm$ 0.14} & 70.14{\tiny $\pm$ 0.53} & 78.09{\tiny $\pm$ 0.21} & 80.40{\tiny $\pm$ 0.54}& 59.40{\tiny $\pm$ 0.36}   \\

      FedSOL & 34.49{\tiny $\pm$ 0.80} & 41.19{\tiny $\pm$ 0.30} & 43.55{\tiny $\pm$ 1.51} & 44.85{\tiny $\pm$ 0.54} & 59.51{\tiny $\pm$ 1.77} & 67.55{\tiny $\pm$ 0.41} & 70.96{\tiny $\pm$ 0.32} & 62.70{\tiny $\pm$ 0.89}   \\ \midrule
      
      FedBABU  & 41.97{\tiny $\pm$ 1.01} & 45.77{\tiny $\pm$ 0.28} & 44.28{\tiny $\pm$ 0.45} & 44.80{\tiny $\pm$ 0.63} & 65.15{\tiny $\pm$ 3.66} & 77.03{\tiny $\pm$ 0.25} & 79.91{\tiny $\pm$ 0.13} & 66.54{\tiny $\pm$ 0.30}   \\ 
      
      SphereFed & 39.56{\tiny $\pm$ 0.48} & 46.54{\tiny $\pm$ 0.58} & \textbf{49.41}{\tiny $\pm$ 0.78} & 49.22{\tiny $\pm$ 0.86} & 67.49{\tiny $\pm$ 3.49} & 80.05{\tiny $\pm$ 0.40} & {\textbf{82.62}}{\tiny $\pm$ 0.66} & 67.03{\tiny $\pm$ 0.30}   \\ 
      FedETF & 40.71{\tiny $\pm$ 0.90} & 45.63{\tiny $\pm$ 0.33} & 46.28{\tiny $\pm$ 1.05} & 46.69{\tiny $\pm$ 0.87} & 70.75{\tiny $\pm$ 0.36} & 77.86{\tiny $\pm$ 0.46} & 79.95{\tiny $\pm$ 0.34} & 68.98{\tiny $\pm$ 0.21}   \\ 

      FedGELA & 16.72{\tiny $\pm$ 1.91} & 27.12{\tiny $\pm$ 1.58} & 33.68{\tiny $\pm$ 0.19} & 36.17{\tiny $\pm$ 0.26} & 50.69{\tiny $\pm$ 7.55} & 66.04{\tiny $\pm$ 14.87} & 77.89{\tiny $\pm$ 0.97} & 55.57{\tiny $\pm$ 0.42}   \\ 
      \alg \textbf{(Ours)} & \textbf{45.12}{\tiny $\pm$ 1.00} & \textbf{49.48}{\tiny $\pm$ 0.50} & \textbf{50.67}{\tiny $\pm$ 0.88} & \textbf{51.15}{\tiny $\pm$ 0.65} & \textbf{72.07}{\tiny $\pm$ 2.26} & \textbf{80.90}{\tiny $\pm$ 0.02} & {\textbf{82.42}}{\tiny $\pm$ 0.10} & \textbf{70.20}{\tiny $\pm$ 0.09}   \\  \bottomrule
    \end{tabular}
    }
\end{table*}

\vspace{-5pt}
\subsection{Global Federated Learning Results}
\label{ch3subsec:gfl_results}

We compare \alg with a range of GFL algorithms, considering both non-freezing and freezing classifier approaches. Among non-freezing classifiers, \alg competes with FedAvg~\citep{mcmahan2017communication}, FedProx~\citep{MLSYS2020_1f5fe839}, SCAFFOLD~\citep{karimireddy2020scaffold}, MOON~\citep{li2021model}, FedNTD~\citep{lee2022preservation}, FedExP~\citep{jhunjhunwala2023fedexp}, and FedSOL~\citep{lee2024fedsol}. \alg is also evaluated against freezing classifier algorithms such as FedBABU~\citep{oh2021fedbabu}, SphereFed~\citep{dong2022spherefed}, FedETF~\citep{li2023no}, and FedGELA~\citep{fan2023federated}. Among the baseline algorithms, SCAFFOLD incurs a communication cost two times higher per round, denoted as ($\times2$).  Our experiments encompass heterogeneous settings involving sharding and LDA non-IID environments. 

Table~\ref{ch3tab:gfl_acc_all} summarizes the accuracy comparison between various GFL methods proposed in recent literature and FedAvg under various conditions.
While specific methods demonstrated effectiveness in particular scenarios, some of these underperformed relative to the robustness of FedAvg. For example, SCAFFOLD shown strong performance in the less heterogeneous sharding setting on CIFAR-10; however, it failed in model training under the highly heterogeneous LDA condition with $\alpha=0.1$.
Notably, \alg consistently outperformed all baselines in highly heterogeneous settings, achieving a 3.15\% improvement in CIFAR-100 LDA with $\alpha=0.05$ and 3.17\% in ImageNet-100 LDA.

\subsection{Elapsed Time Results}
\label{ch3subsec:elapsed_time}

We compare \alg with various GFL algorithms for the elapsed time per communication round on CIFAR-100 ($s$=10). As shown in Table~\ref{ch3apptab:elapsed_time}, incorporating a global model during the local update generally results in higher computation costs, leading to longer elapsed times compared to updates without a global model. \alg exhibits a slightly longer elapsed time than most other algorithms, yet still requires less time than FedSOL and MOON.

\begin{table}[t!]
    \centering
    \small
    \caption{Elapsed time per round (in seconds) for various GFL algorithms.}
    \label{ch3apptab:elapsed_time}
    \vspace{-5pt}
    \addtolength{\tabcolsep}{-2pt}
    \resizebox{\textwidth}{!}{
    \begin{tabular}{c|cccccc|cccccc}
      \toprule
    & \multicolumn{6}{c|}{Local update without global model} & \multicolumn{6}{c}{Local update with global model} \\ \midrule      
      & FedAvg & FedBABU & FedETF & SphereFed & FedGELA & FedExP & FedProx & SCAFFOLD & FedNTD  & FedSOL & MOON & \alg (Ours) \\ \midrule
      Elapsed time  & 20.4 & 20.5 & 20.7 & 20.4 & 20.5 & 20.1 & 22.6 & 21.4 & 21.2  & 27.2 & 56.7 & 24.9 \\ \bottomrule
    \end{tabular}
    }
\end{table}

\subsection{Sensitivity Analysis}\label{ch3subsec:sensitivity}

\begin{figure*}[t!]\label{ch3fig: Sensitivity}
    \centering
    \vspace{5pt}
    \includegraphics[width=0.6\textwidth]{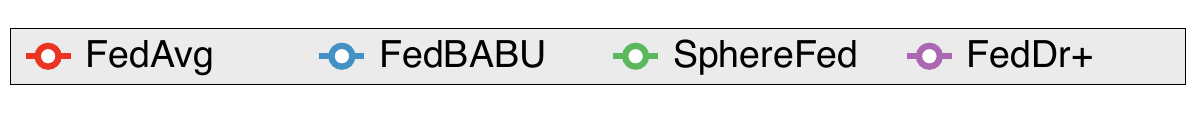} \\
    \begin{subfigure}[b]{0.32\textwidth}
        \centering
        \includegraphics[width=1\textwidth]{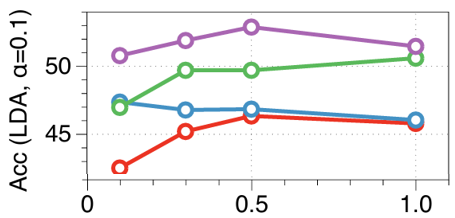}
        \vspace{-7pt}
        \subcaption{Client sampling ratio}
    \end{subfigure}
    \hfill
    \begin{subfigure}[b]{0.32\textwidth}
        \centering
        \includegraphics[width=1\textwidth]{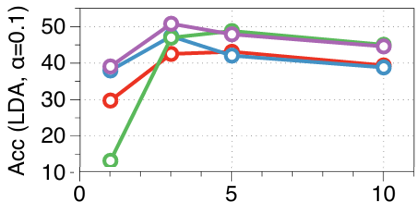}
        \vspace{-7pt}
        \subcaption{Local epochs}
    \end{subfigure}
    \hfill
    \begin{subfigure}[b]{0.32\textwidth}
        \centering
        \includegraphics[width=0.95\textwidth]{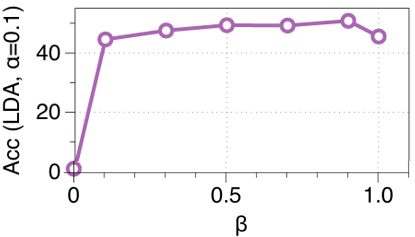}
        \vspace{-3pt}
        \subcaption{$\beta$ sensitivity}
    \end{subfigure}
    \caption{Performance of baselines and \alg on CIFAR-100 ($\alpha$=0.1) with various analyses: (a) client sampling ratio, (b) the number of local epochs, and (c) sensitivity to $\beta$.}
    \vspace*{-5pt}
    \label{ch3fig:sensitivity}
\end{figure*}

We explore the impact of varying client sampling ratio and local epochs on performance, as well as the effect of different $\beta$ values in \alg, as detailed in Figure~\ref{ch3fig:sensitivity}. All experiments are conducted on MobileNet using the CIFAR-100 dataset with LDA ($\alpha$=0.1).
\newline

\myparagraph{Effect of client sampling ratio and local epochs.} We evaluate the sensitivity of hyperparameters in \alg by comparing it to baselines under varying client sampling ratio and local epochs, starting from the default setting of client sampling ratio of 0.1 and local epoch of 3. Compared to FedAvg (without classifier freezing), FedBABU and SphereFed (all with classifier freezing) generally show performance improvements with increasing fraction ratios, but \alg consistently outperforms the baselines. The number of local epochs is crucial in FL; too few epochs result in underfitting, while too many cause client drift, degrading global model performance. The default setting of local epochs 3 is optimal for all baselines, with \alg achieving the best performance. Although performance generally declines when deviating from this peak point, \alg remains the best or highly competitive.
\newline

\myparagraph{Weight ratio $\beta$ analysis.} We analyze the effect of scaling parameter in \alg by varying $\beta$ while keeping other hyperparameters constant. The performance is evaluated for $\beta\in\{0, 0.1, 0.3, 0.5, 0.7, 0.9, 1.0\}$. When $\beta=0$, only feature distillation is applied, and when $\beta=1$, only dot-regression is used. $\beta\in\{0, 1\}$ are generally less effective, whereas $\beta\in\{0.3, 0.5, 0.7, 0.9\}$ show consistently good performance, indicating a balanced approach is beneficial.

\begin{table}[t!]
    \centering
    \caption{PFL accuracy comparison with MobileNet on CIFAR-100. Results are reported in the format X{\tiny $\pm$Y}, representing the mean and standard deviation of the average personalized accuracies across all clients, computed over five seeds. The best performance in each case is highlighted in \textbf{bold}.}
    
    \label{ch3tab:pfl_acc}
    \small
    \resizebox{\textwidth}{!}{
    \begin{tabular}{l|ccc|ccc}
    \toprule
    Algorithm & $s$=10 & $s$=20 & $s$=100 & $\alpha$=0.05 & $\alpha$=0.1 & $\alpha$=0.3 \\ \midrule
    Local only ($\mathcal{L}_\text{CE}$)                              & 58.42{\tiny $\pm$ 0.22}  & 42.37{\tiny $\pm$ 0.29} & 19.02{\tiny $\pm$ 0.34} & 55.71{\tiny $\pm$ 0.19}   & 44.02{\tiny $\pm$ 0.39} & 27.98{\tiny $\pm$ 0.08} \\
    
    Local only ($\mathcal{L}_\text{CE}$+ETF)
     & 58.05{\tiny $\pm$ 0.25}  & 41.72{\tiny $\pm$ 0.26} & 19.06{\tiny $\pm$ 0.18} & 55.41{\tiny $\pm$ 0.21}   & 43.56{\tiny $\pm$ 0.31} & 27.69{\tiny $\pm$ 0.17} \\

    Local only ($\mathcal{L}_\text{DR}$)                               & 61.05{\tiny $\pm$ 0.37}  & 44.28{\tiny $\pm$ 0.21} & 21.06{\tiny $\pm$ 0.21} & 58.56{\tiny $\pm$ 0.14}   & 47.05{\tiny $\pm$ 0.13} & 31.16{\tiny $\pm$ 0.15} \\
  
    FedPer                            & 70.62{\tiny $\pm$ 0.71}  & 55.65{\tiny $\pm$ 1.35} & 25.57{\tiny $\pm$ 0.59} & 63.35{\tiny $\pm$ 1.96}   & 51.90{\tiny $\pm$ 2.13} & 35.84{\tiny $\pm$ 2.16} \\
    Per-FedAvg                           & 31.71{\tiny $\pm$ 1.08}  & 38.64{\tiny $\pm$ 0.40} & 45.71{\tiny $\pm$ 0.81} & 28.85{\tiny $\pm$ 0.27}   & 36.00{\tiny $\pm$ 0.42} & 42.41{\tiny $\pm$ 0.32} \\
    
    FedRep                            & 62.59{\tiny $\pm$ 0.30}  & 51.18{\tiny $\pm$ 1.00} & 26.51{\tiny $\pm$ 0.27} & 57.73{\tiny $\pm$ 0.41}   & 49.59{\tiny $\pm$ 0.40} & 36.22{\tiny $\pm$ 0.86} \\
    
    Ditto                            & 38.39{\tiny $\pm$ 0.54}  & 42.16{\tiny $\pm$ 1.14} & 44.04{\tiny $\pm$ 0.81} & 34.86{\tiny $\pm$ 1.18}   & 38.67{\tiny $\pm$ 1.30} & 42.05{\tiny $\pm$ 0.58} \\

    \midrule

    FedAvg-FT & 70.20{\tiny $\pm$ 0.54}  & 56.26{\tiny $\pm$ 0.51} & 48.67{\tiny $\pm$ 0.99} & 61.08{\tiny $\pm$ 1.86}   & 56.34{\tiny $\pm$ 1.18} & 49.74{\tiny $\pm$ 1.08} \\

    FedBABU-FT & 80.73{\tiny $\pm$ 0.65}  & 71.02{\tiny $\pm$ 0.34} & 51.70{\tiny $\pm$ 0.21} & 76.12{\tiny $\pm$ 0.55}   & 69.94{\tiny $\pm$ 0.34} & 57.40{\tiny $\pm$ 1.50} \\
    
    SphereFed-FT & 81.34{\tiny $\pm$ 0.64}  & 72.22{\tiny $\pm$ 0.56} & 56.58{\tiny $\pm$ 0.89} & 74.49{\tiny $\pm$ 0.86}   & 69.39{\tiny $\pm$ 1.04} & 59.51{\tiny $\pm$ 1.03} \\
    
    FedETF-FT & 53.32{\tiny $\pm$ 0.60}  & 53.05{\tiny $\pm$ 0.49} & 49.74{\tiny $\pm$ 0.85} & 52.31{\tiny $\pm$ 0.40}   & 53.70{\tiny $\pm$ 0.35} & 50.80{\tiny $\pm$ 0.65} \\
    
    FedGELA-FT & 75.75{\tiny $\pm$ 0.57}  & 68.96{\tiny $\pm$ 0.37} & 52.23{\tiny $\pm$ 0.59} & 58.26{\tiny $\pm$ 5.78}   & 60.12{\tiny $\pm$ 0.71} & 53.09{\tiny $\pm$ 0.82} \\ \midrule

    \textbf{{\alg}\,FT (ours)} & \textbf{83.08{\tiny $\pm$ 0.27}}  & \textbf{74.80{\tiny $\pm$ 0.66}} & \textbf{56.56{\tiny $\pm$ 1.04}} & \textbf{78.40{\tiny $\pm$ 0.40}}  & \textbf{73.23{\tiny $\pm$ 0.89}} & \textbf{62.22{\tiny $\pm$ 0.86}} \\ 
    \bottomrule
    \end{tabular}
    }
    \vspace*{-5mm}
\end{table}

\subsection{Personalized Federated Learning Results}
\label{ch3subsec:pfl_results}

We introduce {\alg}\,FT, inspired by prior work~\citep{oh2021fedbabu, dong2022spherefed, li2023no, kim2023fedfn, fan2023federated}, which enhances personalization by leveraging local data to fine-tune the global federated learning (GFL) model. We fine-tune the \alg GFL model using $\mathcal{L}_\text{Dr+}$ to create {\alg}\,FT, \ie 2-step approach. The overall pseudocode of {\alg}\,FT can be found in Subsection~\ref{ch3app:pseudo_code}. For a comprehensive analysis, we compare {\alg}\,FT with existing personalized federated learning (PFL) methods, including 1-step approaches, \ie creating PFL models from scratch, such as FedPer~\citep{arivazhagan2019federated}, Per-FedAvg~\citep{fallah2020personalized}, FedRep~\citep{collins2021exploiting}, and Ditto~\citep{li2021ditto}, as well as 2-step methods such as FedAVG-FT, FedBABU-FT~\citep{oh2021fedbabu}, SphereFed-FT~\citep{dong2022spherefed}, FedETF-FT~\citep{li2023no}, and FedGELA-FT~\citep{fan2023federated}. Additionally, we compare these methods with various simple local models that have not undergone federated learning: (1) Local only ($\mathcal{L}_\text{CE}$), trained with $\mathcal{L}_\text{CE}$, (2) Local only ($\mathcal{L}_\text{CE}$\,+\,ETF), trained with $\mathcal{L}_\text{CE}$ and initializing the classifier with an ETF classifier, and (3) Local only ($\mathcal{L}_\text{DR}$), trained using $\mathcal{L}_\text{DR}$.

In Table~\ref{ch3tab:pfl_acc}, we first compare the performance of simple local models in PFL by examining $\mathcal{L}_\text{DR}$ and $\mathcal{L}_\text{CE}$. While methods using $\mathcal{L}_\text{CE}$ show no significant differences, utilizing $\mathcal{L}_\text{DR}$ leads to substantial performance improvements in PFL across all settings. The ``Local only ($\mathcal{L}_\text{CE}$)'' and ``Local only ($\mathcal{L}_\text{CE}$\,+\,ETF)'' methods exhibit similar performance due to the nearly classwise orthogonal nature of randomly initialized classifiers~\citep{oh2021fedbabu, saxe2013exact, glorot2010understanding, he2015delving, lezama2018ole}. With a large number of classes ($C$=100), the ETF classifier, which is also nearly classwise orthogonal, performs similarly to random initialization. When comparing {\alg}\,FT with other 2-step methods, {\alg}\,FT consistently demonstrates superior performance. This aligns with previous research~\citep{nguyen2022begin,chen2022importance} suggesting that fine-tuning from a well-initialized model yields better PFL performance. Additionally, compared with 1-step algorithms, {\alg}\,FT continues to show superiority, outperforming all baseline methods across all settings.

\newpage
\section{Supplemental Evidence}
\label{ch3sec:suppl}
\subsection{Notations}
\label{ch3app:notations}
In this subsection, we first introduce all key notations used throughout this paper.

\begin{table}[htp]
\centering

\caption{Notations used throughout the paper.}
\label{ch3tab:notation_summary}
\resizebox{0.9\textwidth}{!}{
\begin{tabular}{ll}
\toprule
\textbf{Indices} & \\ 
$c \in [C]$  & Index for a class \\
$r \in [R]$ & Index for FL round \\
$i \in [N]$ & Index for a client \\ 
\midrule

\textbf{Dataset} & \\
$D_{\text{train}}^{i}$ & Training dataset for client $i$ \\
$D_{\text{test}}^{i}$ & Test dataset for client $i$ \\
$(x, y) \in D_{\text{train,test}}^{i}\,; (x, y) \sim \mc{D}^{i}$ & Data on client $i$ sampled from distribution $\mc{D}^{i}$ \\
 & ($x$: input data, $y$: class label) \\
$\mc{O}^i$ & Dataset consists of observed classes in client $i$ \\
$\mc{U}^i$ & Dataset consists of unobserved classes in client $i$ \\

\midrule
\textbf{Parameters} & \\ 
$\bm{\theta}$ & Feature extractor weight parameters \\
$\bm{V} = [v_1, \ldots, v_C] \in \mathbb{R}^{C \times d}$ & Classifier weight parameters (frozen during training) \\
$v_c, c \in [C]$ & $c$-th row vector of $\bm{V}$ \\
$\bm{\Theta} = (\bm{\theta}, \bm{V})$ & All model parameters \\
$\bm{\Theta}_{r}^{g} = (\bm{\theta}_{r}^{g}, \bm{V})$ & Aggregated global model parameters at round $r$ \\
$\bm{\Theta}_{r}^{i}= (\bm{\theta}_{r}^{i}, \bm{V})$ & Trained model parameters on client $i$ at round $r$ \\

\midrule
\textbf{Model Forward} & \\
$p(x; \bm{\theta}) \in \mathbb{R}^{C}$ & Softmax probability of input $x$ \\ 
$p_c(x; \bm{\theta}), c \in [C]$ & $c$-th element of $p(x; \bm{\theta})$ \\ 
$\mathcal{L}_{\text{CE}}(x; \theta) = -\log p_{y}(x; \bm{\theta})$ & Cross-entropy loss of input $x$ \\
$f(x; \bm{\theta}) \in \mathbb{R}^{d}$ & Feature vector of input $x$ \\ 
$z(x; \bm{\theta}) = f(x; \bm{\theta}) \bm{V}^\top \in \mathbb{R}^{C}$ & Logit vector of input $x$ \\ 
$z_c(x; \bm{\theta}), c \in [C]$  & $c$-th element of $z(x; \bm{\theta})$ \\
\bottomrule
\end{tabular}
}
\end{table}

\newpage
\subsection{Pseudo Code of \alg and \textbf{{\alg}\,FT}} \label{ch3app:pseudo_code}

We now present the pseudocode for \alg and \textbf{{\alg}\,FT}, outlining their key operations for global and personalized federated learning. The algorithm consists of two main stages:

\begin{figure}[H] 
\centering
\resizebox{\textwidth}{!}{ 
\begin{minipage}{\textwidth} 
\begin{algorithm}[H]
\caption{\alg, \textbf{{\alg}\,FT}}
\label{ch3alg:feddr+}
\KwIn{Total rounds $R$, local epochs $E$, training dataset $D_{\text{train}}^{i}$ for client $i$, sampled client set $N^{(r)} \subset [N]$ at round $r$, learning rate $\eta^{(r)}$ at round $r$}

\textbf{Initial Parameters:}  ETF Classifier $\mathbf{V}$, 
Initial global model parameters $\boldsymbol{\Theta}_{0}^{g} = (\boldsymbol{\theta}_{0}^{g}, \mathbf{V})$\\

\For{$i = 1, \dots, N$}{
Server broadcasts $\mathbf{V}$ to client $i$}
\quad \textbf{\code{/** STEP 1: Get a GFL Model $\boldsymbol{\Theta}_{R}^{g}$ of \alg **/}}\\ 
\For{$r = 1,\dots, R$}{
    Server samples clients $N^{(r)}$ and broadcasts $\boldsymbol{\theta}_{r}^{i} \leftarrow \boldsymbol{\theta}_{r-1}^{g}$\;
    \For{each client $i \in N^{(r)}$ \textbf{in parallel}}{
        \For{Local Steps $e = 1, \dots, E$}{
            \For{Batches $j = 1, \dots, B$}{
                $\boldsymbol{\theta}_{r}^{i} \leftarrow \boldsymbol{\theta}_{r}^{i} - \eta^{(r)} \nabla \mc{L}_{\text{Dr+}}([D_{\text{train}}^{i}]_j; \boldsymbol{\theta}_{r}^{i}, \boldsymbol{\theta}_{r-1}^{g}, \bm{V})$ \hfill \textit{Using [Equation \eqref{ch3alg:feddr+}]}
            }
        }
        Upload $\boldsymbol{\theta}_{r}^{i}$ to server\;
    }
    \textbf{Server Aggregation:} 
    $\boldsymbol{\theta}_{r}^{g} \leftarrow \frac{1}{|N^{(r)}|} \sum_{i \in N^{(r)}} \boldsymbol{\theta}_{r}^{i}$\;
}
\textbf{GFL output:} $\boldsymbol{\Theta}_{R}^{g}=(\boldsymbol{\theta}_{r}^{g}, \boldsymbol{V})$\\

\quad \textbf{\code{/** STEP 2: Get a PFL Models $\{\boldsymbol{\Theta}_{R+1}^{i}\}_{i=1}^{N}$ of \textbf{{\alg}\,FT} **/}}\\ 
\For{$i = 1, \dots, N$}{
Server broadcasts $\boldsymbol{\theta}_{R+1}^{i} \leftarrow \boldsymbol{\theta}_{R}^{g}$ to client $i$\\
        \For{Local Steps $e = 1, \dots, E$}{
            \For{Batches $j = 1, \dots, B$}{
                $\boldsymbol{\theta}_{R+1}^{i} \leftarrow \boldsymbol{\theta}_{R+1}^{i} - \eta^{(R)} \nabla \mc{L}_{\text{Dr+}}([D_{\text{train}}^{i}]_j; \boldsymbol{\theta}_{R}^{i}, \boldsymbol{\theta}_{R}^{g}, \bm{V})$ \hfill \textit{Using [Equation \eqref{ch3alg:feddr+}]}
            }}
}
\textbf{PFL outputs:} $\{\boldsymbol{\Theta}_{R+1}^{i}=(\boldsymbol{\theta}_{R+1}^{i}, \boldsymbol{V})\}_{i=1}^{N}$\\

\end{algorithm}
\end{minipage}
}
\end{figure}

\newpage
\subsection{Preliminaries: Pulling and Pushing Feature Gradients in CE}
\label{ch3app:prelim}

In this subsection, we first compute the classifier's gradient with respect to the features. Next, we explain how the cross-entropy loss draws the pulling and pushing effects.

\subsubsection{Feature Gradient of $\mathcal{L}_{\text{CE}}$} \label{ch3appsubsec:prop}

We begin by presenting two lemmas that support Proposition~\ref{ch3appprop:pull_push} and clarify pulling and pushing feature gradients in the cross-entropy (CE) loss.

\begin{lem}\label{ch3applem:derivative_logit}    
    For all $c,c'\in[C], \displaystyle \frac{\partial p_{c'}(x;\bm{\theta})}{\partial z_c(x;\bm{\theta})}=
    \begin{cases}
        p_c(x;\bm{\theta}) \cdot (1-p_c(x;\bm{\theta})) &\text{if}\:c=c'\\
        -p_c(x;\bm{\theta}) \cdot p_{c'}(x;\bm{\theta}) &\text{otherwise}   
    \end{cases}.$
\end{lem}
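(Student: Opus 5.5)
The plan is a direct computation from the definition of the softmax. We write $p_{c'}(x;\bm{\theta}) = \exp(z_{c'})/S$, where $S=\sum_{k\in[C]}\exp(z_k)$ and $z_k = z_k(x;\bm{\theta})$. The derivative is taken with respect to the logit coordinate $z_c$, treating the logit vector as the independent variable. The dependence on $\bm{\theta}$ enters only through $z$, so it plays no role here.

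The first step is to record the auxiliary derivative $\partial S/\partial z_c = \exp(z_c)$. This holds because only the $k=c$ term of the sum depends on $z_c$.

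Next, we apply the quotient rule to $p_{c'}=\exp(z_{c'})/S$:
\begin{equation*}
\frac{\partial p_{c'}}{\partial z_c} = \frac{\mathbb{1}[c=c']\exp(z_{c'})\,S - \exp(z_{c'})\exp(z_c)}{S^2} = \mathbb{1}[c=c']\,p_{c'} - p_{c'}\,p_c .
\end{equation*}
Splitting into cases finishes the argument:
\begin{itemize}
\item If $c=c'$, the indicator equals one and the expression becomes $p_c - p_c^2 = p_c(1-p_c)$.
\item If $c\neq c'$, the indicator vanishes and the expression becomes $-p_c\,p_{c'}$.
\end{itemize}
These are exactly the two cases in Lemma~\ref{ch3applem:derivative_logit}.

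There is no genuine obstacle. The one point needing care is the bookkeeping between the numerator's dependence on $z_c$ and the normalizer's. The numerator $\exp(z_{c'})$ depends on $z_c$ only when $c=c'$, whereas the normalizer $S$ always does. Handling both through the single indicator term avoids error-prone separate case work.
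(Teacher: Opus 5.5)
Your proof is correct and follows essentially the same route as the paper: a direct quotient-rule differentiation of the softmax definition. The only difference is cosmetic: the paper splits into the cases $c=c'$ and $c\neq c'$ before differentiating, whereas you treat both at once with an indicator term and split afterward.
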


\begin{proof}
    Note that $\displaystyle p(x;\bm{\theta})=\bigg[\frac{\exp(z_j(x;\bm{\theta}))}{\sum_{i=1}^{C}\exp(z_i(x;\bm{\theta}))}\bigg]_{j=1}^{C}\in\mathbb{R}^{C}$.
    Then,

    \begin{enumerate}[label=(\roman*)]
        \item $c=c'$ case:
            \begin{align*}
                \frac{\partial p_c(x;\bm{\theta})}{\partial z_c(x;\bm{\theta})}&=\frac{\partial }{\partial z_c(x;\bm{\theta})}\left\{\frac{\exp(z_c(x;\bm{\theta}))}{\sum_{i=1}^{C}\exp(z_i(x;\bm{\theta}))}\right\}=\frac{\exp(z_c(x;\bm{\theta}))\left(\sum_{i=1}^{C}\exp(z_i(x;\bm{\theta}))\right)-{\exp(z_c(x;\bm{\theta}))}^2}{\left(\sum_{i=1}^{C}\exp(z_i(x;\bm{\theta}))\right)^2}\\
                              &=p_c(x;\bm{\theta})-{p_c(x;\bm{\theta})}^2 =p_c(x;\bm{\theta})(1-p_c(x;\bm{\theta})).
                \end{align*}
        \item $c\neq c'$ case:
            \begin{align*}
                \frac{\partial p_{c'}(x;\bm{\theta})}{\partial z_c(x;\bm{\theta})}&=\frac{\partial }{\partial z_c(x;\bm{\theta})}\left\{\frac{\exp(z_{c'}(x;\bm{\theta}))}{\sum_{i=1}^{C}\exp(z_i(x;\bm{\theta}))}\right\}=\frac{-\exp(z_{c}(x;\bm{\theta}))\exp(z_{c'}(x;\bm{\theta}))}{\left(\sum_{i=1}^{C}\exp(z_i(x;\bm{\theta}))\right)^2}\\
                              &=-p_c(x;\bm{\theta})p_{c'}(x;\bm{\theta}).
            \end{align*}   
    \end{enumerate} 
\end{proof}

\begin{lem}\label{ch3applem:gradient_logit}
 $\nabla_{z(x;\bm{\theta})}\mathcal{L}_\text{CE}(x,y;\bm{\theta})=p(x;\bm{\theta})-\mathbf{e}_{y}$, where $\mathbf{e}_{y}\in\mathbb{R}^{C}$ is the unit vector with its $y$-th element as 1. 
\end{lem}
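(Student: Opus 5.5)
We compute the gradient coordinatewise by the chain rule, using Lemma~\ref{ch3applem:derivative_logit}. Since $\mathcal{L}_\text{CE}(x,y;\bm{\theta})=-\log p_y(x;\bm{\theta})$ depends on the logit vector only through $p_y$, for each $c\in[C]$ we have
\begin{equation*}
\frac{\partial \mathcal{L}_\text{CE}(x,y;\bm{\theta})}{\partial z_c(x;\bm{\theta})}=-\frac{1}{p_y(x;\bm{\theta})}\cdot\frac{\partial p_y(x;\bm{\theta})}{\partial z_c(x;\bm{\theta})}.
\end{equation*}
The plan is to split into the two cases $c=y$ and $c\neq y$, mirroring the case split of Lemma~\ref{ch3applem:derivative_logit} with $c'=y$.

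In the case $c=y$, Lemma~\ref{ch3applem:derivative_logit} gives $\partial p_y/\partial z_y=p_y(1-p_y)$. The partial derivative is therefore $-(1-p_y)=p_y-1$. In the case $c\neq y$, the lemma gives $\partial p_y/\partial z_c=-p_c p_y$. The partial derivative is then $p_c$. Both cases are covered by the single expression $p_c(x;\bm{\theta})-[\mathbf{e}_y]_c$, since $[\mathbf{e}_y]_c=\mathbb{1}[c=y]$. Stacking these $C$ coordinates into a vector gives $\nabla_{z(x;\bm{\theta})}\mathcal{L}_\text{CE}=p(x;\bm{\theta})-\mathbf{e}_y$.

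There is no real obstacle. The only care needed is to divide by $p_y$, which is legitimate because softmax probabilities are strictly positive, and to keep the index roles $c'=y$ consistent with the lemma's statement.
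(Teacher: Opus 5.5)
Your proposal is correct and follows essentially the same route as the paper's proof: the chain rule through $-\log p_y$, Lemma~\ref{ch3applem:derivative_logit} with $c'=y$ applied in the cases $c=y$ and $c\neq y$, and then recombining into $p_c-\mathbb{1}\{c=y\}$. Your remark that dividing by $p_y$ is legitimate because softmax probabilities are strictly positive is a small point the paper leaves implicit.
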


\begin{proof}
    \begin{align*}
        \frac{\partial\mathcal{L}_\text{CE}(x,y;\bm{\theta})}{\partial z_c(x;\bm{\theta})}&=- \frac{\partial}{\partial z_c(x;\bm{\theta})}\log p_y(x;\bm{\theta})=-\frac{1}{p_y(x;\bm{\theta})}\frac{\partial p_y(x;\bm{\theta})}{\partial z_c(x;\bm{\theta})}\notag\\
        &=\begin{cases}
            p_c(x;\bm{\theta})-1 &\text{if }c=y\\
            p_c(x;\bm{\theta}) &\text{else }   
            \end{cases} =p_c(x;\bm{\theta})-\mathds{1}\{c=y\}. 
    \end{align*}
    The last equality holds by Lemma~\ref{ch3applem:derivative_logit}.
    Therefore, the desired result is satisfied.
\end{proof}

\begin{prop}\label{ch3appprop:pull_push}
    Given $(x,y)$, the gradient of the $\mathcal{L}_\text{CE}$ with respect to $f(x;\bm{\theta})$ is given by:
    \begin{equation}
    \nabla_{f(x;\bm{\theta})}\mathcal{L}_\text{CE}(x,y;\bm{\theta})= - (1-p_{y}(x;\bm{\theta}))v_{y} + \sum_{c\in [C]\setminus \{y\}} p_{c}(x;\bm{\theta})v_{c} \\ 
    \end{equation}
\end{prop}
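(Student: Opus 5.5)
The plan is a direct chain-rule computation that reuses Lemma~\ref{ch3applem:gradient_logit}. The logit vector is linear in the feature, $z(x;\bm{\theta}) = f(x;\bm{\theta})\bm{V}^\top$, so each coordinate is $z_c(x;\bm{\theta}) = v_c^\top f(x;\bm{\theta})$. Consequently, the Jacobian of $z$ with respect to $f$ is $\bm{V}$: its $c$-th row is $v_c^\top$. In this computation the classifier is frozen and only $f$ varies, so no other terms enter.

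First, write the chain rule as
\begin{equation*}
\nabla_{f(x;\bm{\theta})}\mathcal{L}_\text{CE}(x,y;\bm{\theta}) = \sum_{c\in[C]} \frac{\partial \mathcal{L}_\text{CE}(x,y;\bm{\theta})}{\partial z_c(x;\bm{\theta})}\, \nabla_{f(x;\bm{\theta})} z_c(x;\bm{\theta}) = \sum_{c\in[C]} \frac{\partial \mathcal{L}_\text{CE}(x,y;\bm{\theta})}{\partial z_c(x;\bm{\theta})}\, v_c .
\end{equation*}
Second, substitute Lemma~\ref{ch3applem:gradient_logit}, which gives $\partial \mathcal{L}_\text{CE}/\partial z_c = p_c(x;\bm{\theta}) - \mathds{1}\{c=y\}$. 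This yields
\begin{equation*}
\nabla_{f(x;\bm{\theta})}\mathcal{L}_\text{CE}(x,y;\bm{\theta}) = \sum_{c\in[C]} \bigl(p_c(x;\bm{\theta}) - \mathds{1}\{c=y\}\bigr) v_c .
\end{equation*}
Third, split the sum into the term $c=y$ and the terms $c\neq y$. The $c=y$ term has coefficient $p_y(x;\bm{\theta})-1$, which gives $-(1-p_y(x;\bm{\theta}))v_y$. Each remaining term is $p_c(x;\bm{\theta})v_c$. Together these are exactly the claimed expression.

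There is no substantive obstacle here. The only point that needs care is the shape convention. The table writes $\bm{V}\in\mathbb{R}^{C\times d}$ with rows $v_c$, while the FL setup writes $\bm{V}\in\mathbb{R}^{d\times C}$. I will therefore work coordinatewise with $z_c = \langle v_c, f\rangle$ to avoid transposition ambiguity. With that convention, $\nabla_f z_c = v_c$ holds regardless of the layout of $\bm{V}$.

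Finally, I would remark on the interpretation. The first term has a negative coefficient on $v_y$, so a descent step pulls $f$ toward $v_y$. The remaining terms have positive coefficients on $v_c$ for $c\neq y$, so a descent step pushes $f$ away from those directions. This is the pulling/pushing decomposition used earlier to motivate $\mathcal{L}_\text{DR}$.
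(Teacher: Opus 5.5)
Your proposal is correct and follows essentially the same route as the paper: the chain rule through the logits with $\nabla_{f(x;\bm{\theta})} z_c(x;\bm{\theta}) = v_c$, Lemma~\ref{ch3applem:gradient_logit} to obtain the coefficients $p_c(x;\bm{\theta}) - \mathds{1}\{c=y\}$, and a split into the $c=y$ term and the $c\neq y$ terms. Writing $z_c = \langle v_c, f(x;\bm{\theta})\rangle$ coordinatewise is a sensible way to avoid the paper's inconsistent shape convention for $\bm{V}$.
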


\begin{proof}
\begin{align*}   
        \nabla_{f(x;\bm{\theta})}\mathcal{L}_\text{CE}(x,y;\bm{\theta})& 
        {=}\Big[\nabla_{f(x;\bm{\theta})}z_{1}(x;\bm{\theta})\,\big|\,\cdots\,\big|\,\nabla_{f(x;\bm{\theta})}z_{C}(x;\bm{\theta})\Big]\nabla_{z(x;\bm{\theta})}\mathcal{L}_\text{CE}(x,y;\bm{\theta})\\
            &=\sum_{c=1}^{C}\frac{\partial \mathcal{L}_\text{CE}(x,y;\bm{\theta})}{\partial z_{c}(x;\bm{\theta})}\nabla_{f(x;\bm{\theta})}z_{c}(x;\bm{\theta})\\
            &=\frac{\partial \mathcal{L}_\text{CE}(x,y;\bm{\theta})}{\partial z_{y}(x;\bm{\theta})}\nabla_{f(x;\bm{\theta})}z_{y}(x;\bm{\theta})+\sum_{c\in [C]\setminus \{y\}}\frac{\partial \mathcal{L}_\text{CE}(x,y;\bm{\theta})}{\partial z_{c}(x;\bm{\theta})}\nabla_{f(x;\bm{\theta})}z_{c}(x;\bm{\theta})\\
            &=\frac{\partial \mathcal{L}_\text{CE}(x,y;\bm{\theta})}{\partial z_{y}(x;\bm{\theta})}v_{y}+\sum_{c\in [C]\setminus \{y\}}\frac{\partial \mathcal{L}_\text{CE}(x,y;\bm{\theta})}{\partial z_{c}(x;\bm{\theta})}v_{c}\\
            &= - (1-p_{y}(x;\bm{\theta}))v_{y} + \sum_{c\in [C]\setminus \{y\}} p_{c}(x;\bm{\theta})v_{c}.\\   
\end{align*}
Applying the chain rule for the second step and invoking Lemma~\ref{ch3applem:gradient_logit} for the final equality confirms the result.

\subsubsection{Physical Meaning of $\nabla_{f(x;{\theta})}\mathcal{L}_\text{CE}(x,y;{\theta})$}


The gradient $ \nabla_{f(x;\bm{\theta})}\mathcal{L}_\text{CE}(x,y;\bm{\theta}) $ consists of two components:  
\begin{align*}
    \mathbf{F}_\text{Pull} &= \big(1 - p_{y}(x;\bm{\theta})\big) v_{y}, \\
    \mathbf{F}_\text{Push} &= -\kern-1em\sum_{c \in [C] \setminus \{y\}} p_{c}(x;\bm{\theta}) v_{c}.
\end{align*}
$\mathbf{F}_\text{Pull}$ moves the feature vector towards the classifier vector $ v_y $ of the true class, promoting alignment.  
In contrast, $\mathbf{F}_\text{Push}$ moves it away from the classifier vectors $ v_c $ for $ c \in [C] \setminus \{y\} $, inducing misalignment.

\end{proof}

\newpage

\subsection{Theoretical Perspective of Dot-Regression (DR)}\label{ch3app:theory_dr}

In this subsection, we provide a theoretical analysis of dot-regression (DR) loss in the context of feature-classifier alignment. We first derive the feature gradient of $\mathcal{L}_{\mathrm{DR}}$ and analyze its effect on feature updates. We then present an NTK-based perspective explaining why dot-regression struggles with unobserved classes in FL. Finally, we compare DR with cross-entropy (CE) loss to highlight its limitations and the necessity of feature distillation.

\subsubsection{Feature Gradient of $\mathcal{L}_{\mathrm{DR}}$}

We derive the gradient of dot-regression loss with respect to the feature vector on the observed classes. 

\begin{thm}\label{ch3appthm:feature_gradient_dr}
    Given $(x,y)$, the gradient of the $\mathcal{L}_{\mathrm{DR}}$ with respect to $f(x;\bm{\theta}_{f})$ is given by:
    \[
    \nabla_{f(x;\bm{\theta}_{f})}\mathcal{L}_{\mathrm{DR}}(x,y;\theta)=-\frac{1-\cos\alpha}{{\|f(x;\bm{\theta}_{f})\|}_{2}}\left\{V_{y}-\cos{\alpha}\,\frac{f(x;\bm{\theta}_{f})}{{\|f(x;\bm{\theta}_{f})\|}_{2}}\right\}\,,
    \]
where $\displaystyle \cos{\alpha} = \frac{f(x;\bm{\theta}_{f})^\top}{{\|f(x;\bm{\theta}_{f})\|}_{2}} V_y$.
\end{thm}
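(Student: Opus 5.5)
Writing $f = f(x;\bm{\theta}_f)$, $\hat f = f/\|f\|_2$ and $\cos\alpha = \hat f^\top V_y$, the loss is $\mathcal{L}_{\mathrm{DR}} = \tfrac12(\cos\alpha - 1)^2$. By the chain rule,
\[
\nabla_f \mathcal{L}_{\mathrm{DR}} = (\cos\alpha - 1)\,\nabla_f \cos\alpha = -(1-\cos\alpha)\,\nabla_f\cos\alpha .
\]
So the whole proof reduces to computing the gradient of the cosine with respect to $f$, much as the proof of Proposition~\ref{ch3appprop:pull_push} reduced to the logit gradient via Lemma~\ref{ch3applem:gradient_logit}. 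One normalization point comes first. The claimed formula contains $V_y$ and not $V_y/\|V_y\|_2$, so $\cos\alpha = \hat f^\top V_y$ is a genuine cosine only when $\|V_y\|_2 = 1$. I will therefore state that the classifier rows are unit norm. This is consistent with the simplex ETF construction once each column is normalized, and the ETF scaling $\sqrt{C/(C-1)}$ is exactly what makes the columns of $\bm{U}(\bm{I}_C - \frac1C\bm{1}_C\bm{1}_C^\top)$ unit norm. Alternatively, I can simply read $V_y$ as the normalized row.

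The key computation is the Jacobian of normalization. Starting from $\nabla_f \|f\|_2 = f/\|f\|_2$, the quotient rule gives
\[
\frac{\partial \hat f}{\partial f} = \frac{1}{\|f\|_2}\big(I_d - \hat f\hat f^\top\big),
\]
which is symmetric. Hence
\[
\nabla_f(\hat f^\top V_y) = \frac{1}{\|f\|_2}\big(V_y - \hat f\,\hat f^\top V_y\big) = \frac{1}{\|f\|_2}\big(V_y - \cos\alpha\,\hat f\big).
\]
Substituting this into the chain-rule expression above gives exactly the stated formula.

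No step is genuinely hard. The only points requiring care are the unit-norm convention on $V_y$ and checking that the projector term comes out with the correct sign and scaling by $1/\|f\|_2$. To guard against a sign error, I will verify componentwise: the derivative of $f_j/\|f\|_2$ with respect to $f_k$ is $\delta_{jk}/\|f\|_2 - f_j f_k/\|f\|_2^3$. Finally, I will remark on the structure of the result. The gradient lies in the tangent space orthogonal to $f$, and it contains only a pull toward $V_y$, with no push term. This contrasts with Proposition~\ref{ch3appprop:pull_push}, and the remark motivates the NTK discussion that follows.
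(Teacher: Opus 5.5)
Your proposal is correct and follows the paper's proof essentially step for step: the chain rule on the square, then the normalization Jacobian $\frac{1}{\|f\|_2}\bigl(I - \hat f\hat f^\top\bigr)$ applied to $V_y$, then substitution of $\cos\alpha = \hat f^\top V_y$. Your explicit unit-norm convention on $V_y$, which the ETF scaling $\sqrt{C/(C-1)}$ does guarantee, is the assumption the paper makes implicitly when it writes the loss as $\frac12\bigl(\hat f^\top V_y - 1\bigr)^2$.
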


\begin{proof}
    {
    \begin{align*}   
        \nabla_{f(x;\bm{\theta}_{f})}\mathcal{L}_{\mathrm{DR}}(x,y;\bm{\theta}) &=\nabla_{f(x;\bm{\theta}_{f})}\left\{\frac{1}{2} \left(\frac{f(x;\bm{\theta}_{f})^{T}}{{\|f(x;\bm{\theta}_{f})\|}_{2}} V_{y}-1\right)^{2} \right\}\\            
            &=\left(\frac{f(x;\bm{\theta}_{f})^{T}}{{\|f(x;\bm{\theta}_{f})\|}_{2}} V_{y}-1\right)\nabla_{f(x;\bm{\theta}_{f})}\frac{f(x;\bm{\theta}_{f})^{T}}{{\|f(x;\bm{\theta}_{f})\|}_{2}} V_{y}\\
            &=\left(\frac{f(x;\bm{\theta}_{f})^{T}}{{\|f(x;\bm{\theta}_{f})\|}_{2}} V_{y}-1\right)\left[\frac{1}{{\|f(x;\bm{\theta}_{f})\|}_{2}}\left\{I-\frac{f(x;\bm{\theta}_{f}){f(x;\bm{\theta}_{f})}^{T}}{{\|f(x;\bm{\theta}_{f})\|}_{2}^{2}} \right\}V_{y}\right]\\
            &= -\frac{1-\cos\alpha}{{\|f(x;\bm{\theta}_{f})\|}_{2}}\left\{V_{y}-\cos{\alpha}\frac{f(x;\bm{\theta}_{f})}{{\|f(x;\bm{\theta}_{f})\|}_{2}}\right\}.\\                
        \end{align*}
    }
\end{proof}

\subsubsection{Physical Meaning of $\nabla_{f(x;{\theta})}\mathcal{L}_\text{DR}(x,y;{\theta})$} 

\begin{figure}[h!]
    \centering
    \includegraphics[width=0.75\linewidth]{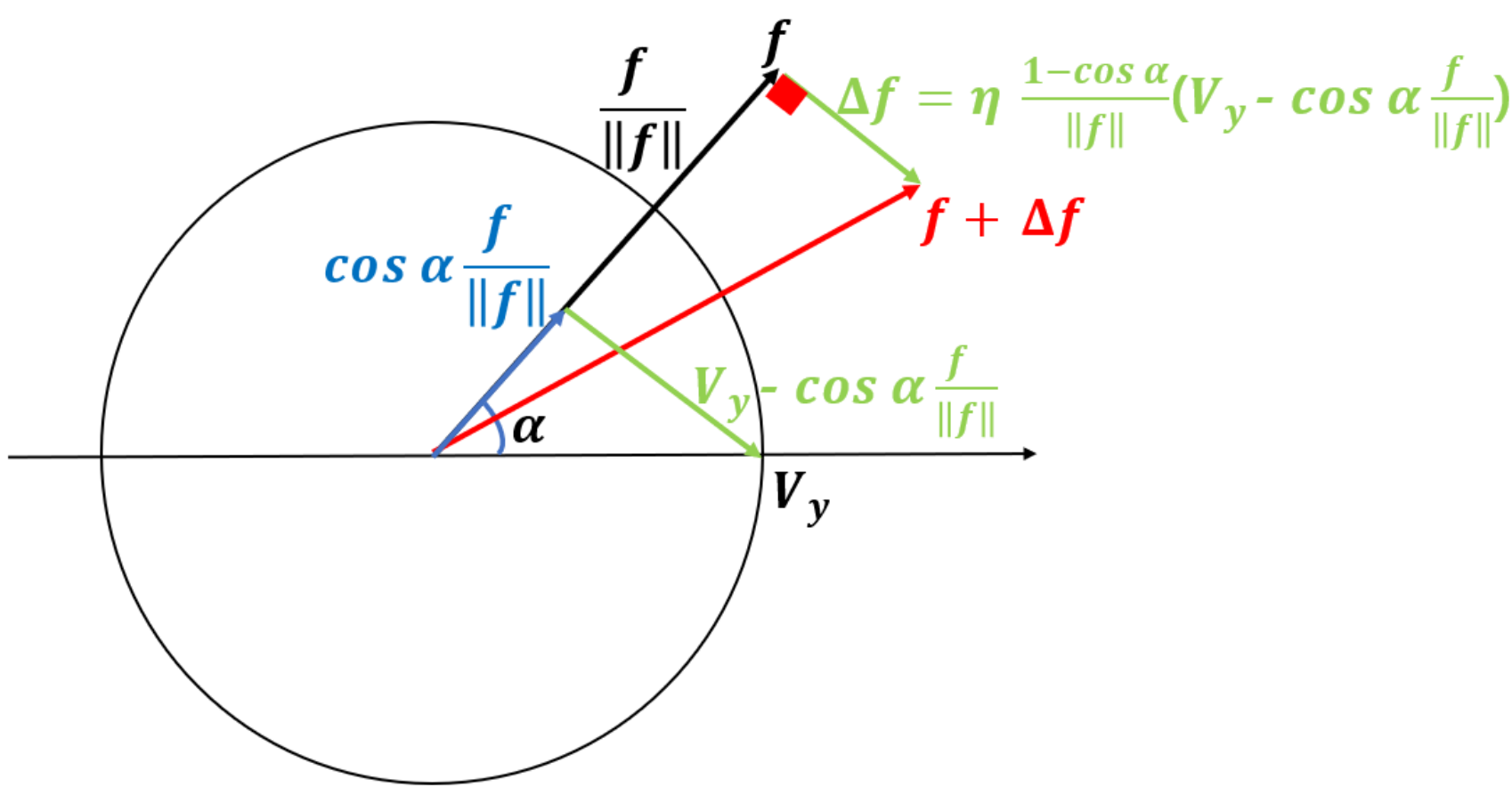}
    \caption{Feature gradient of $\mathcal{L}_{\text{DR}}$. The gradient update rotates $f(x;\bm{\theta}_f)$ toward $V_y$ while increasing its norm. As training progresses, the update magnitude decreases, leading to convergence.}
    \label{ch3appfig:feature_gradient_dr}
\end{figure}





According to Theorem~\ref{ch3appthm:feature_gradient_dr}, the change in the feature vector $ \Delta f(x;\bm{\theta}_{f}) $ is given by:
\begin{equation*}
    \Delta f(x;\theta_f) = \eta\,\frac{1-\cos\alpha}{\|f(x;\theta_f)\|_2} \left( V_y - \cos\alpha \frac{f(x;\theta_f)}{\|f(x;\theta_f)\|_2} \right),
\end{equation*}
where $ \eta $ is the learning rate and $ \alpha $ is the angle between the feature vector $ f(x;\bm{\theta}_{f}) $ and the target vector $ V_y $.  

The term inside the parentheses, $ V_y - \cos{\alpha} \frac{f(x;\bm{\theta}_{f})}{\|f(x;\bm{\theta}_{f})\|_2} $, represents a component orthogonal to $ f(x;\bm{\theta}_{f}) $ that points towards $ V_y $. This component adjusts $ f(x;\bm{\theta}_f) $ to increase its cosine similarity with $ V_y $ while also expanding its norm.

The scaling factor $ \frac{1-\cos\alpha}{\|f(x;\bm{\theta}_f)\|_2} $ determines the update magnitude. As training progresses, $ f(x;\bm{\theta}_f) $ aligns more closely with $ V_y $, reducing $ 1-\cos\alpha $ and increasing $ \|f(x;\bm{\theta}_f)\|_2 $. Consequently, $ \Delta f(x;\bm{\theta}_f) $ diminishes over time, reflecting convergence as the cosine similarity with $ V_y $ approaches its maximum.

Figure~\ref{ch3appfig:feature_gradient_dr} illustrates this process, showing how the orthogonal component drives both the rotation and scaling of $ f(x;\bm{\theta}_f) $ toward alignment with $ V_y $.

\subsection{NTK Perspective: Why Dot-Regression in FL Fails on Unobserved Classes} \label{ch3app:ntk}

In this subsection, we analyze why dot-regression struggles with unobserved classes under the Neural Tangent Kernel (NTK) regime~\citep{NEURIPS2018_5a4be1fa}. In the NTK regime, the feature gradient of any input is a weighted sum of the feature gradients from training samples. Assuming the network width is sufficiently wide, these weights depend only on the pair of inputs, the initialization distribution, such as He initialization~\citep{pmlr-v9-glorot10a}, and the activation functions.\footnote{In practice, finite-width effects cause deviations from the ideal NTK behavior.} The NTK regime holds when the setting where every layer in the neural network has infinite width, with parameters initialized i.i.d. This section explains how NTK-based gradient updates fail to align feature vectors with unobserved class directions, which leads to poor generalization in FL.

\subsubsection{Gradient Flow in the NTK Regime}

We treat gradient descent as a continuous process. $P$ is the number of trainable parameters in the feature extractor, and $\theta_p$ ($p \in [P]$) denote each parameter. We focus on a specific client, denoted by $i$.

During training, gradient descent updates the model parameters to minimize the loss function. As below, we can see the evolution of the function $f(x; \bm{\theta}(t))$ can be analyzed using the kernel $\Theta^{(L)}(t)(x, x_i)$, which evolves along the training process:

\begin{align*}
\frac{\mathrm{d} f(x;\bm{\theta}(t))}{\mathrm{d} t} &= \sum_{p=1}^P \Big(\frac{\partial f(x;\bm{\theta}(t))}{ \partial\theta_p} \Big)^\top \frac{\mathrm{d} \theta_p}{\mathrm{d} t}
\tag{Chain Rule}\\&=- \sum_{p=1}^P  \Big(\frac{\partial f(x;\bm{\theta}(t))}{ \partial\theta_p} \Big)^\top \frac{1}{|D_{\text{train}}^{i}|} \kern-0.2em\sum_{(\tilde{x},\tilde{y})\in D_{\text{train}}^{i}} \kern-1em\frac{\partial f(\tilde{x};\bm{\theta}(t))}{ \partial\theta_p} \nabla_{f(x_i;\theta)} \mathcal{L}(\tilde{x},\tilde{y};\bm{\theta}) \tag{Gradient Descent}\\
&= -\frac{1}{|D_{\text{train}}^{i}|}\kern-0.2em\sum_{(\tilde{x},\tilde{y})\in D_{\text{train}}^{i}} \kern-0.5em \Big( {\underbrace{\sum_{p=1}^P \Big( \frac{\partial f(x;\bm{\theta}(t))}{ \partial\theta_p} \Big)^\top \frac{\partial f(\tilde{x};\bm{\theta}(t))}{ \partial\theta_p}}_{\bm{\Theta}^{(L)}(t)(x,\tilde{x}) \in \mathbb{R}^{d \times d}} }\Big) \nabla_{f(\tilde{x};\theta)} \mathcal{L}(\tilde{x},\tilde{y};\bm{\theta}) \\
&= -\frac{1}{|D_{\text{train}}^{i}|} \kern-0.2em\sum_{(\tilde{x},\tilde{y})\in D_{\text{train}}^{i}} \kern-1em \bm{\Theta}^{(L)}(t)(x,\tilde{x}) \nabla_{f(\tilde{x};\theta)} \mathcal{L}(\tilde{x},\tilde{y};\bm{\theta})\,.
\end{align*} 

In the NTK regime with infinitely large widths, the matrix $\bm{\Theta}^{(L)}(t)(x,\tilde{x})$ converges to a scalar multiple of the identity matrix, $\Theta^{(L)}_\infty(x,\tilde{x}) \mathbf{I}$. Furthermore, with the same condition, this scalar kernel remains constant throughout training~\citep{NEURIPS2018_5a4be1fa, yang2020scalinglimitswideneural,belfer2021spectralanalysisneuraltangent}, though finite-width effects may introduce small variations. In the NTK regime, the gradient descent dynamics are given by:

\begin{equation}
\label{ch3eq:ntkgd}
\frac{\mathrm{d} f(x;\bm{\theta}(t))}{\mathrm{d} t} =   -\frac{1}{|D_{\text{train}}^{i}|} \kern-0.2em\sum_{(\tilde{x},\tilde{y})\in D_{\text{train}}^{i}} \kern-1em  \underbrace{\Theta^{(L)}_\infty (x,\tilde{x})}_{\,\in\, \mathbb{R}} 
 \nabla_{f(x_i;\theta)} \mathcal{L}(\tilde{x},\tilde{y};\bm{\theta})\,. \tag{in NTK Regime}
\end{equation}

Thus, $ \Theta^{(L)}_\infty (x,\tilde{x})$ determines how each training sample $\tilde{x}$ influences an arbitrary input $x$, and in NTK regime, this weight depends only on the initialization distribution.

In Federated Learning (FL), local models are independently updated on different clients before aggregation. Under the NTK regime, each client follows the gradient flow during local training. FL aggregation then combines feature representations learned from different data distributions, leading to shifts in the global feature representation. By aggregating updates from multiple clients, FL integrates feature information from clients that have observed missing classes, thereby improving feature alignment.

\subsubsection{Limitations of Dot-Regression loss in FL under the NTK Regime}

Dot-regression loss~\citep{yang2022inducing} speeds up the alignment of feature vectors
$f(x; \bm{\theta}) \in \mathbb{R}^{d}$ (pre-classifier layer outputs) with the true class direction $v_y$ by minimizing the cosine angle:
\[
    \mc{L}_{\text{DR}} (x, y; \bm{\theta}, \bm{V}) = \frac{1}{2}\Big(\cos\big(f(x; \bm{\theta}), v_y\big) - 1\Big)^2\,.
\]

This loss function is motivated by the decomposition of cross-entropy (CE) loss gradients into \emph{pulling} and \emph{pushing} components. Prior work suggests that removing the pushing effect in CE can improve convergence~\citep{yang2022inducing, li2021fedrs}.

Let $c$ be an unobserved class for a specific client $i$ with the classifier vector $v_c$. From Theorem\autoref{ch3appthm:feature_gradient_dr}, it follows that under the NTK regime, the gradient descent process on the client $i$ is independent of $v_c$ for arbitrary input $x$.

To analyze this, we first express the feature gradient under the dot-regression loss $\mathcal{L}_{\mathrm{DR}}$ in the local learning stage. For simplicity, we omit the dependence on $\bm{\theta}(t)$ in the feature notation and write $\cos(f(\tilde{x};\bm{\theta}(t)), v_y)$ as $\cos(f(\tilde{x}), v_y)$. The feature gradient is given by:

\[
\frac{\mathrm{d} f(x)}{\mathrm{d} t} =   \frac{1}{|D_{\text{train}}^{i}|} \sum_{y\in\mathcal{O}^i} \kern-0.6em\sum_{\kern0.6em(\tilde{x},y)\in D_{\text{train}}^{i}} \kern-1.5em \Theta^{(L)}_\infty (x,\tilde{x}) \frac{1-\cos(f(\tilde{x}),\!v_y)}{\| f(\tilde{x})\|_2} \Big( v_y - \cos(f(\tilde{x}),\!v_y) \frac{f(\tilde{x})}{\| f(\tilde{x})\|_2} \Big). \tag{in NTK Regime}
\]

Since $c \notin \mathcal{O}^i$, the feature gradient evaluated on training data does not depend on $v_c$. Given that feature gradients are a weighted sum over training data in the NTK regime, this implies that the learned feature representation for an arbitrary input remains unaffected by $v_c$ during local training.

Therefore, dot-regression cannot align features with unobserved classes in local training. To examine this effect more closely, consider two cases $f_1(x)$ and $f_2(x)$ with the same input $x$ with label $c$, whose settings and initialization at time $t=0$ are identical except for the classifier vector $v_c$ of class $c$, fixed with $w$ and $-w$ ($\|w\|=1$, $\forall y\in\mathcal{O}^i: w \perp v_y$). In the NTK regime under the dot-regression loss, we have:

\[
\frac{\mathrm{d}}{\mathrm{d} t} \langle f(x), v_c \rangle  =   -\frac{1}{|D_{\text{train}}^{i}|} \sum_{y\in\mathcal{O}^i} \kern-0.6em\sum_{\kern0.6em(\tilde{x},y)\in D_{\text{train}}^{i}} \kern-1.5em \Theta^{(L)}_\infty (x,\tilde{x}) \frac{\cos(f(\tilde{x}),\!v_y)(1-\cos(f(\tilde{x}),\!v_y))}{\| f(\tilde{x})\|_2^2} \, \langle f(\tilde{x}), v_c \rangle. \tag{in NTK Regime}
\]

Since every term in the update equation is identical for $f_1(x)$ and $f_2(x)$, except for $\langle f(\tilde{x}), v_c \rangle$, which takes opposite values in each case, it follows that $ \langle f_1(x), v_c \rangle = - \langle f_2(x), v_c \rangle $ for all time $t \geq 0$. This demonstrates that classifier initialization strongly determines alignment in the local learning stage. Consequently, the global aggregation stage is the only way to generalize to classes that haven't been observed yet. This slows down the overall accuracy of the FL server. 

\subsubsection{Cross-Entropy Loss and Feature-Classifier Alignment}

In contrast, cross-entropy (CE) loss explicitly guides feature gradients toward $v_c$, weighted by the softmax probability $p_c$ and the NTK weight. This ensures that even when class $c$ is absent, local training still produces meaningful updates. After each global aggregation, the refined $p_c$ further strengthens alignment, allowing CE to maintain consistent feature-classifier alignment across all classes.

This observation aligns with our empirical findings: without feature distillation, dot-regression struggles to generalize to unobserved classes, whereas CE enables continuous feature updates, leading to improved generalization.


\newpage
\subsection{Experimental Setup}\label{ch3appsec:exp_setup}

This subsection details the code implementation, dataset descriptions, model specifications, optimizer settings, and non-IID (NIID) partitioning used in our experiments.

\subsubsection{Code Implementation}

Our implementations are conducted using the PyTorch framework. Specifically, the experiments presented in Table~\ref{ch3tab:gfl_acc_all} and Table~\ref{ch3apptab:elapsed_time} are executed on a single NVIDIA RTX 3090 GPU, based on the code structure from the following repository: \url{https://github.com/Lee-Gihun/FedNTD}. The other parts of our study are carried out on a single NVIDIA A5000 GPU, utilizing the code framework from \url{https://github.com/jhoon-oh/FedBABU}.

\subsubsection{Datasets, Model, and Optimizer}

To simulate a realistic FL scenario, we conduct extensive studies on three widely used datasets: CIFAR-10~\citep{krizhevsky2009cifar}, CIFAR-100~\citep{krizhevsky2009cifar} and ImageNet-100~\cite{deng2009imagenet}. For each dataset, appropriate models are employed: VGG11~\citep{simonyan2014very} for CIFAR-10, MobileNet~\citep{howard2017mobilenets} for CIFAR-100, and ResNet-18~\citep{he2016identity} for ImageNet-100. A momentum optimizer is utilized for all experiments. The data preprocessing pipeline for the training phase includes \texttt{RandomResizedCrop}, \texttt{RandomHorizontalFlip}, and \texttt{Normalize} transformations for all datasets. During testing, only the \texttt{Normalize} transformation is applied for CIFAR-10 and CIFAR-100, while for ImageNet-100, \texttt{Resize}, \texttt{CenterCrop}, and \texttt{Normalize} are applied. Unless otherwise noted, the basic setting of our experiments follows the dataset statistics, FL scenario specifications, and optimizer hyperparameters summarized in Table~\ref{ch3apptab:dataset_optimizer_details}.

\begin{table}[h!]
    \centering
    \small
    \caption{Summary of Dataset, Model, FL System, and Optimizer Specifications}
    \label{ch3apptab:dataset_optimizer_details}
    \vspace{5pt}
    \begin{tabular}{l|cccccccccc}
        \toprule
        {Datasets} & $C$ & $|D_\text{train}|$  & $|D_\text{test}|$ & $N$ & $R$ & $r$ & $E$ & $B$ & $m$ & $\lambda$ \\
        \midrule
        CIFAR-10 & 10 & 50000 & 10000 & 100 & 320 & 0.1 & 10 & 50 & 0.9 & 1e-5 \\
        CIFAR-100 & 100 & 50000 & 10000 & 100 & 320 & 0.1 & 3 & 50 & 0.9 & 1e-5 \\
        ImageNet-100 & 100 & 130000 & 5000 & 100 & 320 & 0.1 & 5 & 50 & 0.9 & 1e-5 \\
        \bottomrule
    \end{tabular}
\end{table}

Note: In terms of dataset information, $C$ represents the number of classes in the dataset, with $|D_{\text{train}}|$ and $|D_{\text{test}}|$ indicating the total numbers of training and test data used, respectively. For the federated learning (FL) system specifics, $R$ indicates the total number of FL rounds, $r$ is the ratio of clients selected for each round, and $E$ denotes the number of local epochs. Local model training utilizes a momentum optimizer where $B$ is the batch size, and $m$ and $\lambda$ represent the momentum and weight decay parameters, respectively. The initial learning rate $\eta$ is decayed by a factor of 0.1 at the 160th and 240th communication rounds. The initial learning rate $\eta$ and the number of local epochs $E$ were determined via extensive grid search for each algorithm, with details outlined in Subsection~\ref{ch3app:grid_search}.

\subsubsection{Non-IID Partition Strategies}\label{ch3appsubsec:niid_strategy}

To induce heterogeneity in each client's training and test data ($D_\text{train}^i, D_\text{test}^i$), we distribute the entire class-balanced datasets, $D_\text{train}$ and $D_\text{test}$, among 100 clients using both sharding and Latent Dirichlet Allocation (LDA) partitioning strategies:

\begin{itemize}[leftmargin=10pt]
\item \textbf{Sharding}~\citep{mcmahan2017communication, oh2021fedbabu}: We organize the $D_\text{train}$ and $D_\text{test}$ by label and divide them into non-overlapping shards of equal size. Each shard encompasses $\frac{|D_\text{train}|}{100\times s}$ and $\frac{|D_\text{test}|}{100\times s}$ samples of the same class, where $s$ denotes the number of shards per client. This sharding technique is used to create $D_\text{train}^i$ and $D_\text{test}^i$, which are then distributed to each client $i$, ensuring that each client has the same number of training and test samples. The data for each client is disjoint. As a result, each client has access to a maximum of $s$ different classes. Decreasing the number of shards per user $s$ increases the level of data heterogeneity among clients.

\item \textbf{Latent Dirichlet Allocation (LDA)}~\citep{luo2021no, wang2020federated}: We utilize the LDA technique to create $D_\text{train}^i$ from $D_\text{train}$. This involves sampling a probability vector $p_c = (p_{c,1}, p_{c,2}, \cdots, p_{c,100}) \sim \text{Dir}(\alpha)$ and allocating a proportion $p_{c,k}$ of instances of class $c \in [C]$ to each client $k \in [100]$. Here, $Dir(\alpha)$ represents the Dirichlet distribution with the concentration parameter $\alpha$. The parameter $\alpha$ controls the strength of data heterogeneity, with smaller values leading to stronger heterogeneity among clients. For $D_\text{test}^i$, we randomly sample from $D_\text{test}$ to match the class frequency of $D_\text{train}^i$ and distribute it to each client $i$. 
\end{itemize}

\subsection{Hyperparameter Search for $\eta$ and $E$} \label{ch3app:grid_search}

To optimize the initial learning rate ($\eta$) and the number of local epochs ($E$) for our algorithm, we conduct a grid search on the CIFAR-10, CIFAR-100, and ImageNet-100 datasets. The process and reasoning are outlined below.

\subsubsection{Rationale for Varying Initial Learning Rate ($\eta$)}

The algorithms used in our experiments differ in handling feature normalization within the loss function. Some algorithms apply feature normalization, while others do not. When features $f(x;\bm{\theta})$ are normalized, the resulting gradient is scaled by $\frac{1}{\|f(x;\bm{\theta})\|_2}$. This scaling effect necessitates a grid search across various learning rates to account for the differences in learning behavior.

\subsubsection{Rationale for Varying Local Epochs $E$}

In FL, choosing the appropriate number of local epochs is crucial. Too few epochs can lead to underfitting, while too many can cause client drift. Therefore, finding the optimal number of local epochs is essential by exploring a range of values. 

\subsubsection{Grid Search Process and Results}

Considering the above reasons, we perform grid search for $\eta$ and $E$ on CIFAR-10, CIFAR-100, and ImageNet-100 datasets. The grid search for CIFAR-10 uses a shard size of 2, while for CIFAR-100, a shard size of 10 is used. Additionally, for ImageNet-100, a shard size of 20 is used. The detailed procedures for each dataset are provided below. These optimal settings have also been confirmed to yield good performance in less heterogeneous settings.

\myparagraph{CIFAR-10.}
We examine $\eta$ values from \{0.01, 0.05, 0.1, 0.15, 0.2, 0.25, 0.3, 0.35, 0.4, 0.45, 0.5, 0.55, 0.6\}. For $E$, we consider \{1, 3, 5, 10, 15\}. A default initial learning rate of 0.01 is used unless specified otherwise. The optimal learning rates vary by algorithm, and the results are summarized in Table~\ref{ch3apptab:hyper_summary_10}. Table~\ref{ch3apptab:hyper_summary_10} also includes the additional hyperparameters used for each algorithm. The notation for these additional hyperparameters follows the conventions used throughout this paper~\citep{MLSYS2020_1f5fe839, lee2022preservation, jhunjhunwala2023fedexp, li2023no, lee2024fedsol}. The optimal number of local epochs is found to be 10 for every algorithm.

\myparagraph{CIFAR-100.}
We examine $\eta$ values from \{0.1, 0.5, 1.0, 1.5, 2.0, 2.5, 3.0, 3.5, 4.0, 4.5, 5.0, 5.5, 6.0, 6.5, 7.0\}. For $E$, we consider \{1, 3, 5, 10\}. A default initial learning rate of 0.1 is used unless specified otherwise. The optimal learning rates differ by algorithm, and the results are listed in Table~\ref{ch3apptab:hyper_summary_100}. Table~\ref{ch3apptab:hyper_summary_100} also includes the additional hyperparameters used for each algorithm. The notation for these additional hyperparameters follows the conventions used throughout this paper~\citep{MLSYS2020_1f5fe839, lee2022preservation, jhunjhunwala2023fedexp, li2023no, lee2024fedsol}. The optimal number of local epochs is found to be 3 for every algorithm.

\myparagraph{ImageNet-100.}
We examine $\eta$ values from $\{0.01, 0.1, 1.0, 10.0\}$, which are chosen to maintain a consistent logarithmic scale difference.  A default initial learning rate of 0.1 is used unless specified otherwise. The optimal learning rates differ by algorithm, and the results are listed in Table~\ref{ch3apptab:hyper_summary_imagenet}. Table~\ref{ch3apptab:hyper_summary_imagenet} also includes the additional hyperparameters used for each algorithm. The notation for these additional hyperparameters follows the conventions used throughout this paper~\citep{MLSYS2020_1f5fe839, lee2022preservation, jhunjhunwala2023fedexp, li2023no, lee2024fedsol}. The optimal number of local epochs is fixed at 5, following the setting of \citep{lee2024fedsol}.

\begin{table}[h!]
    \centering
    \small
    \caption{Hyperparameters for VGG11 training on CIFAR-10.}
    \label{ch3apptab:hyper_summary_10}
    \vspace{5pt}
    \resizebox{\textwidth}{!}{
    \begin{tabular}{c|ccccccccc|ccc}
      \toprule
      & \multicolumn{9}{c|}{Feature un-normalized algorithms} & \multicolumn{3}{c}{Feature normalized algorithms} \\ \midrule
      Hyperparameters & FedAvg & FedBABU & FedProx & SCAFFOLD & MOON & FedNTD & FedExP & FedSOL & FedGELA & FedETF & SphereFed & \alg (Ours) \\ \hline
      $\eta$ & 0.01 & 0.01 & 0.01 & 0.01 & 0.01 & 0.01 & 0.01  & 0.01 & 0.01 & 0.05 & 0.55 & 0.35 \\ 
      Additional  & None & None & $\mu$=0.001 & None & $(\mu, \tau)$=(1,0.5) & $(\beta, \tau)$=(1,3) & $\epsilon$=0.001 & $\rho$=2.0 & None & $(\beta, \tau)$=(1,1) & None & $\beta$=0.9 \\ \bottomrule
    \end{tabular}
    }
\end{table}

\begin{table}[h!]
    \centering
    \small
    \caption{Hyperparameters for MobileNet training on CIFAR-100.}
    \label{ch3apptab:hyper_summary_100}
    \vspace{5pt}
    \resizebox{\textwidth}{!}{
    \begin{tabular}{c|ccccccccc|ccc}
      \toprule
      & \multicolumn{9}{c|}{Feature un-normalized algorithms} & \multicolumn{3}{c}{Feature normalized algorithms} \\ \midrule
      Hyperparameters & FedAvg & FedBABU & FedProx & SCAFFOLD & MOON & FedNTD & FedExP & FedSOL & FedGELA & FedETF & SphereFed & \alg (Ours) \\ \hline
      $\eta$ & 0.1 & 0.1 & 0.1 & 0.1 & 0.1 & 0.1 & 0.1  & 0.1 & 0.1 & 0.5 & 6.5 & 5.0 \\ 
      Additional  & None & None & $\mu$=0.001 & None & $(\mu, \tau)$=(1,0.5) & $(\beta, \tau)$=(1,3) & $\epsilon$=0.001 & $\rho$=2.0 & None & $(\beta, \tau)$=(1,1) & None & $\beta$=0.9 \\ \bottomrule
    \end{tabular}
    }
\end{table}

\begin{table}[h!]
    \centering
    \small
    \caption{Hyperparameters for ResNet-18 training on ImageNet-100.}
    \label{ch3apptab:hyper_summary_imagenet}
    \vspace{5pt}
    \resizebox{\textwidth}{!}{
    \begin{tabular}{c|ccccccccc|ccc}
      \toprule
      & \multicolumn{9}{c|}{Feature un-normalized algorithms} & \multicolumn{3}{c}{Feature normalized algorithms} \\ \midrule
      Hyperparameters & FedAvg & FedBABU & FedProx & SCAFFOLD & MOON & FedNTD & FedExP & FedSOL & FedGELA & FedETF & SphereFed & \alg (Ours) \\ \hline
      $\eta$ & 0.1 & 0.1 & 0.1 & 0.1 & 0.1 & 0.1 & 0.1  & 0.1 & 0.1 & 1.0 & 1.0 & 1.0 \\ 
      Additional  & None & None & $\mu$=0.001 & None & $(\mu, \tau)$=(1,0.5) & $(\beta, \tau)$=(1,3) & $\epsilon$=0.001 & $\rho$=2.0 & None & $(\beta, \tau)$=(1,1) & None & $\beta$=0.9 \\ \bottomrule
    \end{tabular}
    }
\end{table}

\newpage

\newpage

\subsection{Synergy Effect Details}\label{ch3app:synergey}
 We evaluate the synergy effect of various FL algorithms by maintaining their original training loss while incorporating specific regularizers, as detailed in Equation~\ref{ch3alg:FedDR} of the main text. To manage the differing loss scales between the baseline FL algorithms and the regularizers, we systematically tune the coefficient $\beta$ across a range of values (${0.1, 0.3, 0.5, 0.7, 0.9, 0.99, 0.999, 0.9999}$). The resulting performance and optimal $\beta$ values are shown in Table~\ref{ch3apptab:synergy_effect_tunned}
 and Table~\ref{ch3apptab:synergy_effect_argmax}. However, when we set $\beta=0.9$ without addressing the issue of differing loss scales, the performance results, presented in Table~\ref{ch3apptab:synergy_effect_plain}, reveal that several synergies are significantly inferior due to this oversight.

\begin{table*}[h!]
    \centering
    \caption{Synergy of various FL algorithms and regularizers. Baseline indicates training FL models without a regularizer. FD denotes feature distillation, which is the regularizer we use in \alg.}
    \label{ch3apptab:synergy_effect_tunned}
    \vspace{5pt}
    \small
    \addtolength{\tabcolsep}{-1pt}
    \resizebox{\textwidth}{!}{
    \begin{tabular}{l|ccccccc|ccccccc}
      \toprule 
       & \multicolumn{7}{c|}{Sharding ($s=10$)} & \multicolumn{7}{c}{LDA ($\alpha=0.1$)} \\ \cmidrule{2-15}
      Algorithm & Baseline & \!\!+Prox & \!\!+MOON & \!\!+KD & \!\!+NTD & \!\!+LD & +FD & Baseline & \!\!+Prox & \!\!+MOON & \!\!+KD & \!\!+NTD & \!\!+LD & +FD\\ \midrule
      FedAvg & 37.22 &  36.87 & 37.43 & 36.25 & 37.71  & 37.17 &  37.82 & 42.52 &  43.22 & 44.79 & 44.21 & 43.39 & 43.43 & 43.76\\ \midrule      
      FedBABU  & 46.20 &  46.03 & 46.49 & 46.37 & 47.22  & 46.71 &  46.95 & 47.37 &  46.62 & 46.27 & 47.60 & 46.48 & 45.78 & 46.49\\   
   
      SphereFed & 43.90 &  41.96 & 43.13 & 44.94 & 43.47  & 43.95 &  45.21 & 46.98 &  43.77 & 46.81 & 47.76 & 47.25 & 47.01 & 49.74\\   
      
      FedETF & 32.42 &  31.87 & 34.30 & 32.76 & 32.65  & 32.25 &  32.77 & 46.27 &  45.71 & 45.98 & 46.67 & 46.16 & 45.91 & 46.47\\   
      
      FedGELA & 29.17 &  28.69 & 28.80 & 29.11 & 28.84  & 29.36 &  30.33 & 27.11 &  29.03 & 28.09 & 28.45 & 29.62 & 29.41 & 29.75\\ \midrule  
      
      Dot-Regression & 42.52 &  41.95 & 44.72 & 47.45 & 48.32  & 47.52 &  \textbf{48.69} & 42.72 &  46.35 & 50.36 & 49.47 & 50.36 & 49.28 & \textbf{50.86}\\   
        
      \bottomrule
    \end{tabular}
    }
    \vspace{-5pt}
\end{table*}

\begin{table*}[h!]
    \centering
    \caption{Optimal $\beta$ value selected through grid search to achieve the best synergy of various FL algorithms and regularizers.}
    \label{ch3apptab:synergy_effect_argmax}
    \vspace{5pt}
    \small
    \addtolength{\tabcolsep}{-1pt}
    \resizebox{\textwidth}{!}{
    \begin{tabular}{l|ccccccc|ccccccc}
      \toprule 
       & \multicolumn{7}{c|}{Sharding ($s=10$)} & \multicolumn{7}{c}{LDA ($\alpha=0.1$)} \\ \cmidrule{2-15}
      Algorithm & Baseline & \!\!+Prox & \!\!+MOON & \!\!+KD & \!\!+NTD & \!\!+LD & +FD & Baseline & \!\!+Prox & \!\!+MOON & \!\!+KD & \!\!+NTD & \!\!+LD & +FD\\ \midrule
      FedAvg & None &  0.999 & 0.5 & 0.9999 & 0.9999  & 0.999  & 0.9 & None  &  0.999 & 0.99 & 0.999 & 0.99 & 0.999 &  0.9999\\ \midrule

      FedBABU  & None &  0.9999 & 0.9 & 0.999 & 0.99  & 0.999 &  0.999 & None &  0.999 & 0.999 & 0.999 & 0.999 & 0.99 & 0.9999\\   
   
      SphereFed & None &  0.9999 & 0.9999 & 0.9999 & 0.9  & 0.99 &  0.9 & None &  0.9999 & 0.999 & 0.999 & 0.9999 & 0.999 & 0.99\\   
      
      FedETF & None &  0.999 & 0.3 & 0.5 & 0.999  & 0.9 &  0.9 & None &  0.9999 & 0.9999 & 0.5 & 0.999 & 0.99 & 0.99\\   
      
      FedGELA & None &  0.9999 & 0.7 & 0.5 & 0.7  & 0.5 &  0.7 & None &  0.99 & 0.9 & 0.5 & 0.5 & 0.5 & 0.3\\ \midrule  
      
      Dot-Regression & None &  0.9999 & 0.9 & 0.5 & 0.5  & 0.5 &  0.9 & None &  0.9999 & 0.5 & 0.5 & 0.5 & 0.5 & 0.9\\    
           
      \bottomrule
    \end{tabular}
    }
    \vspace{-5pt}
\end{table*}

\begin{table*}[h!]
    \centering
    \caption{Synergy of various FL algorithms and regularizers at $\beta=0.9$.}
    \label{ch3apptab:synergy_effect_plain}
    \vspace{5pt}
    \small
    \addtolength{\tabcolsep}{-1pt}
    \resizebox{\textwidth}{!}{
    \begin{tabular}{l|ccccccc|ccccccc}
      \toprule 
       & \multicolumn{7}{c|}{Sharding ($s=10$)} & \multicolumn{7}{c}{LDA ($\alpha=0.1$)} \\ \cmidrule{2-15}
      Algorithm & Baseline & \!\!+Prox & \!\!+MOON & \!\!+KD & \!\!+NTD & \!\!+LD & +FD & Baseline & \!\!+Prox & \!\!+MOON & \!\!+KD & \!\!+NTD & \!\!+LD & +FD\\ \midrule
      FedAvg & 37.22 &  30.27 & 36.67 & 35.14 &  35.56 & 34.83 &  37.82 & 42.52 &  36.09 & 42.09 & 41.48 & 41.34 & 43.36 & 43.10\\ \midrule      
      FedBABU & 46.20 &  36.71 & 46.49 & 45.50 &  45.09 & 45.81 &  45.31 & 47.37 &  39.04 & 45.92 & 45.58 & 45.56 & 46.46 & 44.77\\ 
   
      SphereFed & 43.90 &  1.36 & 1.89 &  41.01 & 43.47 &  41.73 & 45.21 &  46.98 & 1.46 & 2.21 &  45.22 & 46.25 & 43.84 &  48.61\\
      
      FedETF & 32.42 &  25.18 & 32.58 & 32.76 &  31.98 & 32.25 &  32.77 & 46.27 &  34.92 & 45.38 & 44.94 & 45.77 & 44.36 & 45.92 \\ 
      
      FedGELA & 29.17 &  25.52 & 28.57 & 28.84 & 28.67  & 28.37 &  29.07 & 27.11 &  26.84 & 28.09 & 27.78 & 28.27 & 27.97 & 27.60\\ \midrule  
      
      Dot-Regression & 42.52 & 5.42 & 44.72 &  46.60 & 45.78 &  47.52 & \textbf{48.69} &  42.72  &  7.47 & 30.69 & 48.19 & 33.08 & 49.09 & \textbf{50.79} \\

      \bottomrule
    \end{tabular}
    }
    \vspace{-5pt}
\end{table*}

\newpage

\subsection{Personalized Federated Learning Results}
\label{ch3subsec:pfl results}

We introduce \alg\,FT and dot-regression\,FT, inspired by prior work~\citep{oh2021fedbabu, dong2022spherefed, li2023no, kim2023fedfn}. These methods enhance personalization by leveraging local data to fine-tune the GFL model. We investigate the impact of fine-tuning using $\mathcal{L}_\text{Dr+}$ and $\mathcal{L}_\text{DR}$ loss for each GFL model to assess their effectiveness on personalized accuracy. Performance metrics without standard deviations indicate results on $D_\text{test}$, obtained from the GFL model after the initial step in the 2-step method. Our experiments involve heterogeneous settings with sharding and LDA non-IID environments, using MobileNet on CIFAR-100 datasets. We set $s$ as 10, 20, and 100, and the LDA concentration parameter ($\alpha$) as 0.05, 0.1, and 0.3. Table~\ref{ch3apptab:pfl_acc} provides detailed personalized accuracy results.

Our 2-step process involves first developing the GFL model either using dot-regression or \alg. In the second step, we fine-tune this model to create the PFL model, again using $\mathcal{L}_\text{DR}$ or $\mathcal{L}_\text{Dr+}$. This results in four combinations: Dot-Regression\,FT\,($\mathcal{L}_\text{DR}$), Dot-Regression\,FT\,($\mathcal{L}_\text{Dr+}$), \alg\,FT\,($\mathcal{L}_\text{DR}$), and \alg\,FT\,($\mathcal{L}_\text{Dr+}$). When the GFL model is fixed, using $\mathcal{L}_\text{DR}$ for fine-tuning consistently outperforms $\mathcal{L}_\text{Dr+}$ across all settings, because dot-regression focuses on local alignment which advantages personalized fine-tuning. Conversely, when the fine-tuning method is fixed, employing $\mathcal{L}_\text{Dr+}$ for the GFL model consistently outperforms $\mathcal{L}_\text{DR}$ across all settings. This aligns with previous research~\citep{nguyen2022begin,chen2022importance} suggesting that fine-tuning from a well-initialized model yields better PFL performance. 

\begin{table}[t!]
    \centering
    \caption{PFL accuracy comparison with MobileNet on CIFAR-100. For PFL, we denote the entries in the form of X{\tiny $\pm$(Y)}, representing the mean and standard deviation of personalized accuracies across all clients derived from a single seed.}
    \label{ch3apptab:pfl_acc}
    \small
    \vspace{5pt}
    \resizebox{\textwidth}{!}{
    \begin{tabular}{l|ccc|ccc}
    \toprule
    Algorithm & $s$=10 & $s$=20 & $s$=100 & $\alpha$=0.05 & $\alpha$=0.1 & $\alpha$=0.3 \\ \midrule

    Dot-Regression                                   & 42.52  & 49.02 & 52.86  & 30.31{\tiny $\pm$7.95} & 37.52{\tiny $\pm$5.60} & 47.08{\tiny $\pm$3.69} \\
    
    Dot-Regression\,FT ($\mathcal{L}_\text{DR}$) & 80.84{\tiny $\pm$(5.99)}  & 74.18{\tiny $\pm$(5.78)} & 56.84{\tiny $\pm$(5.04)} & 72.02{\tiny $\pm$(6.80)}  & 66.96{\tiny $\pm$(5.36)} & 60.34{\tiny $\pm$(3.66)} \\ 
    {Dot-Regression\,FT ($\mathcal{L}_\text{Dr+}$)} & 80.82{\tiny $\pm$(6.12)}  & 73.73{\tiny $\pm$(5.75)} & 56.69{\tiny $\pm$(4.95)} & 71.85{\tiny $\pm$(7.03)}  & 66.59{\tiny $\pm$(5.32)} & 59.87{\tiny $\pm$(3.65)} \\ \midrule
    \alg (ours)                                   & 48.69  & 51.00 & 53.23   & 39.63{\tiny $\pm$9.12}  & 45.83{\tiny $\pm$6.18} & 48.04{\tiny $\pm$3.44} \\    
    \textbf{\alg\,FT ($\mathcal{L}_\text{DR}$) (ours)} & \textbf{84.23}{\tiny $\pm$(5.44)}  & \textbf{75.73}{\tiny $\pm$(4.79)} & \textbf{56.90}{\tiny $\pm$(4.85)} & \textbf{78.65}{\tiny $\pm$(6.17)}  & \textbf{74.86}{\tiny $\pm$(4.77)} & \textbf{62.47}{\tiny $\pm$(3.72)} \\

    \alg\,FT ($\mathcal{L}_\text{Dr+}$) (ours) & 84.10{\tiny $\pm$(5.43)}  & 75.42{\tiny $\pm$(4.80)} & 56.76{\tiny $\pm$(4.91)} & 78.55{\tiny $\pm$(6.16)}  & 74.75{\tiny $\pm$(4.75)} & 62.16{\tiny $\pm$(3.73)} \\ \bottomrule
    \end{tabular}
    }
\end{table}

\subsection{IID Data Performance}

To address the question regarding the performance of \alg or dot-regression loss in Federated Learning (FL) settings with IID data, we conducted experiments on CIFAR-100 with 100 clients, distributing data IID and ensuring a fair number of samples per client. We evaluated FedAvg, FedBABU, Dot-regression, and \alg across 5 seeds, calculating the mean and standard deviation of the global model accuracy for each algorithm.

\begin{table}[h]
    \centering
    \caption{Global model accuracy (\%) in IID data settings.}
    \begin{tabular}{lc}
        \toprule
        \textbf{Algorithm} & \textbf{Accuracy (mean $\pm$ std)} \\
        \midrule
        \textbf{FedAvg}         & 47.19 $\pm$ 1.06 \\
        \textbf{FedBABU}        & 45.18 $\pm$ 0.61 \\
        \textbf{Dot-regression} & 51.48 $\pm$ 0.99 \\
        \alg         & \textbf{51.10 $\pm$ 0.61} \\
        \bottomrule
    \end{tabular}
    \label{ch3tab:iid}
\end{table}

From the Table~\ref{ch3tab:iid}, it is evident that Dot-regression and \alg achieve the highest performance, significantly outperforming both FedAvg and FedBABU. The performance of Dot-regression and \alg is nearly identical under IID settings.

This similarity arises because, in the IID scenario, there are no \textbf{unobserved classes} across clients. As a result, the feature distillation mechanism in \alg, which is specifically designed to mitigate forgetting on unobserved classes, does not provide additional benefits. Instead, both Dot-regression and \alg excel in improving local alignment across all classes, fully achieving the global model's objective of enhancing local alignment for all clients.

\subsection{Performance in Stochastic Client Data Settings}

While our original experiments on \textbf{CIFAR-100 (s=10) with 100 clients} assumed a static client dataset, we conducted additional experiments where each client randomly removed one class from its dataset every 10 FL rounds. As expected, global model accuracy decreased for all methods, as shown in Table~\ref{ch3tab:stochastic}. However, \alg consistently outperformed CE and Dot-regression, demonstrating its robustness in handling dynamic class distributions. The round-wise global test accuracy trends for CE, Dot-regression, and \alg in the stochastic setting are presented in Figure~\ref{ch3fig:stochastic_global_acc}, further confirming \alg’s stability and superior performance across training rounds.

\begin{table}[h]
    \centering
    \caption{Global model accuracy (\%) in static and stochastic client data settings.}
    \begin{tabular}{lcc}
        \toprule
        \textbf{Algorithm} & Static Setting & Stochastic Setting \\
        \midrule
        \textbf{CE}        & 46.20 & 43.59 \\
        \textbf{Dot-regression} & 42.52 & 38.13 \\
        \alg         & \textbf{48.69} & \textbf{44.96} \\
        \bottomrule
    \end{tabular}
    \label{ch3tab:stochastic}
\end{table}

\begin{figure}[h!]
    \centering
        \begin{minipage}{\textwidth}
            \centering    
            \begin{subfigure}[b]{0.32\textwidth}
                \centering
                \includegraphics[width=\textwidth]{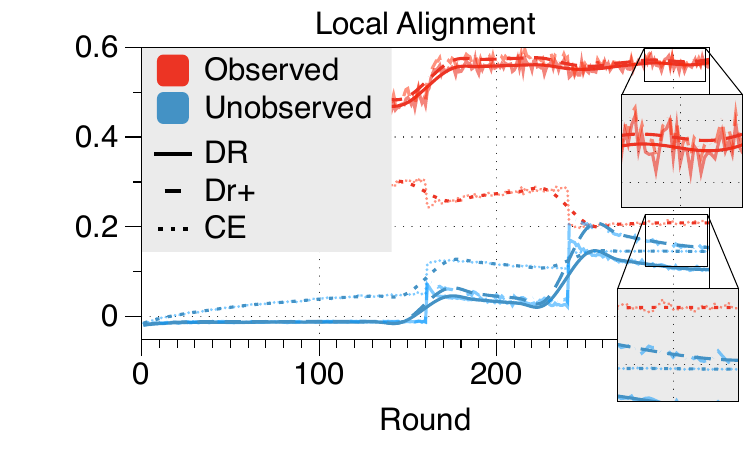}
                \subcaption{Local alignment}
                \label{ch3fig:stochastic_alignment}
            \end{subfigure}
            \hfill
            \begin{subfigure}[b]{0.32\textwidth}
                \centering
                \includegraphics[width=\textwidth]{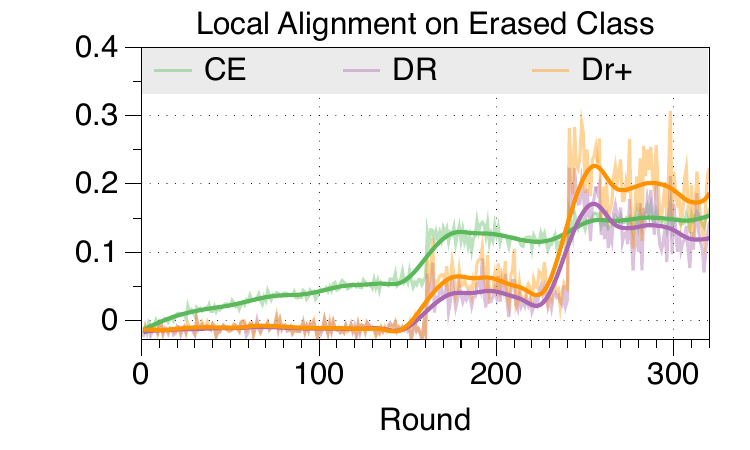}
                \subcaption{Local alignment on erased class}
                \label{ch3fig:stochastic_erased_alignment}
            \end{subfigure}
            \hfill
            \begin{subfigure}[b]{0.32\textwidth}
                \centering
                    \includegraphics[width=\textwidth]{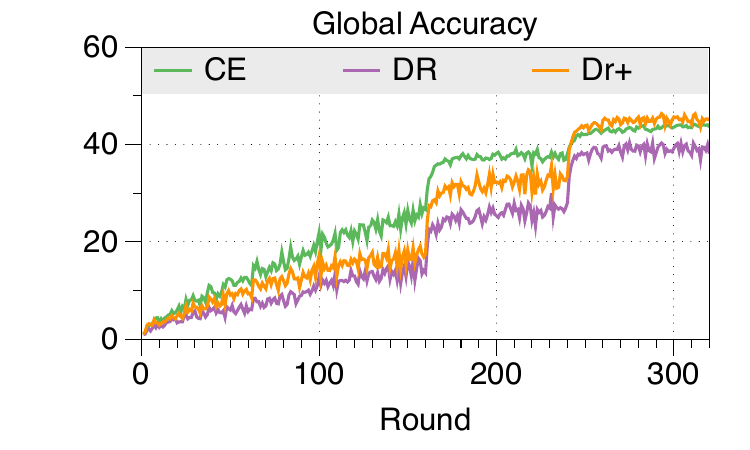}  
                \subcaption{Global test accuracy}
                \label{ch3fig:stochastic_global_acc}
            \end{subfigure}
            \hspace{5pt}
            \caption{Comparison of (a) feature-classifier alignment on the \textcolor{red}{observed} and \textcolor{blue}{unobserved} classes test data, (b) feature-classifier alignment on erased-class test data for $\bm{\theta}_r^i$, and (c) global test accuracy of $\bm{\theta}_r^g$ on all classes. Models are trained using $\mathcal{L}_\text{CE}$, $\mathcal{L}_\text{DR}$, and $\mathcal{L}_\text{Dr+}$.}
            \vspace*{-10pt}
            \label{ch3fig:stochastic_setting}
            \hspace{5pt}
        \end{minipage}
        \hspace{5pt}         
\end{figure}

To further investigate why \alg maintains superior global accuracy in the stochastic setting, we analyzed the feature-classifier alignment for both observed/unobserved classes and erased classes. 

\begin{itemize}
    \item \textbf{Local alignment for observed/unobserved classes (Fig~\ref{ch3fig:stochastic_alignment}):}
    \begin{itemize}
        \item \alg maintains superior feature-classifier alignment for both observed and unobserved classes compared to Dot-regression, consistently outperforming it across all rounds.
        \item During the final convergence phase, \alg surpasses even CE in unobserved class alignment, confirming its effectiveness in preserving global knowledge.
    \end{itemize}
    \newpage
    \item \textbf{Local alignment for erased class (Fig~\ref{ch3fig:stochastic_erased_alignment}):}
    \begin{itemize}
        \item Even for erased class (those removed during training), \alg retains stronger feature-classifier alignment than Dot-regression.
        \item During the final convergence phase, \alg also surpasses CE in erased class alignment, further demonstrating its ability to mitigate forgetting of removed class knowledge.
    \end{itemize}
    
\end{itemize}

These results suggest that the \textbf{feature distillation mechanism in \alg effectively enhances global knowledge preservation while also enabling effective learning of observed classes, even when class distributions change dynamically}.

\subsection{Scaling to Larger Numbers of Clients and Training Rounds}

We conducted experiments on \textbf{CIFAR-100 (s=10)} with \textbf{1,000 communication rounds}, increasing the number of clients to 100, 200, 500, and 1,000. All algorithms used previously grid-searched optimal hyperparameters, and results are averaged over three independent seeds. All algorithms used previously grid-searched optimal hyperparameters, and results are averaged over three independent seeds.

\begin{table}[h]
    \centering
    \caption{Global model accuracy (\%) for different numbers of clients with 1,000 communication rounds.}
    \begin{tabular}{lcccc}
        \toprule
        \textbf{Algorithm} & N=100 & N=200 & N=500 & N=1,000 \\
        \midrule
        \textbf{FedAvg}   & 50.50 $\pm$ 0.57 & 42.51 $\pm$ 1.47 & 33.02 $\pm$ 0.74 & 26.63 $\pm$ 1.31 \\
        \textbf{FedBABU}  & 58.19 $\pm$ 1.07 & 48.75 $\pm$ 1.99 & 37.40 $\pm$ 0.41 & 25.10 $\pm$ 1.08 \\
        \alg   & \textbf{64.21 $\pm$ 1.24} & \textbf{59.78 $\pm$ 0.71} & \textbf{43.27 $\pm$ 0.31} & \textbf{28.99 $\pm$ 0.98} \\
        \bottomrule
    \end{tabular}
    \label{ch3tab:scalability}
\end{table}

Table~\ref{ch3tab:scalability} confirms that \alg consistently outperforms FedAvg and FedBABU across all settings, demonstrating robust scalability in large-scale FL.

\newpage

\section{Limitations}
\label{ch3sec:limitations}

While our proposed method, \alg, effectively enhances both local alignment and global knowledge preservation in Federated Learning (FL), it has certain limitations. First, our approach builds upon dot-regression to improve local alignment, but this is just one possible strategy. Alternative methods, such as directly maximizing local alignment without relying on dot-regression, could be explored to further enhance performance in FL settings. Second, although \alg effectively mitigates forgetting of unobserved classes by incorporating feature distillation, dot-regression alone remains less effective in preserving alignment for unobserved classes. While our empirical results demonstrate that \alg alleviates this issue, further theoretical investigation is needed to develop a more principled approach to ensuring alignment across both observed and unobserved classes. These limitations highlight opportunities for future work to extend and refine our method, improving its robustness and generalizability in diverse FL environments.

\section{Conclusion}
\label{ch3sec:conclusion}




Motivated by the recent FL methods enhancing feature alignment with a fixed classifier, we first investigate the effects of applying dot-regression loss for FL. Since the dot-regression is the most direct method for feature-classifier alignment, we find it improves alignment and accuracy in local models but degrades the performance of the global model. This happens because local clients trained with dot-regression tend to forget classes that have not been observed. To address this, we propose \alg, combining dot-regression with 
a feature distillation method. By regularizing the deviation of local features from global features, \alg allows local models to maintain knowledge about all classes during training, thereby ultimately preserving general knowledge of the global model. Our method achieves top performance in global and personalized FL experiments, even when data is distributed unevenly across devices (non-IID settings).

\chapter{Are Multiple Global Models Necessary in Federated Personalized Reward Model Learning?}
\begin{tcolorbox}[
    colback=gray!10, 
    colframe=black, 
    width=0.9\textwidth, 
    boxrule=1pt, 
    arc=4pt, 
    left=5pt, 
    right=5pt, 
    center
]

Large language models (LLMs) are increasingly aligned to human preferences via reward
modeling and RLHF, but preference data are often decentralized and heterogeneous across
users. Federated learning (FL) offers a natural way to train reward models without
centralizing data, and personalization is typically achieved by fine-tuning a shared
model on each client. A common intuition under preference heterogeneity is that
maintaining multiple global models (e.g., via clustering) during FL should provide
better starting points for such personalized models than a single global model.

We empirically test this intuition for federated personalized reward model learning
under preference heterogeneity. We adopt a fixed two-stage protocol: the server first
trains reward models via FL on pairwise preference data, and each client then obtains a
personalized reward model by locally fine-tuning from the assigned global model. Under
this protocol, we compare a \textsc{Single} FL model against \textsc{Hard} and
\textsc{Soft} multi-global variants across both real-world and synthetic preference
distributions. Our results show that a single FL-trained reward model used as
initialization consistently outperforms all multi-global designs in terms of
personalized performance. Moreover, personalization from a \textsc{Single} FL model
also exceeds personalization from a centrally trained model, suggesting that
concentrating preference conflicts at the FL aggregation stage is more favorable than
mixing conflicting preferences in every centralized batch. These findings indicate that
a carefully trained single global reward model is a strong and often superior backbone
for personalized reward modeling under preference heterogeneity, despite the appeal of
more complex multi-global FL topologies.
\end{tcolorbox}
\section{Introduction}

Large language models (LLMs) are increasingly deployed in human-facing systems,
where alignment with human preferences can improve safety and user satisfaction
\citep{bai2022constitutional,askell2021general,casper2023open}. A standard alignment
pipeline first trains a \emph{reward model} that predicts which response humans prefer,
and then uses this signal to optimize an LLM via RLHF methods~\citep{schulman2017proximal,ouyang2022training}. However, preference
data are highly sensitive and often cannot be centralized due to data-protection
regulations (e.g., GDPR~\citep{regulation2016regulation}, CCPA~\citep{illman2019california}) and cross-jurisdictional sharing constraints~\citep{kopf2023openassistant}. Federated learning (FL)~\citep{mcmahan2017communication,MLSYS2020_1f5fe839,kairouz2021advances}
offers a practical alternative: data remain local while a coordinating server aggregates
model updates from decentralized clients. 

Existing FL-based preference alignment methods~\citep{fan2024fedrlhf,wu2024towards}
aim at a \emph{single} aligned LLM. FedRLHF~\citep{fan2024fedrlhf} trains a single global LLM on data from clients with diverse preferences and reports that the performance of the single global LLM degrades as preference heterogeneity increases. FedBiscuit~\citep{wu2024towards} clusters clients, trains multiple server-side reward models via FL, and then uses these clustered reward models as a voting committee: for each prompt–response pair, their votes determine the final preference label used to supervise a single server-side LLM. As a result, the server learns a single global LLM that produces one response per prompt, effectively optimizing for an aggregate client preference rather than client-specific ones. Under cross-client preference heterogeneity, such a single LLM cannot satisfy all clients, motivating \emph{personalized reward models} that adapt to client-specific preferences and serve as better initializations for downstream personalized LLMs.

Recent personalized FL studies~\citep{oh2021fedbabu,dong2022spherefed,kim2023fedfn,kim2025feddr} demonstrate that initializing each client from a shared global model and then fine-tuning on its local data is a simple yet effective. Under preference heterogeneity, a key question is whether a \textsc{Single} server-side model remains sufficient during federated training. In settings such as preference or task heterogeneity, where clients may favor different predictions for similar inputs, prior  methods~\citep{wu2024towards,wang2025adaptive} maintain multiple server-side models by \textsc{Hard} clustering clients into groups and training a separate model per group: FedBiscuit~\citep{wu2024towards} clusters clients by preference similarity to train clustered reward models, while FedLEASE~\citep{wang2025adaptive} clusters them by task or domain and trains a model per cluster. These designs show that \textsc{Hard}-clustered multi-global FL has been explored as a way to cope with preference heterogeneity during the federated phase. Yet they ultimately deploy and evaluate only a single aligned LLM, \emph{without studying how the server-side design (single-global vs.\ multiple-global) affects the personalized reward models obtained by client-side fine-tuning,} leaving open whether multiple-global reward models are actually needed or a single-global reward model with local fine-tuning already suffices.

\noindent Our contributions are:
\begin{itemize}
    \item We compare FL designs that use either a single global model or multiple
    global models as initializations for personalized reward models under
    cross-client preference heterogeneity.
    \item We find that the single-global design consistently yields better
    personalized performance than multi-global designs, and even outperforms a
    centralized baseline trained on pooled data.
    \item Our findings indicate that the global model obtained by averaging client models trained on their own preference-derived data serves as a strong initialization for personalization.
\end{itemize}

\section{Related Work}
\label{sec:related}

\subsection{Reward Modeling Under Preference Heterogeneity}
Reward modeling from human preference is central to modern alignment pipelines. Preference-labeled summarization data are used to train reward models in \citep{stiennon2020learning}, and RLHF is applied to train instruction-following language models in \citep{ouyang2022training}. Subsequent work proposes alternative preference optimization objectives, including Direct Preference Optimization (DPO)~\citep{rafailov2023direct} and RRHF~\citep{yuan2023rrhf}. To address robustness and diverse user needs, recent studies have introduced more detailed control mechanisms. Robustness and group-wise disparity are addressed by methods such as Group Robust Preference Optimization (GRPO)~\citep{ramesh2024group}, while DPA~\citep{wang2024arithmetic} enables a single global model to satisfy diverse preferences by training multi-objective reward models. Similarly, MiCRo~\citep{shen2025micro} adopts a mixture of experts strategy, training multiple reward components corresponding to latent subgroups and employing a router to dynamically adjust reward weights based on user context. Other studies model latent user-specific preferences via variational formulations~\citep{poddar2024personalizing}. Despite these advances in personalized alignment, most prior work assumes centralized access to preference data,
which is often infeasible due to privacy constraints~\cite{regulation2016regulation, illman2019california, kopf2023openassistant}. Recent federated RLHF studies~\citep{wu2025towards,fan2024fedrlhf}
therefore focus on learning a \emph{single global} aligned policy from decentralized data. However, under explicit
preference heterogeneity, such a global model may not satisfy all clients. Motivated by this gap, we study
\emph{personalized} alignment in federated settings with heterogeneous client preferences.

\subsection{Server-Side Multi-Model Designs in Federated Learning}

Under strong heterogeneity across clients—arising from differences in preference signals,
data distributions, tasks, or objectives—several federated learning studies have explored
maintaining \emph{multiple server-side models} during training.
A common approach clusters clients and trains a separate global model per cluster.
FedBiscuit~\citep{wu2024towards} clusters clients by preference similarity to train clustered
reward models, while FedLEASE~\citep{wang2025adaptive} clusters clients by task or domain and
trains one model per group. Related methods, including IFCA~\citep{ghosh2020efficient}, CFL~\cite{sattler2020clustered},
and FeSEM~\citep{long2023multi}, similarly partition clients and train cluster-specific global
models under general data or task heterogeneity.
Other approaches introduce architectural specialization through partial parameter sharing,
such as mixture-of-experts based designs that decompose models into shared components and
specialized experts~\citep{yi2024pfedmoe}. Such designs introduce \emph{multiple global models} during the federated phase. However, prior work primarily evaluates the resulting \emph{global} models or policies and
typically deploys a single aligned LLM.
The effect of the server-side design choice—\emph{single-global versus multiple-global}—on
the \emph{personalized reward models} obtained via client-side fine-tuning has not been
systematically examined.
In this work, we study this design choice by comparing single and multiple global
initializations for personalized reward modeling under preference heterogeneity.

\section{Preliminaries}
\label{sec:prelim}

This section outlines the FL framework used in this work. 
Clients collaboratively train global reward models from decentralized pairwise preference data and later fine-tune these global models locally to obtain personalized reward models. 
We first describe the FL procedure and then present the experimental setup, including datasets, model configurations, and hyperparameter settings.

\subsection{FL Procedure}
We consider FL with pairwise preference data distributed over $M$ clients. 
Each client $m$ holds a local dataset
$D_{\text{train}}^m = \{(x_i^m, y_{i}^{m,+}, y_{i}^{m,-})\}_{i=1}^{N_m}$,
where $x_i^m$ is a prompt (or question), $y_{i}^{m,+}$ is the preferred response, and $y_{i}^{m,-}$ is the less-preferred response. FL proceeds over $R$ communication rounds. 
At the beginning of round $r$, the server broadcasts the current global parameters $\theta^{(r-1)}$ to a sampled subset of clients $S_r \subset [M]$. 
Each selected client $m \in S_r$ performs $\tau$ local training iterations on $D_{\text{train}}^m$ with batch size $B$, and returns updated parameters $\theta_{m}^{(r)}$. 
The server aggregates these local updates via weighted averaging~\cite{li2019convergence, wu2024towards} to obtain the new global parameters $\theta^{(r)}$. 

The description above corresponds to a single-global design in which the server maintains a single parameter vector $\theta^{(r)}$. In our multi-global settings, the server instead maintains $K$ global models $\{\theta_k^{(r)}\}_{k=1}^K$ and applies the same broadcast–update–aggregate procedure independently to each model over its associated subset of clients. 

After $R$ rounds of federated training (the \emph{global} phase), we run a separate \emph{personalization} phase. 
The server maintains $K$ global models $\{\theta_k^{(R)}\}_{k=1}^K$, and each client $m$ receives a single initialization $\theta_m^{\text{init}}$ constructed from these $K$ models. 
Client $m$ then fine-tunes $\theta_m^{\text{init}}$ on its own preference data $D_{\text{train}}^m$ without further communication to obtain a personalized reward model $\theta_m^{\text{PFL}}$. 
Throughout the paper, we refer to $\{\theta_k^{(R)}\}_{k=1}^K$ as \emph{global} models and to $\theta_m^{\text{PFL}}$ as the resulting \emph{personalized} reward model for client $m$.

\subsection{Datasets and Preference Heterogeneity Design}

We evaluate our method using two types of preference datasets: a real-world dataset that captures naturally occurring annotator-specific preferences, and a synthetic dataset designed to systematically control style-based preference heterogeneity. In both datasets, we apply client-level splitting by shuffling each client's local data and allocating 50 samples to validation, 50 to test, and using the remaining samples for training.

\begin{itemize}
\item \textbf{Real-World Dataset.} 
We use the Reddit TL;DR summarization dataset~\citep{stiennon2020learning, volske2017tl}, which includes human preference annotations with worker identifiers. We select only workers who appear in all three splits (train, valid1, valid2) and have at least 100 samples in total, resulting in 34 clients and 144{,}502 samples. Each client corresponds to a real annotator, resulting in naturally imbalanced data quantities across clients and preserving real-world preference heterogeneity without any synthetic manipulation.

\item \textbf{Synthetic Dataset.} 
To construct controlled preference heterogeneity, we merge prompts from UltraFeedback~\citep{cui2023ultrafeedback} and p-Soups~\citep{jang2023personalized}, yielding 52K unique prompts. Using GPT-4o-mini\footnote{\url{https://platform.openai.com/docs/models/gpt-4o-mini}}, we generate two responses per prompt along two preference style axes: (1) \emph{Elementary} vs.\ \emph{PhD-level}, and (2) \emph{Humorous} vs.\ \emph{Non-humorous}, with 26K prompts assigned to each axis. We simulate 40 clients, each receiving 650 samples per axis. Clients are grouped into four dominant preference groups: (G1) Elementary or Humorous, (G2) Elementary or Non-humorous, (G3) PhD or Humorous, and (G4) PhD or Non-humorous, with 10 clients in each group, forming four equally sized preference-aligned clusters. For clarity, we index clients so that clients 1–10 belong to G1, 11–20 to G2, 21–30 to G3, and 31–40 to G4.

\end{itemize}

\subsection{Implementation Details} 
Unless explicitly mentioned otherwise, all experiments share the following configuration. We run 200 communication rounds of FL, sampling $|S_r|=5$ clients per round, and each selected client performs exactly 30 local training iterations per round with a batch size of 16, using AdamW~\cite{loshchilov2018decoupled} with $(\beta_1, \beta_2) = (0.9, 0.95)$ and a constant learning rate of $1\times10^{-5}$. In configurations with multiple global models and dynamic client--model assignment, the assignments are refreshed every $T=20$ rounds. Qwen2-0.5B~\cite{yang2024qwen2technicalreport} is used as the base model, and LoRA-based parameter-efficient tuning~\cite{hu2022lora, houlsby2019parameter} is applied with rank $r=8$, scaling factor $\alpha=16$, and dropout rate $0.05$. Following the standard Transformer architecture~\cite{vaswani2017attention}, LoRA is injected into the attention projection layers (\texttt{q\_proj}, \texttt{k\_proj}, \texttt{v\_proj}, \texttt{o\_proj}) and the MLP projection layers (\texttt{gate\_proj}, \texttt{up\_proj}). All experiments are implemented on top of the FedBiscuit~\cite{wu2024towards} codebase.\footnote{We build on the official FedBiscuit repository (\texttt{https://github.com/HarliWu/FedBiscuit}) and directly reuse its binary-selector reward model without modification. Given a prompt and two candidate responses $(y^{m,+}, y^{m,-})$, a single forward pass over the concatenated input produces a 2-dimensional preference logit over special option tokens corresponding to the two responses, which is trained with binary cross-entropy~\cite{wu2025towards} against a binary label indicating whether $y^{m,+}$ or $y^{m,-}$ is preferred.}

\vspace{-15pt}
\section{Empirical Motivation: Does Increasing Global Models Help?}
\label{sec:motivation}

Given our focus on personalized reward models under preference heterogeneity, we ask whether a multi-global FL design is beneficial for downstream personalization. We therefore compare a single-global model (\textsc{Single}) with a multi-global variant based on hard client clustering (\textsc{Hard}). In summary, from the perspective of providing an initialization for personalization, \textsc{Single} is preferable to \textsc{Hard}.

\paragraph{Hard multi-global design.}
We instantiate the multi-global setting using a dynamic hard-clustering design~(\textsc{Hard}) inspired by the multi-model design in FedBiscuit~\citep{wu2024towards}. Every $T$ communication rounds, the server recomputes client--model assignments by evaluating all $K$ global models on each client's validation set and assigning each client $m$ to the single model index
$k(m) \in [K]$ that attains the lowest validation loss, forming hard clusters
$\{\mathcal{A}_k\}_{k=1}^K$ with $\mathcal{A}_k = \{m : k(m)=k\}$. Between these reassignment steps, each round $r$ proceeds as follows: the server samples a subset of participating clients $S_r \subset [M]$, and each client $m \in S_r$ receives only the global model of its current cluster, $\theta_{k(m)}^{(r-1)}$, updates it locally on $\mathcal{D}_{\text{train}}^{m}$, and returns the updated parameters. The server then aggregates updates \emph{cluster-wise}: for each $k$, it updates $\theta_k^{(r)}$ using only the clients in $S_{r,k} = S_r \cap \mathcal{A}_k$. After the global phase, each client again selects the global model that yields the lowest validation loss on its data and fine-tunes it locally to obtain $\theta_m^{\text{PFL}}$. A full algorithmic description is deferred to Subsection~\ref{app:alg_hard}.
\vspace{-5pt}
\subsection{\textsc{Single} Beats Multi-Global FL}
\label{sec:motivation-single-vs-hard}

We find that \textsc{Single} provides a \emph{better initial point} for personalized reward models than \textsc{Hard}. Figure~\ref{fig:pfl-single-vs-hard} reports, for both the real-world and synthetic datasets, the mean preference-prediction accuracy of $\theta_m^{\text{PFL}}(\tau)$ on $\mathcal{D}_{\text{test}}^{m}$, averaged over all clients $m$, as a function of the number of local fine-tuning updates $\tau$. For each design, we take the final global model(s) from the federated phase and run client-side fine-tuning for $\tau \in \{0,80,160,240\}$ local updates on $\mathcal{D}_{\text{train}}^{m}$, where $\tau=0$ corresponds to the mean personalized accuracy of the global model assigned to each client before any local updates. 
\begin{figure*}[t]
    \centering
    \begin{subfigure}[t]{0.47\textwidth}
        \centering
        \includegraphics[height=0.38\textheight]{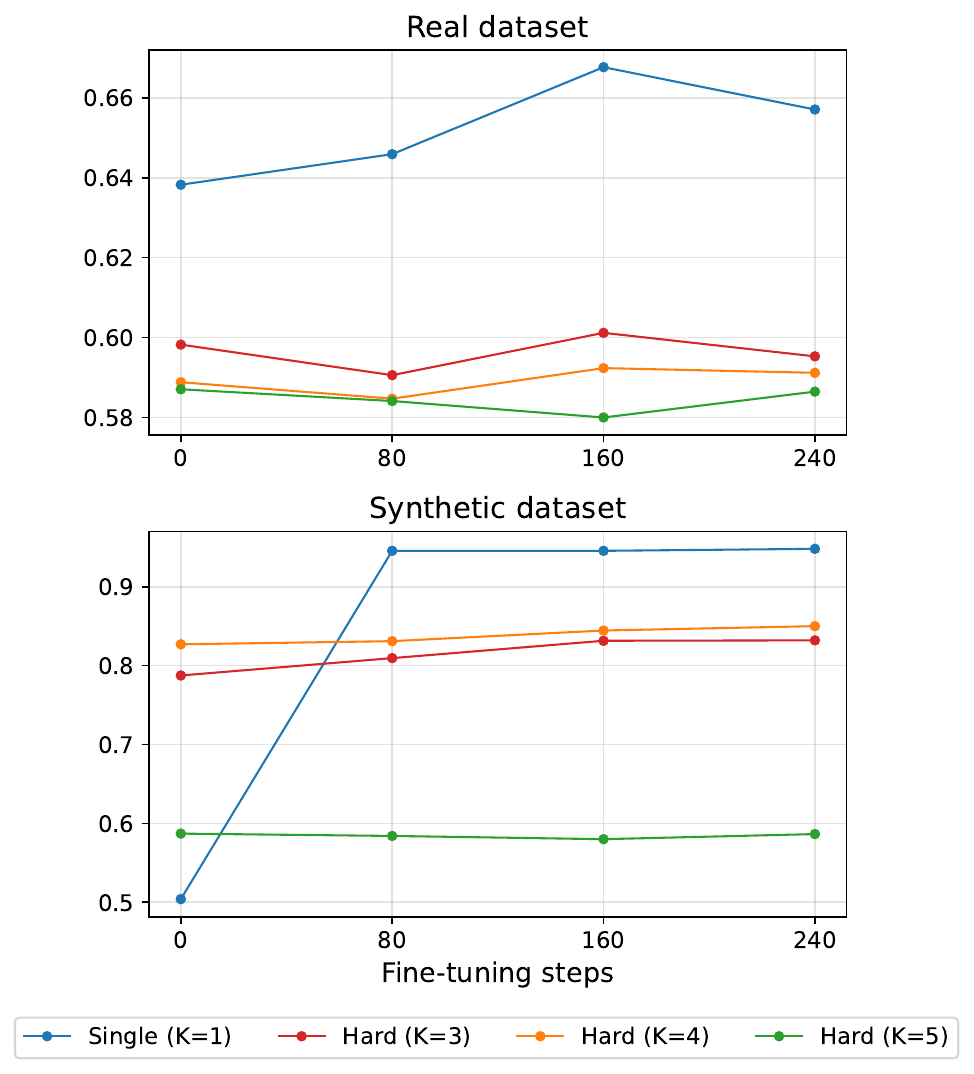}
        \caption{PFL test accuracy}
        \label{fig:pfl-single-vs-hard}
    \end{subfigure}
    \hfill
    \begin{subfigure}[t]{0.47\textwidth}
        \centering
        \includegraphics[height=0.38\textheight]{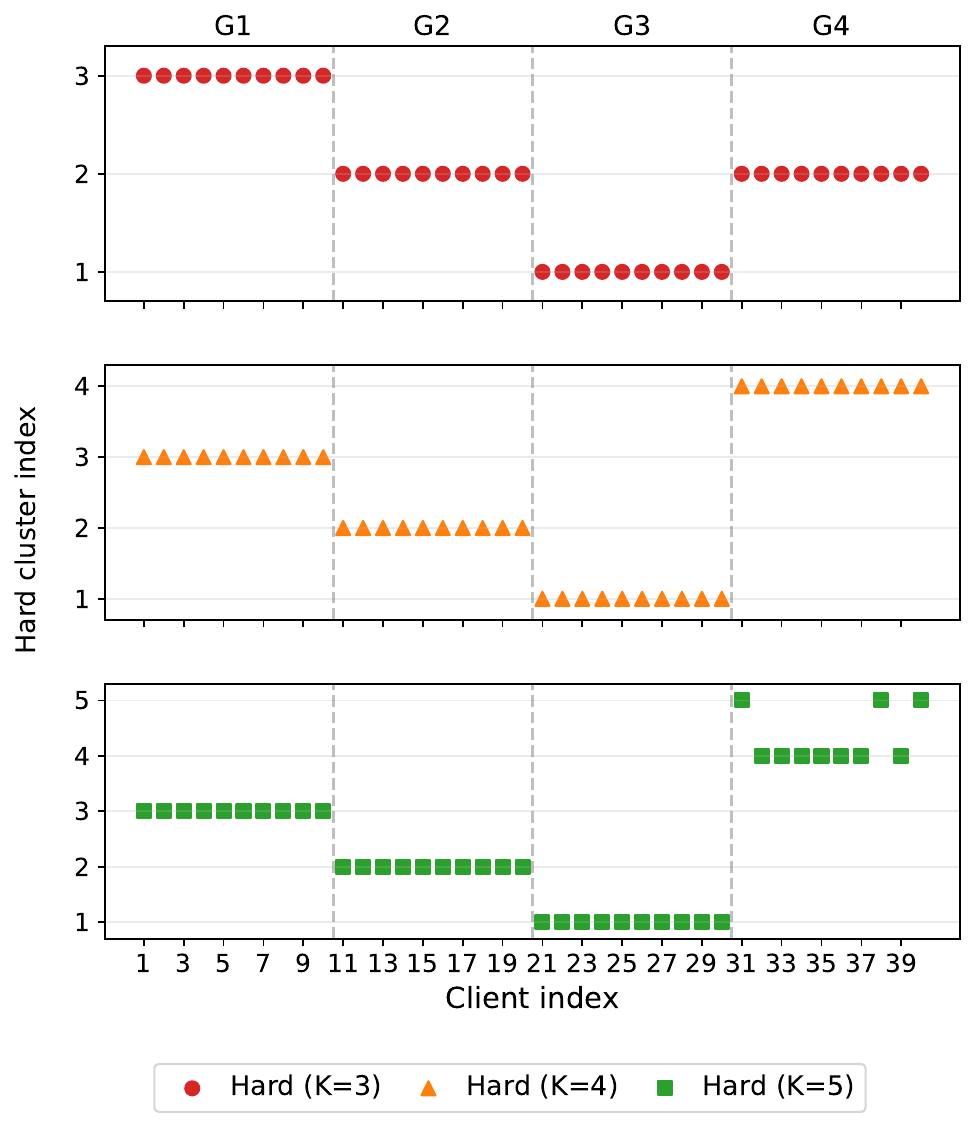}
        \caption{Hard-cluster assignment}
        \label{fig:pfl-hard-assignment}
    \end{subfigure}

    \caption{Comparison of (a) mean preference-prediction accuracy of $\theta_m^{\text{PFL}}$ on $\mathcal{D}_{\text{test}}^{m}$ as a function of fine-tuning steps for the real and synthetic datasets, comparing a \textsc{single} with \textsc{hard}~($K\in\{3,4,5\}$), and (b) hard-cluster indices assigned to
each client on the synthetic dataset for the same
\textsc{hard} settings.}
    \label{fig:pfl-single-hard-overall}
\end{figure*}

Across fine-tuning budgets $\tau>0$, \textsc{Single} consistently achieves higher personalized accuracy than \textsc{Hard} with $K \in \{3,4,5\}$ on both the real-world and synthetic datasets. This advantage holds even at $\tau=0$ (before fine-tuning) on the real-world dataset. On the synthetic dataset, we simulate 40 clients split into four equally sized preference groups G1--G4 (10 clients each). Clients within a group share a dominant style preference, while preferences across groups are deliberately conflicting, so a single global model cannot align well with all groups simultaneously. Consistent with this construction, \textsc{Single} starts around $50\%$ accuracy at $\tau=0$, close to random guessing and below all \textsc{Hard} settings with $K \in \{3,4,5\}$.
\begin{wrapfigure}[17]{r}{0.47\textwidth}
    \begin{minipage}{0.47\textwidth}
        \centering
        \vspace{-15pt}
        \includegraphics[width=\textwidth]{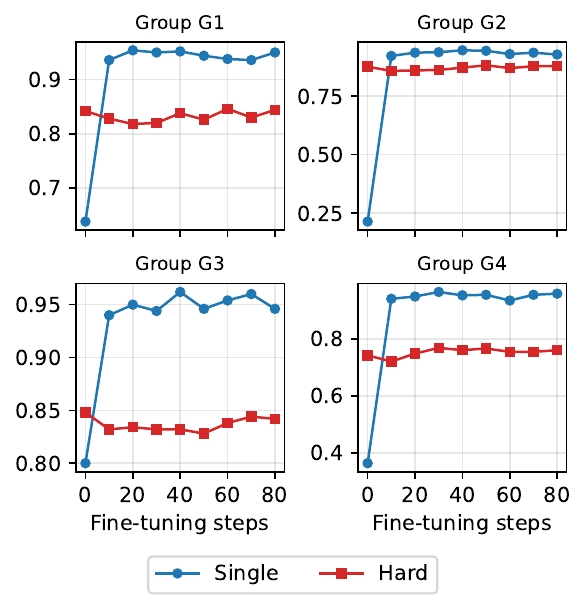}
        \vspace{-24pt}
        \caption{Mean preference-prediction accuracy of
    $\theta_m^{\text{PFL}}$ on $\mathcal{D}_{\text{test}}^{m}$ as a function of fine-tuning steps on the synthetic dataset, averaged over clients
    within each preference group (G1–G4), comparing \textsc{Single} and
    \textsc{Hard}.}
    \label{fig:synthetic-single-vs-hard-groups}
    \end{minipage}
\end{wrapfigure}
However, this initialization advantage of \textsc{Hard} on the synthetic dataset is modest and is quickly overturned once local adaptation is allowed.

\subsection{Why More Global Models Hurt Personalization?}
\label{sec:motivation-synthetic-analysis}

On the synthetic dataset, \textsc{Hard} performs best at $K=4$ because the hard clusters recover the four ground-truth preference groups (G1--G4). Figure~\ref{fig:pfl-hard-assignment} shows the final client-to-cluster assignments for $K \in \{3,4,5\}$ alongside the true groups. When $K=4$, each group maps to a distinct cluster, giving the highest personalization accuracy among \textsc{Hard} variants. 
When $K=3$, assignments are consistent within each group, but the fewer clusters merge different groups (e.g., G2 and G4), reducing accuracy relative to $K=4$. 
When $K=5$, extra clusters split some groups (notably G4) across multiple clusters, hurting personalization accuracy.

Even in the most favorable \textsc{Hard} setting with $K=4$, \textsc{Single} starts from a worse initial point but adapts better. Figure~\ref{fig:synthetic-single-vs-hard-groups} compares \textsc{Single} and \textsc{Hard} at the group level, plotting personalized accuracy for $\tau \in \{0,10,20,\dots,80\}$. At $\tau=0$, \textsc{Single} is a poor compromise (over $60\%$ accuracy on G1/G3 but under $40\%$ on G2/G4) and is worse than \textsc{Hard} in every group. After about 10 local updates, however, \textsc{Single} already surpasses \textsc{Hard} in all four groups, and this gap generally widens as $\tau$ increases.

\section{Why Can a Single Global Model Be Enough?}
\label{sec:analysis}

We compare a broader set of global initialization strategies and find that, on both the real-world and synthetic datasets, \textsc{Single} yields the strongest personalized reward models after fine-tuning.

\paragraph{Global initialization topologies.}
In addition to \textsc{Single} and \textsc{Hard}, we consider four further strategies for constructing global initializations.
(1) \textsc{Soft} (Soft-Fusion) maintains $K$ global experts together with client-specific soft weights. Every $T$ rounds, these soft assignments are refreshed from validation performance; between updates, each round broadcasts to client $m$ a single fused model formed from its current soft weights and aggregates the returned update back into all experts using the same weights. In the personalization phase, each client starts from its final fused model and then fine-tunes locally.
(2) \textsc{Hard-Oracle} fixes client clusters in advance, runs independent FL within each cluster to obtain cluster-specific global models, and then personalizes by sending each client the global model of its cluster as the initialization for local fine-tuning. On the synthetic dataset, we exploit the known preference groups G1–G4 and set $K=4$, assigning each group to its own cluster. On the real-world dataset, we instead derive fixed clusters by first training local-only models per client and then applying $K$-medoids clustering~\citep{kaufman2009finding} to a distance matrix built from Matthews correlation coefficients (MCC)~\citep{matthews1975comparison} between their predictions.
(3) \textsc{Centralized} breaks the FL constraint by pooling all preference data on a single server, training one centralized global reward model with a batch size scaled by $|S_r|$ to mirror the effective per-round averaging in \textsc{Single}, and then fine-tuning it separately on each client’s data.
(4) \textsc{Base} removes federated training altogether and obtains each personalized model by locally fine-tuning a randomly initialized reward model on that client’s data.
Full algorithmic details for \textsc{Soft} and \textsc{Hard-Oracle} are provided in Subsections~\ref{app:alg_softfusion} and~\ref{app:hard_oracle}.

\subsection{Within FL: Single vs.\ Multi-Global}
\label{sec:single-vs-multi}

\textsc{Single} provides the best initialization for personalization, outperforming all \emph{multi-global FL topologies}---\textsc{Hard}, \textsc{Soft}, and \textsc{Hard-Oracle}---on both datasets (Figure~\ref{fig:global-init-topologies}). We fix $K$ based on the best-performing \textsc{Hard} choice in each dataset: $K=3$ on the real-world dataset and $K=4$ on the synthetic dataset. Across all fine-tuning budgets $\tau$, \textsc{Single} achieves the highest mean personalized accuracy, while the multi-global methods follow a consistent ordering \textsc{Hard-Oracle} $>$ \textsc{Hard} $\gtrsim$ \textsc{Soft} that is already visible at $\tau=0$. On the real dataset, all multi-global variants remain strictly below \textsc{Single} for every $\tau$. On the synthetic dataset, \textsc{Hard-Oracle} (with preference-aligned clusters) is strongest at $\tau=0$, but \textsc{Single} overtakes all multi-global variants after only a small amount of local fine-tuning. Together, these results suggest that pooling all training signal into one global model provides a better initialization for personalization than splitting it across multiple globals.

\begin{figure}[t]
    \centering
    \includegraphics[width=0.85\linewidth]{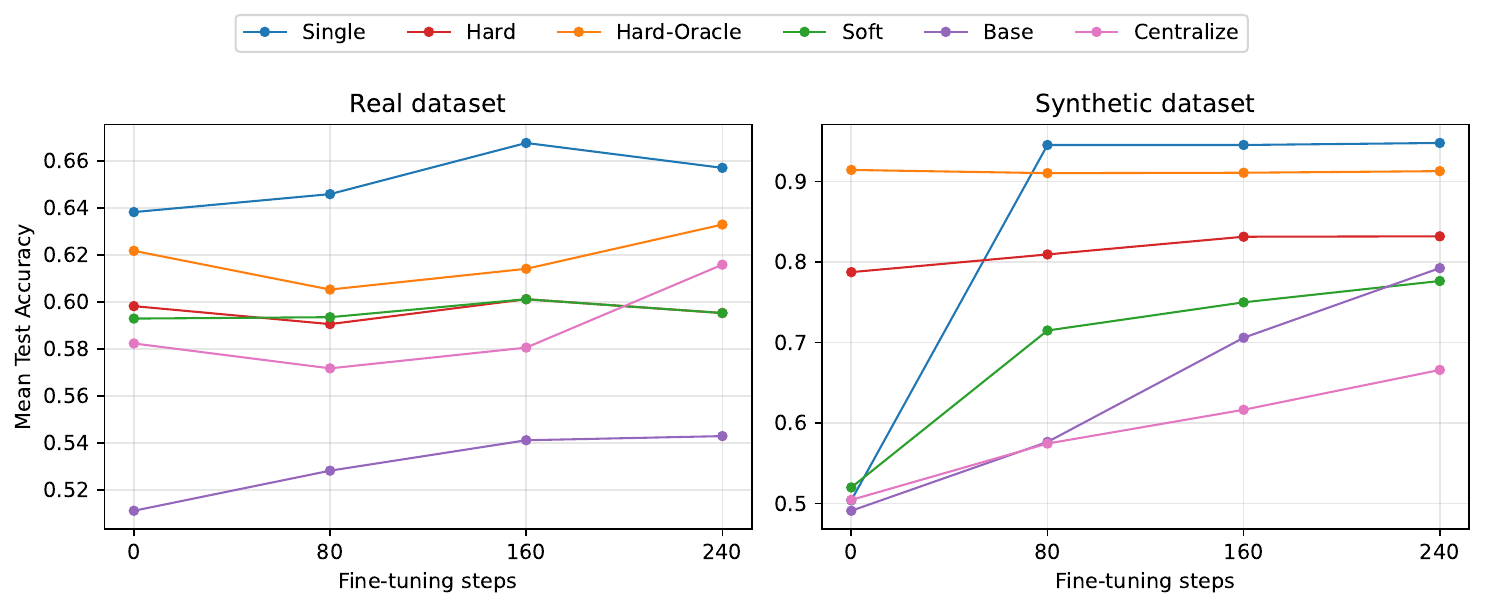}
    \caption{Mean preference-prediction accuracy on the real (left) and synthetic (right) datasets as a function of fine-tuning steps, comparing global initialization strategies \textsc{Single}, \textsc{Hard}, \textsc{Hard-Oracle}, \textsc{Soft}, \textsc{Base}, and \textsc{Centralized}.}
    \label{fig:global-init-topologies}
\end{figure}

\begin{figure}[h!]
    \centering
    \includegraphics[width=0.85\linewidth]{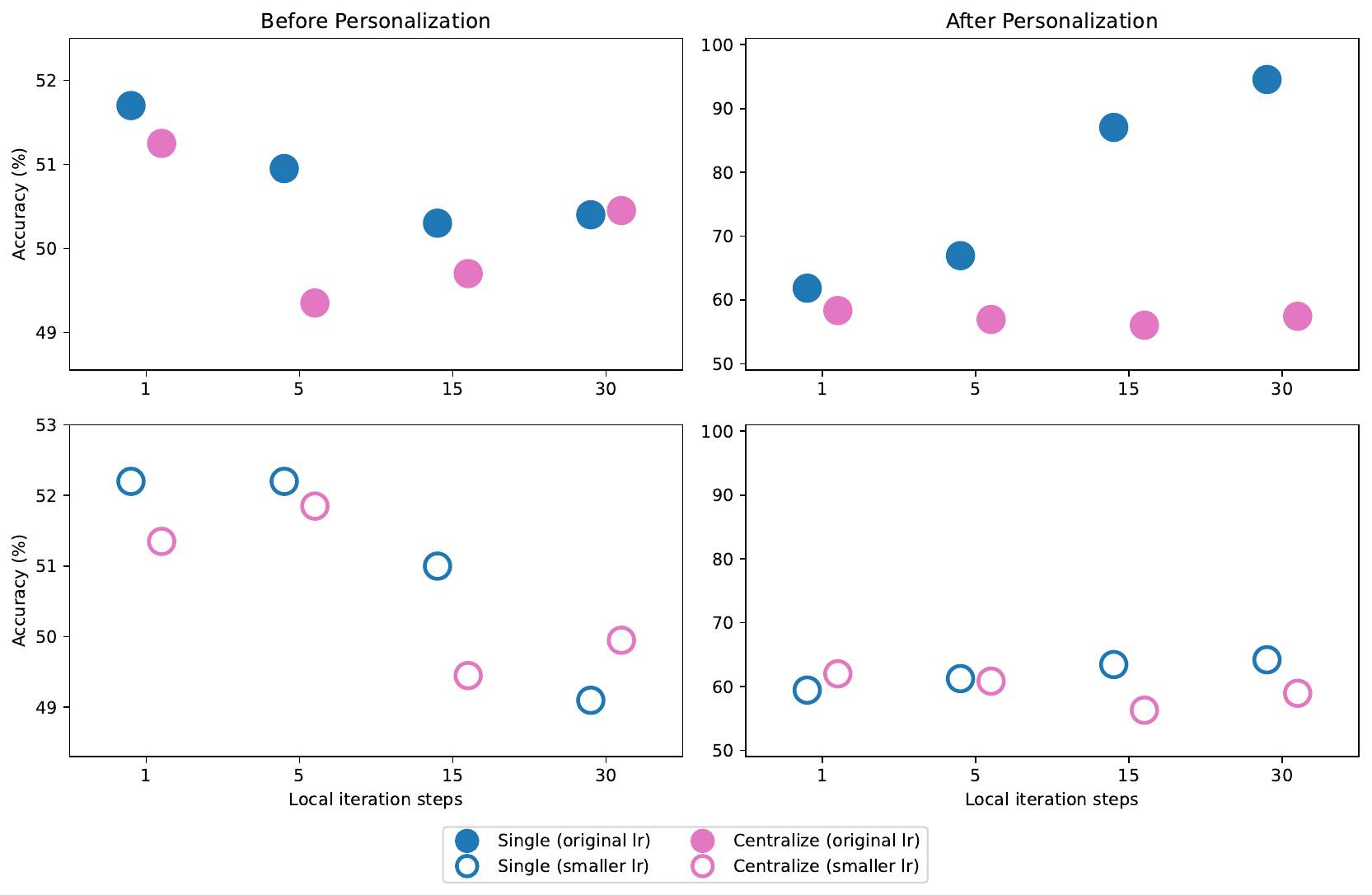}
    \caption{Sensitivity of \textsc{Single} vs.\ \textsc{Centralized} on the synthetic dataset under matched compute. We vary local iteration steps $\tau \in \{1,5,15,30\}$ and the learning rate (default vs.\ $10\times$ smaller), and report mean accuracy before personalization ($\tau=0$) and after $80$ fine-tuning updates.}

    \label{fig:single-centralize-sensitivity}

\end{figure}

\subsection{Preference Conflicts: Centralized Batches vs.\ FL Aggregation}
\label{sec:fl-vs-centralized}

Across both datasets, \textsc{Single} provides a better initialization for personalization than \textsc{Centralized}. As shown in Figure~\ref{fig:global-init-topologies}, \textsc{Single} consistently outperforms \textsc{Centralized} across fine-tuning budgets $\tau$, and on the synthetic dataset \textsc{Centralized} even underperforms the random-start baseline (\textsc{Base}). A plausible explanation is that centralized training mixes opposing preferences within batches, yielding inconsistent supervision and poorer downstream generalization ~\citep{liu2020bad}.

Crucially, allowing each client to sufficiently adapt the model to its own preference data before aggregation mitigates this issue. Figure~\ref{fig:single-centralize-sensitivity} corroborates this through a sensitivity analysis that varies the learning rate (default vs.\ $10\times$ smaller) and the local iteration steps used to construct the initial model ($\tau\in\{1,5,15,30\}$) under matched compute. While both methods start near random accuracy before personalization, after $80$ fine-tuning updates \textsc{Single} increasingly outperforms \textsc{Centralized} as $\tau$ grows (with similar behavior at $\tau=1$), indicating that sufficient client-side adaptation prior to averaging is key to producing an effective initialization for downstream personalization.

\newpage
\section{Algorithmic Details of Global Initialization Topologies for Federated Personalized Reward Models}
\label{sec:alg_details}
In this section, we detail the three training topologies used in our empirical study.
All methods share the same overall goal: first learn a \emph{global} initialization
via federated training (GFL), and then obtain \emph{personalized} reward models (PFL)
by local fine-tuning on each client's preference data.
The topologies differ only in how they construct and update global models during GFL:
(i) a single shared global model (\textsc{Single}),
(ii) multiple hard-clustered global models (\textsc{Hard} / \textsc{Hard-Balance}),
(iii) a soft-clustered multi-global scheme based on fusion weights (\textsc{Soft}),
and (iv) an oracle hard-clustered multi-global scheme with fixed clusters
(\textsc{Hard-Oracle}). We use \textsc{Hard} as our main multi-global design,
and treat \textsc{Hard-Balance} as a balanced variant mainly reported in the Section~\ref{sec:suppl}. The algorithms are agnostic to the specific reward-model parameterization
(e.g., binary selector or scalar-valued reward); they only assume a differentiable
preference loss defined on $(x, y^{+}, y^{-})$ tuples.

In all three topologies, we implement federated training using
LoRA-style parameter-efficient fine-tuning~\citep{hu2022lora,houlsby2019parameter}:
each client and the server share the same frozen base LLM for the reward
model, and the trainable parameters $\theta$ correspond only to the attached
LoRA adapters. During communication, only these low-rank adapters are transmitted and
aggregated, while the shared base model remains frozen on all clients, so
both communication and memory overhead are dominated by the small LoRA
parameters.
\subsection{Single Global Model}
\label{app:alg_single}

\begin{algorithm}[h!]
\small
\caption{Single: Federated Training of a Single Global Reward Model and Final Personalization}
\label{alg:single}
\KwInput{%
  number of rounds $R$; local steps $\tau$; client set $[M]$;
  initial global parameters $\theta^{(0)}$}
\KwResult{%
  global reward model $\theta^{(R)}$ and personalized models
  $\{\theta_m^{\text{PFL}}\}_{m=1}^M$
}

\BlankLine
\textbf{Phase 1: Federated training of a single global model}\;

\For{$r = 1,2,\dots,R$}{
  Sample a subset of participating clients $S_r \subseteq [M]$\;
  
  \tcp{(1) Server $\rightarrow$ clients: broadcast current global model}
  \ForEach{$m \in S_r$ \textbf{in parallel}}{
    Client $m$ starts from $\theta^{(r-1)}$\;
    Run $\tau$ local update steps on $D_{\text{train}}^m$ and send
      $\theta^{(r)}_{m}$ to the server\;
  }

  \tcp{(2) FedAvg aggregation at the server}
  $\theta^{(r)} \leftarrow \frac{1}{|S_r|} \sum_{m\in S_r}\,\theta_{m}^{(r)}$\;
}

\BlankLine
\textbf{Phase 2: Final personalization from the global initialization}\;

\tcp{(1) Server $\rightarrow$ clients: broadcast final global model}
Server broadcasts $\theta^{(R)}$ to all clients\;

\tcp{(2) Client-side personalization }
\ForEach{$m \in [M]$ \textbf{in parallel}}{
  Initialize $\theta_m^{(\text{PFL},0)} \leftarrow \theta^{(R)}$\;\\
  Fine-tune on $D^m_{\text{train}}$ to obtain
    $\theta_m^{\mathrm{PFL}}$\;
}

\Return{$\theta^{(R)},\ \{\theta_m^{\text{PFL}}\}_{m=1}^M$}\;
\end{algorithm}

\paragraph{Description.}
Algorithm~\ref{alg:single} implements the simplest design: a single shared global reward model trained with FedAvg~\citep{mcmahan2017communication} on preference tuples $(x, y^{+}, y^{-})$, followed by client-side fine-tuning.
At round $r$, the server holds the current global parameters $\theta^{(r-1)}$, samples a subset of clients $S_r \subseteq [M]$, and sends $\theta^{(r-1)}$ to each $m \in S_r$. Each selected client performs $\tau$ local optimization steps
on its local preference data and returns an updated model $\theta_m^{(r)}$; the server then averages $\{\theta_m^{(r)}\}_{m\in S_r}$ to obtain the next
global model $\theta^{(r)}$. This corresponds to the single-global training design used in FedBis~\citep{wu2024towards}, which we adopt as a baseline for studying the role of global initialization in personalized reward modeling.

In Phase~2, the final global model $\theta^{(R)}$ is reused purely as a \emph{global initialization} for personalization~\citep{oh2021fedbabu}: each client $m$ starts from $\theta^{(R)}$, performs additional local updates
on $D^m_{\text{train}}$ without further communication, and obtains a personalized reward model $\theta_m^{\text{PFL}}$. Thus, the ``Single'' design consists of one shared global reward model trained federatively, followed by independent local adaptation on each client.

\subsection{Hard-Clustered Multi-Global Models (Hard / Hard-Balance)}
\label{app:alg_hard}

\begin{algorithm}[h!]
\small
\caption{Hard: Federated Training with Hard-Clustered Global Models and Final Personalization (Hard/ Hard-Balance)}
\label{alg:hard}
\KwInput{%
  number of global models $K$; total rounds $R$;
  reassignment period $T$; local update steps $\tau$;
  client set $[M]$; client weights $\{p_m\}_{m\in[M]}$;
  initial parameters $\{\theta_k^{(0)}\}_{k=1}^K$;
  mode $\textsc{mode} \in \{\textsc{Hard-Plain},\textsc{Hard-Balance}\}$
}
\KwResult{%
  hard-clustered global models $\{\theta_k^{(R)}\}_{k=1}^K$ (GFL) and
  personalized models $\{\theta_m^{\text{PFL}}\}_{m=1}^M$
}

\BlankLine
\textbf{Phase 1: Warm-up (independent FedAvg for each model)}\;

\For{$k = 1,2,\dots,K$}{
  \For{$t = 1,2,\dots,T$}{
    Sample a client subset $S_{t,k} \subseteq [M]$\;
    \ForEach{$m \in S_{t,k}$ \textbf{in parallel}}{
      Client $m$ receives $\theta_k^{(t-1)}$\;
      Run $\tau$ local update steps on $D_{\text{train}}^m$ and send
      $\theta^{(t)}_{k,m}$ to the server\;
    }
    \tcp{FedAvg update for model $k$}
    $\theta_k^{(t)} \leftarrow \frac{1}{|S_{t,k}|}
      \sum_{m\in S_{t,k}} \theta^{(t)}_{k,m}$\;
  }
}

\BlankLine
\textbf{Phase 2: Dynamic client-driven hard assignment}\;

\For{$r = T+1,T+2,\dots,R$}{

  \If{$r-1 \equiv 0\;(\bmod T)$}{
    \tcp{(1) Client-driven reassignment by validation loss}
    Server broadcasts $\{\theta_k^{(r-1)}\}_{k=1}^K$ to all clients\;
    \ForEach{$m \in [M]$}{
      $\ell_{m,k} \gets
        \text{ValLoss}(\theta_k^{(r-1)}, D^m_{\mathrm{val}})$
      for all $k \in [K]$\;
      $k(m) \gets \arg\min_{k} \ell_{m,k}$\;
    }
    $\mathcal{A}_k \leftarrow \{\,m \in [M] : k(m) = k\,\}$
    for all $k \in [K]$\;

    \tcp{(2) Optional balancing step (\textsc{Hard-Balance} only; empirical results in the Section~\ref{sec:suppl})}
    \If{$\textsc{mode} = \textsc{Hard-Balance}$}{
      \While{$\max_k |\mathcal{A}_k| - \min_k |\mathcal{A}_k| > 1$}{
        $k_{\max} \gets \arg\max_{k} |\mathcal{A}_k|$\;
        $k_{\min} \gets \arg\min_{k} |\mathcal{A}_k|$\;\\
        Select $m \in \mathcal{A}_{k_{\max}}$ whose second-best model index
        $v$ satisfies $v = k_{\min}$\;\\
        Move $m$ from $\mathcal{A}_{k_{\max}}$ to $\mathcal{A}_{k_{\min}}$\;
      }
    }
  }

  \tcp{(3) Single global sampling and cluster-wise training}
  Sample a subset of participating clients $S_r \subseteq [M]$\;\\
  \For{$k = 1,2,\dots,K$}{
    Define $S_{r,k} \leftarrow S_r \cap \mathcal{A}_k$
    \tcp*{$\{S_{r,k}\}_{k=1}^K$ is a disjoint partition of $S_r$}
    \If{$S_{r,k} \neq \emptyset$}{
      Server sends $\theta_k^{(r-1)}$ to all $m \in S_{r,k}$\;\\
      \ForEach{$m \in S_{r,k}$ \textbf{in parallel}}{
        Client $m$ runs $\tau$ local update steps on $D_{\text{train}}^m$
        and sends $\theta^{(r)}_{k,m}$ to the server\;
      }
      \tcp{Weighted aggregation with residual of previous model}
      $\theta_k^{(r)} \leftarrow 
        \Bigl(1 - \sum_{m\in S_{r,k}} p_m \Bigr)\theta_k^{(r-1)}
        + \sum_{m\in S_{r,k}} p_m \,\theta^{(r)}_{k,m}$\;
    }
  }
}

\BlankLine
\textbf{Phase 3: Final personalization (common to Hard/ Hard-Balance)}\;

\tcp{(1) Server $\rightarrow$ clients: final hard assignment by validation loss}
Server broadcasts $\{\theta_k^{(R)}\}_{k=1}^K$ to all clients\;
\ForEach{$m \in [M]$}{
  $\ell_{m,k}^{\text{final}} \gets
    \text{ValLoss}(\theta_k^{(R)}, D^m_{\mathrm{val}})$
  for all $k \in [K]$\;
  $k_m^{\text{final}} \gets \arg\min_{k} \ell_{m,k}^{\text{final}}$\;
}

\tcp{(2) Client-side personalization from the assigned global model}
\ForEach{$m \in [M]$ \textbf{in parallel}}{
  Initialize $\theta_m^{(\text{PFL},0)} \leftarrow
    \theta_{k_m^{\text{final}}}^{(R)}$\;
  Fine-tune on $D^m_{\text{train}}$ to obtain
    $\theta_m^{\mathrm{PFL}}$\;
}

\Return{$\{\theta_k^{(R)}\}_{k=1}^K,\ \{\theta_m^{\text{PFL}}\}_{m=1}^M$}\;
\end{algorithm}

\paragraph{Description.}
Algorithm~\ref{alg:hard} is a direct adaptation of the multi-global design
proposed in FedBiscuit~\citep{wu2024towards}, reinterpreted explicitly in an
EM-style manner~\citep{dempster1977maximum} for our personalized reward
modeling setting. The server maintains $K$ cluster-specific global models,
but to avoid the prohibitive communication cost of broadcasting all $K$
models at every FL round, it sends \emph{only one} model to each client:
in round $r$, client $m$ receives the model of its assigned cluster and
performs local updates starting from this model. Cluster assignments are
updated only once every $T$ rounds via a hard-clustering step.

The EM interpretation is as follows:
\begin{itemize}
    \item \textbf{E-step (assignment):} every $T$ rounds, each client $m$
    evaluates all $K$ models on its validation set $D^m_{\mathrm{val}}$
    and is hard-assigned to the model $k_m$ with the lowest validation loss,
    forming clusters $\{\mathcal{A}_k\}_{k=1}^K$.
    \item \textbf{M-step (clustered FL):} in the subsequent $T$ rounds,
    each cluster $\mathcal{A}_k$ runs standard FL \emph{only} on its own
    model $\theta_k$. At round $r$, the server samples a global set $S_r$
    of participating clients, induces per-cluster subsets
    $S_{r,k} = S_r \cap \mathcal{A}_k$, sends $\theta_k^{(r-1)}$ to clients
    in $S_{r,k}$, and updates $\theta_k^{(r)}$ via a modified aggregation
    rule described below.
\end{itemize}

Algorithm~\ref{alg:hard} proceeds in three phases. Phases~1 and~2 jointly
train the $K$ global models in this multi-global GFL design, and Phase~3
converts them into personalized PFL solutions. In Phase~1, each global model is warmed up independently with FedAvg for $T$ rounds to obtain diverse initializations before the first E-step. Phase~2 then runs the EM loop: every $T$ rounds, clients are reassigned to models based on validation loss (E-step), followed by $T$ rounds of clustered FL where each cluster $\mathcal{A}_k$ updates only its own model $\theta_k$ (M-step).

In Phase~2, the aggregation step for each cluster model follows the rule
originally proposed in~\citep{wu2024towards}:
\begin{equation}
    p_m
    =
    \frac{|D^{m}_{\text{train}}|}
         {\bigl|\cup_{j\in[M]} D^{j}_{\text{train}}\bigr|},
    \qquad
    \theta_k^{(r)}
    =
    \Bigl(1 - \sum_{m\in S_{r,k}} p_m \Bigr)\,\theta_k^{(r-1)}
    +
    \sum_{m\in S_{r,k}} p_m \,\theta^{(r)}_{k,m},
    \label{eq:fedbiscuit_agg}
\end{equation}
where $S_{r,k}$ is the set of clients assigned to model $k$ at round $r$.
In this multi-model setting, the total client weight
$\sum_{m\in S_{r,k}} p_m$ for a given model $k$ can vary greatly across
rounds, so naive FedAvg over $S_{r,k}$ can overly distort $\theta_k^{(r-1)}$
when only a few or small clients participate. The rule in
\eqref{eq:fedbiscuit_agg} keeps a $(1 - \sum_{m\in S_{r,k}} p_m)$ fraction
of $\theta_k^{(r-1)}$ when this sum is small, preventing unstable learning.

Within Phase~2, we consider two modes. Our main method, \textsc{Hard},
uses the EM-style multi-global design described above without any
additional size control on the clusters: client assignments are purely
driven by validation loss. For completeness, we also implement a balanced variant,
\textsc{Hard-Balance}, which corresponds more closely to the original
FedBiscuit~\citep{wu2024towards}: after each reassignment step,
it applies a heuristic that moves some clients from over-populated
clusters to under-populated ones until the cluster sizes differ by at most
one. The full algorithmic description above applies to both modes; the
empirical results for \textsc{Hard-Balance} are reported in the Section~\ref{sec:suppl}.

In Phase~3, both variants are converted to PFL in the same way: each client
selects its best global model among $\{\theta_k^{(R)}\}_{k=1}^K$ based on
validation loss and locally fine-tunes it on $D^m_{\text{train}}$ to
obtain $\theta_m^{\text{PFL}}$. This yields personalized models that are
directly comparable to the single-global baseline.

\newpage
\subsection{Soft-Fusion Multi-Global Models}
\label{app:alg_softfusion}

\begin{algorithm}[h!]
\small
\caption{Soft-Fusion: Federated Training with Soft-Clustered Global Models and Final Personalization}
\label{alg:softfusion}
\SetKwFunction{UpdateW}{UpdateSoftWeights}
\KwInput{%
  number of global models $K$; total rounds $R$;
  reassignment period $T$; local update steps $\tau$;
  client set $[M]$; initial global parameters $\{\theta_k^{(0)}\}_{k=1}^K$
}
\KwResult{%
  global models $\{\theta_k^{(R)}\}_{k=1}^K$ (GFL) and
  personalized models $\{\theta_m^{\text{PFL}}\}_{m=1}^M$
}

\BlankLine
\textbf{Step 0: Initial soft assignments via validation accuracy}\;

$\{w_{k,m}^{(0)}\}_{k,m} \leftarrow$ \UpdateW{$\{\theta_k^{(0)}\}_{k=1}^K$}\;

\BlankLine
\textbf{Step 1: Federated multi-global training}\;

\For{$r = 1,2,\dots,R$}{
  Sample participating clients $S_r \subseteq [M]$\;

  \tcp{(1) Server $\rightarrow$ clients: fusion-based initialization}
  \ForEach{$m \in S_r$}{
    $\theta^{\mathrm{fuse},(r-1)}_m \leftarrow
      \sum_{k=1}^K w_{k,m}^{(r-1)} \theta_k^{(r-1)}$\;
    Server sends $\theta^{\mathrm{fuse},(r-1)}_m$ to client $m$\;
  }

  \tcp{(2) Local training at clients}
  \ForEach{$m \in S_r$ \textbf{in parallel}}{
    Initialize $\theta_m^{(r,0)} \leftarrow \theta^{\mathrm{fuse},(r-1)}_m$\;
    Run $\tau$ local update steps on $D^m_{\text{train}}$ and
    send $\theta_m^{+,(r)}$ to the server\;
  }

  \tcp{(3) Expert-wise aggregation at server}
  \For{$k = 1,2,\dots,K$}{
    $Z_k^{(r)} \leftarrow \sum_{m \in S_r} w_{k,m}^{(r-1)}$\;
    $\theta_k^{(r)} \leftarrow
      \frac{1}{Z_k^{(r)}} \sum_{m \in S_r}
      w_{k,m}^{(r-1)} \theta_m^{+,(r)}$\;
  }

  \tcp{(4) Update soft assignment weights every $T$ rounds and at $r{=}R$}
  \If{$(r \bmod T = 0)\ \mathrm{or}\ r = R$}{
    $\{w_{k,m}^{(r)}\}_{k,m} \leftarrow$ \UpdateW{$\{\theta_k^{(r)}\}_{k=1}^K$}\;
  }
}

\BlankLine
\textbf{Step 2: Final personalization phase}\;

\tcp{(1) Server: compute fusion-based initialization per client}
\ForEach{$m \in [M]$}{
  $\theta_m^{\text{init}} \leftarrow
    \sum_{k=1}^K w_{k,m}^{(R)} \theta_k^{(R)}$\;
  Server sends $\theta_m^{\text{init}}$ to client $m$\;
}

\tcp{(2) Client-side personalization from fused initialization}
\ForEach{$m \in [M]$ \textbf{in parallel}}{
  Initialize $\theta_m^{(\text{PFL},0)} \leftarrow \theta_m^{\text{init}}$\; 
  Fine-tune on $D^m_{\text{train}}$ to obtain
    $\theta_m^{\mathrm{PFL}}$\;
}

\BlankLine
\SetKwProg{Fn}{Function}{:}{}
\Fn{\UpdateW{$\{\theta_k\}_{k=1}^K$}}{
  \tcp{Server broadcasts models, clients compute weights, server collects}
  Server broadcasts $\{\theta_k\}_{k=1}^K$ to all clients\;\\
  \ForEach{$m \in [M]$ \textbf{in parallel}}{
    \For{$k = 1,2,\dots,K$}{
      $\mathrm{ValAcc}_{k,m} \gets
        \text{ValAcc}(\theta_k, D^m_{\mathrm{val}})$\;
    }
    $S_m \leftarrow \sum_{k=1}^K \mathrm{ValAcc}_{k,m}$\;
    \For{$k = 1,2,\dots,K$}{
      $w_{k,m} \leftarrow \mathrm{ValAcc}_{k,m} / S_m$\;
    }
    Client $m$ sends $\{w_{k,m}\}_{k=1}^K$ to the server\;
  }
  Server collects all $\{w_{k,m}\}_{k\in[K],\,m\in[M]}$\;\\
  \Return{$\{w_{k,m}\}_{k \in [K],\, m \in [M]}$}\;
}
\Return{$\{\theta_k^{(R)}\}_{k=1}^K,\ \{\theta_m^{\text{PFL}}\}_{m=1}^M$}\;
\end{algorithm}

\paragraph{Description.}
Algorithm~\ref{alg:softfusion} introduces our proposed \emph{Soft-Fusion}
design: a soft-clustered multi-global extension of the EM-style design in
FedBiscuit~\citep{wu2024towards}, tailored for federated personalized reward
modeling under communication constraints. As in the hard-clustered case, we
avoid the prohibitive cost of broadcasting all $K$ models at every FL round:
each client still receives \emph{only one} model per round, while all $K$
global models are updated using information from all clients through soft
weights. Unlike the hard-clustered design, we do not run a separate
FedAvg warm-up phase; training starts directly from the initial global
parameters and the soft weights computed in Step~0.

Instead of hard-assigning each client to a single global model, Soft-Fusion
maintains, for each client $m$, a vector of weights
$\{w_{k,m}\}_{k=1}^K$ over the $K$ models, derived from validation
accuracies on $D^m_{\mathrm{val}}$. This is similar in spirit to
mixture-of-experts style soft assignment~\citep{jacobs1991adaptive,
shazeer2017outrageously}, but the communication pattern per round remains
the same as in the single-global baseline. In the GFL phase (Step~1 in
Algorithm~\ref{alg:softfusion}), the server constructs for each participating
client a \emph{fused} initialization
$\theta_m^{\mathrm{fuse},(r-1)} = \sum_{k=1}^K w_{k,m}^{(r-1)}
\theta_k^{(r-1)}$ and sends only this fused model. The client runs $\tau$
local update steps from $\theta_m^{\mathrm{fuse},(r-1)}$, returns
$\theta_m^{+,(r)}$, and the server updates each expert via a weighted
aggregation
\[
    \theta_k^{(r)}
    \;=\;
    \arg\min_{\theta}
    \sum_{m \in S_r}
    w_{k,m}^{(r-1)}
    \bigl\lVert \theta - \theta_m^{+,(r)} \bigr\rVert_2^2,
\]
whose closed-form solution is the weighted average in
Algorithm~\ref{alg:softfusion}. The weights $\{w_{k,m}\}$ are refreshed
every $T$ rounds (and at $r = R$) by temporarily broadcasting all $K$ models
for validation and recomputing the weights, analogous to the grouping step
in hard-clustered EM but without discrete assignment.

In the PFL phase (Step~2), the final multi-global system is converted into
personalized models by forming, for each client $m$, a fused initialization
$\theta_m^{\text{init}} = \sum_{k=1}^K w_{k,m}^{(R)} \theta_k^{(R)}$ and
performing local fine-tuning on $D^m_{\text{train}}$ without further
aggregation, yielding $\theta_m^{\text{PFL}}$. Despite using soft clustering
and multiple experts, each communication round still sends only one model
per client, so the per-round communication cost matches that of the
single-global baseline. This design therefore isolates the effect of a
soft-clustered global initialization from both the single-global FedBis-style
training and the hard-clustered FedBiscuit-style multi-global training in
our federated personalized reward modeling experiments.

Finally, under the standard LoRA initialization in our implementation, the down-projection matrices $A$ are randomly initialized while the up-projection matrices $B$ are set to zero, so the effective low-rank update $BA$ is initially zero and every global model behaves exactly like the shared base model. Consequently, all $K$ models start from the same function, their validation performance on each client is almost identical, and the soft weights $w_{k,m}$ become nearly uniform across $k$,
collapsing Soft-Fusion to the single-global baseline. To avoid this, we initialize the $K$ adapters as small, mutually diverse low-rank perturbations of the base weights so that the initial global models differ slightly but meaningfully, keeping the soft assignments non-uniform from the beginnin (see Subsection~\ref{app:softfusion_lora_init} for details).

\subsection{Hard-Oracle: Oracle Hard-Clustered Multi-Global GFL}
\label{app:hard_oracle}

\begin{algorithm}[h!]
\small
\caption{Hard-Oracle: Federated Training with Oracle Hard Clusters and Final Personalization}
\label{alg:hard_oracle}
\KwInput{%
  number of clusters $K$; total baseline FL rounds $R$;
  local update steps $\tau$; client set $[M]$;
  initial parameters $\{\theta_k^{(0)}\}_{k=0}^K$
}
\KwResult{%
  cluster-wise global models $\{\theta_k^{(R_k)}\}_{k=1}^K$ (GFL)
  and personalized models $\{\theta_m^{\mathrm{PFL}}\}_{m=1}^M$
}

\BlankLine
\textbf{Phase~1: Oracle clustering from local-only models}\;

\tcp{(1) Train local-only models}
\ForEach{$m \in [M]$ \textbf{in parallel}}{
  Initialize $\theta_m^{(0)} \leftarrow \theta_{0}^{(0)}$\;
  Fine-tune on $D^m_{\text{train}}$
  to obtain $\theta_m^{\mathrm{local}}$\;
}

\tcp{(2) Construct distance matrix $D$}
\ForEach{$(i,v)\!\in\![M]^2$}{
  Obtain hard predictions from $\theta_i^{\mathrm{local}}$ and
  $\theta_v^{\mathrm{local}}$ on all test sets
  $\{D^j_{\text{test}}\}_{j=1}^M$
  
  Compute the MCC~\citep{matthews1975comparison} similarity $S(i,v)$ and
  set $D(i,v) \leftarrow 1 - S(i,v)$\;
}

\tcp{(3) Oracle hard clustering via $K$-medoids}
Run $K$-medoids~\citep{kaufman2009finding} on $D$ to obtain clusters
$\{\mathcal{A}_k\}_{k=1}^K$\;

\tcp{(4) Round-budget normalization for fairness}
Set
$
  R_k = \Bigl\lceil R \cdot \frac{|\mathcal{A}_k|}{M} \Bigr\rceil
$
to match the overall training effort per client of the \textit{Single} GFL\;

\tcp{(5) Initialize cluster-global models for subsequent GFL}
For all $k \in [K]$, set the cluster-wise global model at round $0$ as
$\theta_k^{(0)}$\;

\BlankLine
\textbf{Phase~2: Cluster-wise GFL with fixed oracle clusters}\;\\
\For{$k = 1,2,\dots,K$; }{
  \For{$r = 1,2,\dots,R_k$}{\tcp{Cluster $\mathcal{A}_k$ remains fixed; no reassignment is performed}\;
    Sample $S_{k,r} \subseteq \mathcal{A}_k$\;
    {
      Server sends $\theta_k^{(r-1)}$ to all $m \in S_{k,r}$\;\\
      \ForEach{$m \in S_{k,r}$ \textbf{in parallel}}{
        Run $\tau$ local steps on $D^m_{\text{train}}$
        starting from $\theta_k^{(r-1)}$ to obtain $\theta_{k,m}^{(r)}$\;
      }
      \tcp{FedAvg aggregation}
      $\theta_k^{(r)} \leftarrow
        \frac{1}{|S_{k,r}|}
        \sum_{m\in S_{k,r}} \theta_{k,m}^{(r)}$\;
    }
  }
}

\BlankLine
\textbf{Phase~3: Final personalization}\;

\tcp{Each cluster-global model is sent only to its assigned clients}
\For{$k = 1,2,\dots,K$}{
  Server broadcasts $\theta_k^{(R_k)}$ to all $m \in \mathcal{A}_k$\;\\
  \ForEach{$m \in \mathcal{A}_k$ \textbf{in parallel}}{
    Initialize $\theta_m^{(\text{PFL},0)} \leftarrow \theta_k^{(R_k)}$\;
    Fine-tune on $D^m_{\text{train}}$ to obtain
    $\theta_m^{\mathrm{PFL}}$\;
  }
}

\Return{
  $\{\theta_k^{(R_k)}\}_{k=1}^K,\
  \{\theta_m^{\mathrm{PFL}}\}_{m=1}^M$}\;
\end{algorithm}

\paragraph{Description.}
Algorithm~\ref{alg:hard_oracle} evaluates the potential of
hard-clustered multi-global GFL under idealized conditions where cluster
assignments are determined \emph{before} federated training begins.
Unlike validation-driven reassignment in dynamic multi-global approaches
(e.g., \textit{Hard} or \textit{Soft}), Hard-Oracle keeps cluster
assignments fixed throughout GFL. This allows us to study the effect of
multi-global topologies without confounding influences from dynamic
clustering.

\textbf{Phase~1} identifies oracle clusters based solely on behavioral
similarity of client-specific local-only models.
Each client fine-tunes $\theta_{0}^{(0)}$ on its own data to yield
$\theta_m^{\mathrm{local}}$, a model well-adapted to its own test
distribution and reflective of its underlying behavioral preference.
We then compare every pair of clients $(i,v)$ by examining how these models
make \emph{hard} binary predictions across all test distributions.
For each test-owner $j \in [M]$, we first obtain hard labels from the
predicted probabilities $p_i(j,t)$ via
\[
    y_i(j,t) = \mathbf{1}\bigl[p_i(j,t) \ge 0.5\bigr],
\]
and define the standard contingency counts
\[
\begin{aligned}
    a_j &= \sum_{t=1}^{40} \mathbf{1}\bigl[y_i(j,t)=1 \wedge y_v(j,t)=1\bigr],\\
    b_j &= \sum_{t=1}^{40} \mathbf{1}\bigl[y_i(j,t)=1 \wedge y_v(j,t)=0\bigr],\\
    c_j &= \sum_{t=1}^{40} \mathbf{1}\bigl[y_i(j,t)=0 \wedge y_v(j,t)=1\bigr],\\
    d_j &= \sum_{t=1}^{40} \mathbf{1}\bigl[y_i(j,t)=0 \wedge y_v(j,t)=0\bigr],
\end{aligned}
\]
where $t$ indexes the test examples for client $j$.
Note that, by construction of our splits (Section~\ref{sec:prelim}),
each client has exactly $|D^{j}_{\text{test}}| = 40$ test examples,
so the sums above range from $t=1$ to $40$.
The Matthews correlation coefficient (MCC)~\citep{matthews1975comparison} is
\[
\phi_j(i,v)
=
\frac{a_j(i,v)\,d_j(i,v) - b_j(i,v)\,c_j(i,v)}
     {\sqrt{(a_j+b_j)(a_j+c_j)(b_j+d_j)(c_j+d_j)}} ,
\]
with $\phi_j(i,v)=0$ if the denominator is zero.
We average this over all test-owners and normalize it to $[0,1]$:
\[
S(i,v)=\frac{1}{2}\Bigl(\frac{1}{M}\sum_{j=1}^{M}\phi_j(i,v)+1\Bigr),
\qquad
D(i,v)=1-S(i,v).
\]
This distance matrix $D$ characterizes how similarly two clients behave
under different evaluation distributions.

We obtain oracle hard clusters
$\{\mathcal{A}_k\}_{k=1}^K$ by solving the standard $K$-medoids objective
\[
\min_{\{\mathcal{A}_k,M_k\}}
\sum_{k=1}^{K} \sum_{i\in\mathcal{A}_k} D(i,M_k),
\qquad
M_k \in \mathcal{A}_k,
\]
using the PAM algorithm~\citep{kaufman2009finding}.
PAM alternates between (i) assigning each client to its nearest medoid,
and (ii) swapping medoids with non-medoids whenever the total cost
strictly decreases. This yields fixed oracle clusters before any FL
training occurs.
To ensure fair comparison against the \textit{Single} GFL baseline,
each cluster is allocated a proportional number of FL rounds,
$R_k=\lceil R\cdot\,|\mathcal{A}_k|/M\rceil$.

\textbf{Phase~2} performs independent cluster-wise GFL.
Model $\theta_k$ is trained only by clients in $\mathcal{A}_k$ for $R_k$
rounds using FedAvg~\cite{mcmahan2017communication}, without any inter-cluster communication or client
reassignment. Thus, the $K$ global models evolve in isolation and target
different behavioral modes revealed in Phase~1.

\textbf{Phase~3} converts the cluster-global models into personalized
solutions. Each client receives exactly one model
(the one belonging to its fixed oracle cluster)
and fine-tunes it locally to obtain
its final personalized reward model $\theta_m^{\mathrm{PFL}}$,
requiring no additional validation or model selection.

By fixing the clustering structure and matching the overall client-side
training effort to the \textit{Single} baseline, Hard-Oracle provides a
clear reference point for understanding the benefits and limitations of
multi-global architectures in personalized federated reward modeling.

\subsection{LoRA Initialization for Soft-Fusion}
\label{app:softfusion_lora_init}

Without special care, Soft-Fusion would collapse to the single-global
baseline. To keep the communication budget comparable across topologies, we
omit a per-model FedAvg warm-up phase: warming up all $K$ models for $T$
rounds would require $TK$ federated rounds, which is much larger than in the
single-global or hard-clustered settings. GFL therefore starts directly from
the initial LoRA adapters. With the standard LoRA initialization, the
down-projection matrices $A$ are randomly initialized and the up-projection
matrices $B$ are set to zero, so $BA = 0$ and every global model initially
behaves exactly like the shared base model. As a result, all $K$ models
start from the same function, their validation performance is essentially
identical, and the initial soft weights $w_{k,m}$ become uniform across $k$,
so both the fused initialization and expert-wise aggregation reduce to
single-global behavior and the multi-global design provides no additional
effect.

To avoid this collapse while keeping the LoRA updates small, we initialize
the $K$ adapters as small, mutually diverse low-rank perturbations of the
base weights. Concretely, for each LoRA module with output dimension
$d_{\mathrm{out}}$, input dimension $d_{\mathrm{in}}$, and rank $r$, we
first generate orthonormal matrices
$U_{\mathrm{big}} \in \mathbb{R}^{d_{\mathrm{out}} \times K r}$ and
$V_{\mathrm{big}} \in \mathbb{R}^{d_{\mathrm{in}} \times K r}$ by applying
QR decomposition to Gaussian random matrices. For adapter
$k \in [K]$, we take contiguous column blocks
$U_k \in \mathbb{R}^{d_{\mathrm{out}} \times r}$ and
$V_k \in \mathbb{R}^{d_{\mathrm{in}} \times r}$ from
$U_{\mathrm{big}}$ and $V_{\mathrm{big}}$, and set
\[
  A_k \;\gets\; V_k^\top, 
  \qquad
  B_k \;\gets\; \varepsilon_{\text{layer}} (1 + \text{jitter}\cdot k)\, U_k,
\]
where $\varepsilon_{\text{layer}}$ is scaled from the Frobenius norm of the
corresponding base weight (or a fan-in heuristic if unavailable), and
$\text{jitter}$ introduces a small adapter-specific scaling offset.
In our experiments, we use $\varepsilon_{\text{layer}} = 0.005$
for the global scaling and set $\text{jitter}= 0.01$.
The same procedure is applied to LoRA embedding modules when present. This
orthogonal, norm-controlled initialization makes each global model a slightly
different perturbation of the base model, so that the initial validation
accuracies and soft weights $w_{k,m}$ are non-uniform, while keeping all
LoRA updates in a small neighborhood of the base weights.

\section{Supplemental Evidence}
\label{sec:suppl}

\begin{figure*}[t]
    \centering
    \begin{subfigure}[t]{0.43\textwidth}
        \centering
        \includegraphics[height=0.38\textheight]{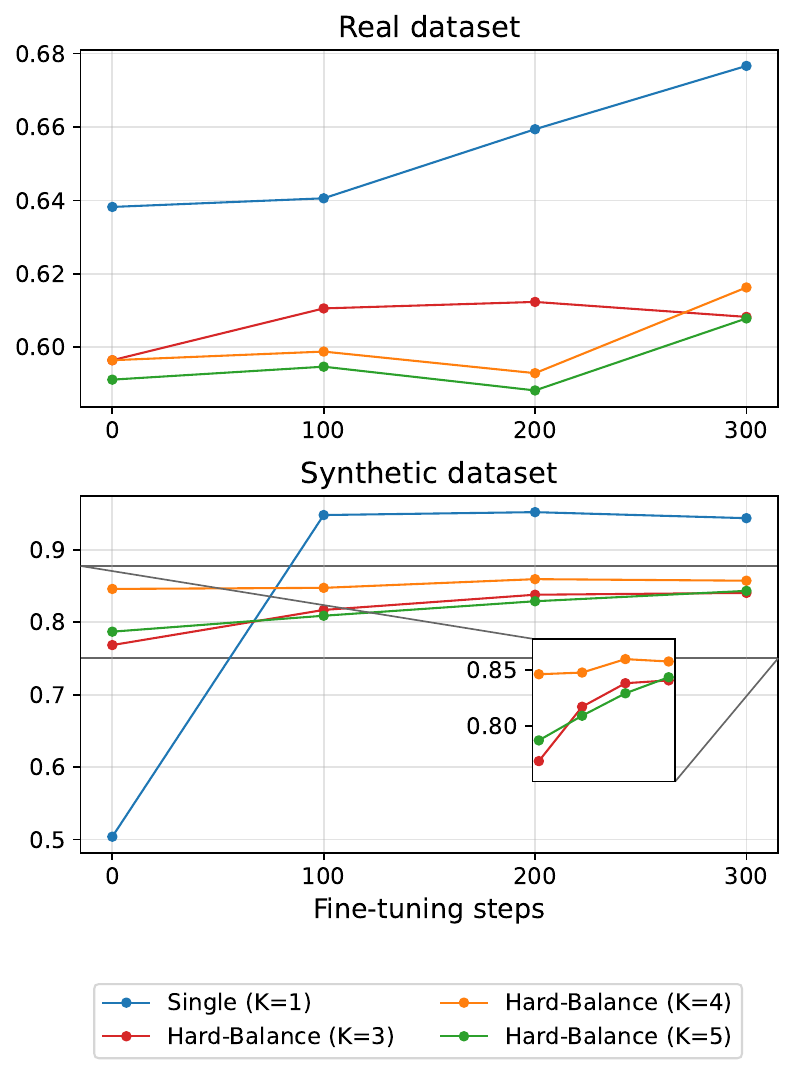}
        \caption{PFL test accuracy}
        \label{fig:pfl-single-vs-hard-balance}
    \end{subfigure}
    \hfill
    \begin{subfigure}[t]{0.55\textwidth}
        \centering
        \includegraphics[height=0.38\textheight]{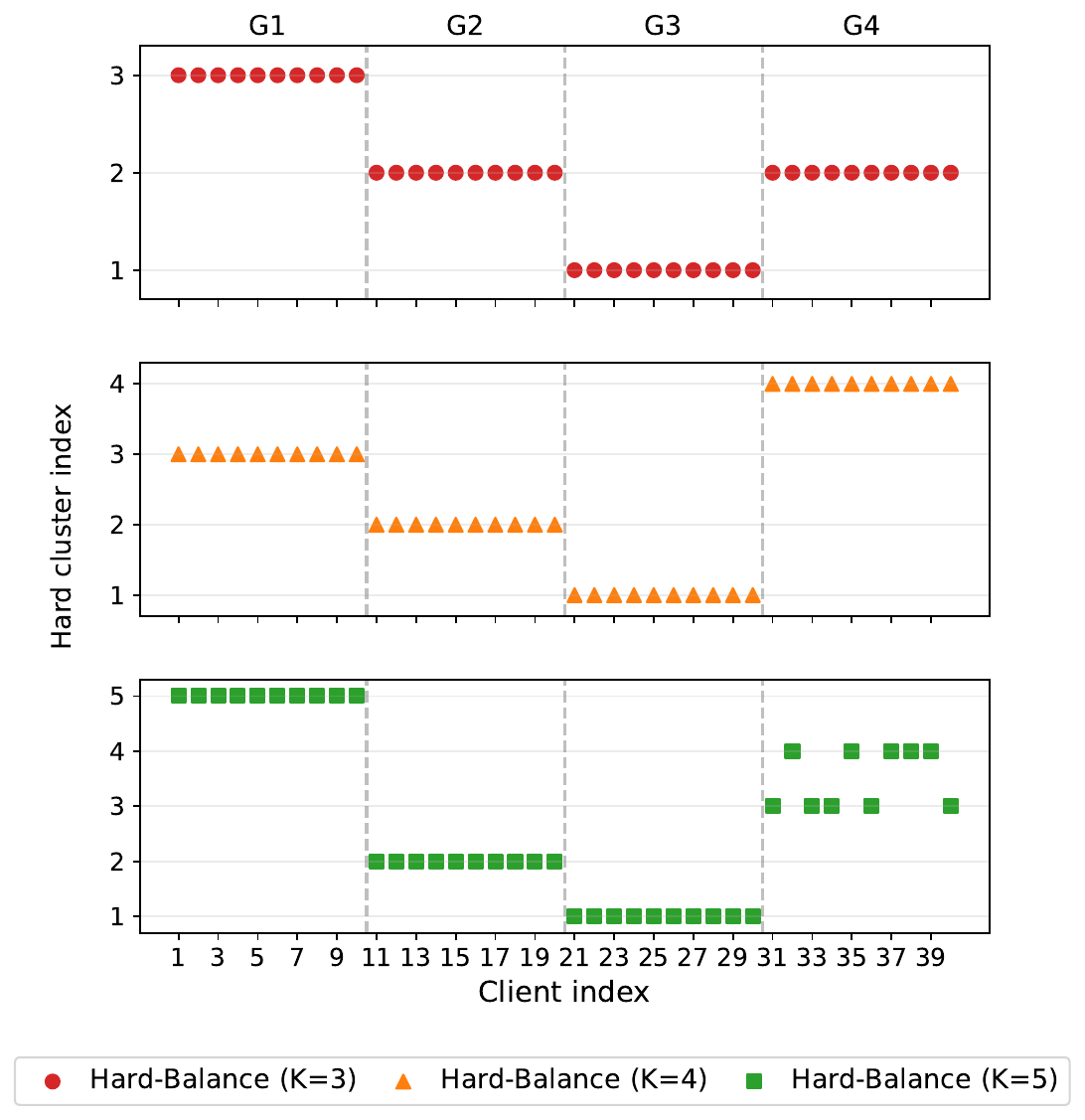}
        \caption{Hard-cluster assignment}
        \label{fig:pfl-hard-balance-assignment}
    \end{subfigure}

    \caption{Comparison of (a) mean preference-prediction accuracy of $\theta_m^{\text{PFL}}$ on $\mathcal{D}_{\text{test}}^{m}$ as a function of fine-tuning steps for the real and synthetic datasets, comparing a \textsc{single} with \textsc{hard-balance}~($K\in\{3,4,5\}$), and (b) hard-cluster indices assigned to
each client on the synthetic dataset for the same
\textsc{hard} settings.}
    \label{fig:pfl-single-hard-balance-overall}
\end{figure*}

\subsection{Within FL: Single vs.\ Hard-Balance}

Figure~\ref{fig:pfl-single-vs-hard-balance} compares \textsc{Single} with \textsc{Hard-Balance}---a cluster-size balancing variant of \textsc{Hard} introduced in FedBiscuit~\citep{wu2024towards}---for $K \in \{3,4,5\}$ on both the real-world and synthetic datasets. Adding balancing does not change our main conclusion: \textsc{Single} achieves higher personalized accuracy than every \textsc{Hard-Balance} variant. On the synthetic dataset, we simulate four equally sized preference groups G1--G4 (10 clients each), and \textsc{Hard-Balance} performs best at $K=4$, matching the true number of groups. Figure~\ref{fig:pfl-hard-balance-assignment} shows that $K=4$ assigns each group to its own cluster, whereas $K=3$ merges different groups (e.g., G2 and G4) into a shared cluster despite being consistent within groups, and $K=5$ splits a group (notably G4) across multiple clusters; correspondingly, personalization degrades as $K$ deviates from the true number of groups.

\subsection{\textsc{Single} Remains Best Under Group-Size Imbalance}
\label{sec:unbalanced-groups}

We further consider a more challenging synthetic setting with size-imbalanced preference groups, where $|G1|=|G4|=5$ and $|G2|=|G3|=15$, to test whether our conclusions persist under group-size skew.
As shown in Figure~\ref{fig:unbal-synth-acc}, \textsc{Single} still achieves the highest mean personalized accuracy across fine-tuning budgets $\tau$.
We additionally report the resulting hard assignments in Figure~\ref{fig:unbal-synth-assign} for \textsc{Hard} and \textsc{Hard-Balance} with $K=4$.
Although $K$ matches the true number of groups, the learned assignments are not group-consistent under this size-imbalanced setting (e.g., G4 under \textsc{Hard-Balance} and G3 under \textsc{Hard} split across clusters).

\begin{figure*}[t]
    \centering

    \begin{subfigure}[t]{0.53\textwidth}
        \centering
        \includegraphics[height=0.30\textheight,width=\linewidth,keepaspectratio]{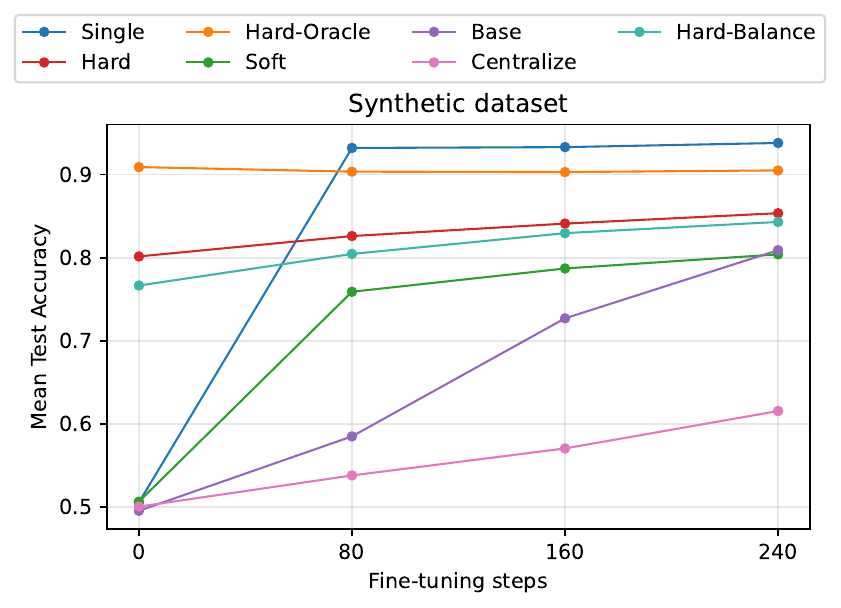}
        \caption{PFL test accuracy}
        \label{fig:unbal-synth-acc}
    \end{subfigure}
    \hfill
    \begin{subfigure}[t]{0.45\textwidth}
        \centering
        \includegraphics[height=0.30\textheight,width=\linewidth,keepaspectratio]{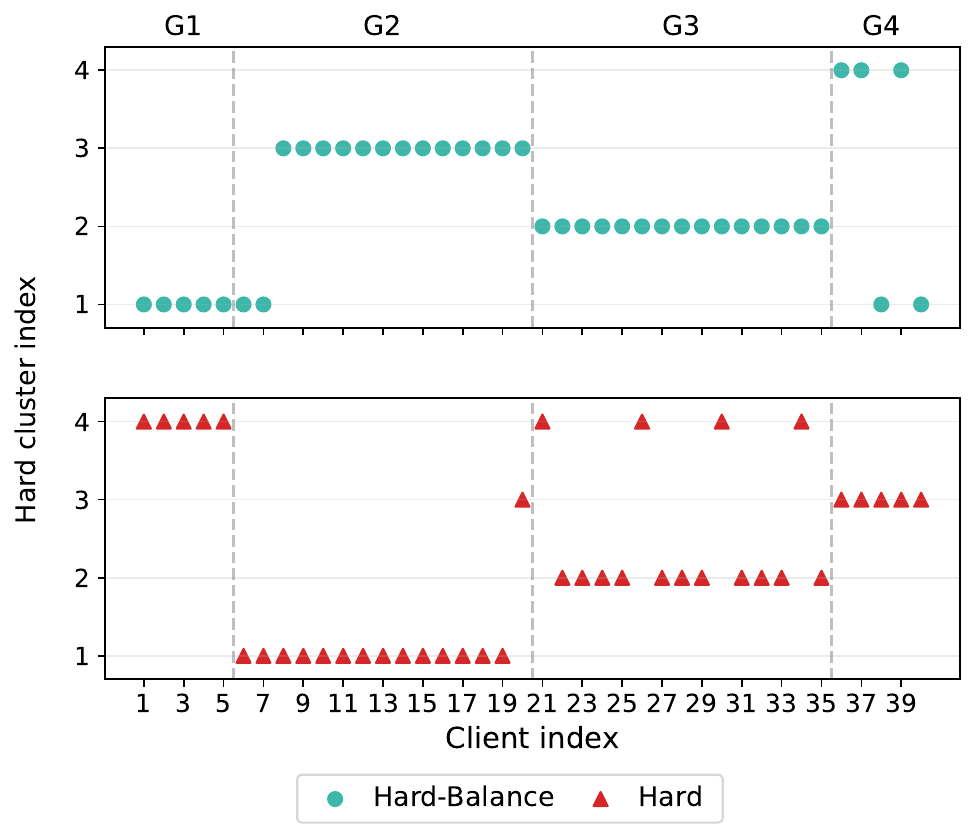}
        \caption{Hard-cluster assignment ($K=4$)}
        \label{fig:unbal-synth-assign}
    \end{subfigure}

    \caption{Unbalanced synthetic results with group-size skew. (a) Mean preference-prediction accuracy of $\theta_m^{\text{PFL}}$ on $\mathcal{D}_{\text{test}}^{m}$ as a function of fine-tuning steps, comparing global initialization strategies \textsc{Single}, \textsc{Hard-Oracle}, \textsc{Base}, \textsc{Hard-Balance}, \textsc{Hard}, \textsc{Soft}, and \textsc{Centralized}. (b) Client hard-cluster indices under \textsc{Hard} and \textsc{Hard-Balance} with $K=4$ on the same dataset.}
    \label{fig:unbal-synth-overall}
\end{figure*}

\section{Conclusion and Future Work}
\label{sec:conclusion}

This work investigates, in FL, whether multi-global models are actually needed as initializations for personalization under preference heterogeneity. In our experiments, a single global model trained with FL and then locally fine-tuned yields better personalized performance than multi-global variants and also outperforms a centralized model trained on pooled data. In FL, each client updates a single shared model on its own preference-derived data, so local fine-tuning can adapt this shared initialization effectively and additional global models are unnecessary. Future work may explore single-global FL objectives and regularizers that better handle data heterogeneity, with the goal of yielding stronger personalized models.

\chapter{Dissertation Conclusion}

This dissertation studied how to build efficient personalized models in federated settings with heterogeneous data and preferences. The central conclusion is that effective personalization depends on a global training phase that preserves stable feature vectors and transferable knowledge. We first showed that federated averaging degrades primarily due to feature-norm mismatch between local and global models, and proposed FedFN to reduce this mismatch and stabilize learning under non-IID data. We then analyzed how local alignment objectives can cause forgetting of unobserved classes, and proposed FedDr+, which combines dot-regression with global feature distillation to improve local alignment while preserving global knowledge. Finally, in federated personalized reward modeling under preference heterogeneity, we investigated whether multi-global models are actually needed as initializations for personalization and empirically found that a single global model trained with federated learning and then locally fine-tuned yields the best personalized performance, outperforming multi-global variants and also a centralized model trained on pooled data. These results indicate that clients can effectively adapt a shared initialization using their own preference-derived data, so introducing additional global models is unnecessary for strong personalization.

\bibliographystyle{unsrtnat}   
\bibliography{references}

\acknowledgment[4]
먼저, 석·박사 과정 동안 저를 지도해 주신 윤세영 지도교수님께 진심으로 감사드립니다. 연구실에 처음 들어왔을 때 하신 “초심을 잃지 말자”라는 말씀은 오랜 시간 연구를 이어오는 동안 늘 되짚게 된 기준이었습니다. 연구가 잘 풀리지 않거나 성과가 바로 드러나지 않을 때에도 성급히 결론을 내리기보다 문제의 본질을 끝까지 고민하도록 이끌어 주셨고, 연구의 흐름을 존중하며 기다려 주셨습니다. 그 과정에서 스스로 판단하고 책임지는 연구자의 태도를 갖추게 되었고, 연구를 대하는 자세 전반에 큰 영향을 받았습니다. 긴 시간 변함없이 지도해 주신 교수님께 깊이 감사드립니다.

바쁘신 일정에도 불구하고 학위 심사에 참여해 주시고 냉철한 피드백을 주신 양은호 교수님, 이문용 교수님, 박찬영 교수님, 김희영 교수님께 깊이 감사드립니다. 교수님들의 조언 덕분에 논문을 보완하고 더 완성도 있게 마무리할 수 있었습니다.\newline

안수명 교수님께 감사드립니다. 교수님께서는 연구 전반에서 방향을 잡아 주시며 꾸준히 이끌어 주셨고, 개인적으로는 연구와 삶 모두에서 큰 영향을 받은 ‘대부’와 같은 존재였습니다. 그 가르침을 통해 연구자로서뿐 아니라 한 사람으로서도 성장할 수 있었습니다. 정민찬 학생에게도 감사드립니다. 연구실에서 많은 시간을 함께 보내며 연구와 일상에 대한 이야기를 나누었고, 크고 작은 고민들을 자연스럽게 공유할 수 있었던 동료였습니다.

상묵이형, 재훈이형, 기훈이형과 태현이 등 OSI 연구실 졸업생분들께 감사드립니다. 선배님들의 기여 덕분에 더 나은 환경에서 연구를 이어갈 수 있었고, 제가 맡았던 연구와 프로젝트도 끝까지 잘 마무리할 수 있었습니다. 대학원 생활을 버티는 동안 힘들 때마다 터놓고 장난도 치고, 진지한 이야기까지 나누며 의지할 수 있었던 호정, 상민, 상화, 종우, 성년, 영록에게도 고맙다고 말하고 싶습니다. 제가 연구하다 잘 안 풀릴 때 자기 일처럼 함께 고민해 주고 도움을 주시며 의지할 수 있었던 지환이형, 보령, 성우, 정현과 남규에게도 감사드립니다. 오랫동안 연구실에 묵묵히 기여해 주고 있는 세혁이형, 용식이형에게도 고맙습니다. 또한 보이지 않는 곳에서 연구실 운영과 서버 관리를 책임지며 연구에 집중할 수 있는 환경을 만들어 주신 랩장님들과 서버 관리자분들께도 깊이 감사드립니다. 졸업하고도 계속 챙겨 주고 걱정해 주시는 상욱이형, 낙일이형, 종협이형도 감사드립니다. 제가 여유가 없어 자주 못 뵈었는데, 꼭 찾아뵙겠습니다. 이외에도 일일이 언급하지 못한 OSI 연구실 모든 선후배와 동료들에게 감사드립니다. 여러분이 있었기에 오늘에 이를 수 있었습니다. 진심으로 감사드립니다.\newline

학부 시절부터 함께 공부하며 인연을 쌓아 온 연승, 정헌, 준용, 현민, 형탁, 정현이형, 호용이형에게도 감사드립니다. 함께 스터디하며 서로에게 자극이 되었고, 그 과정에서 자연스럽게 성장할 수 있었습니다. 아울러 동아리 Aquila에서 만난 친구들과 새터반 동료들 또한 학부 시절을 함께 채워 준 소중한 인연으로 오래 기억하겠습니다.

그리고 제 집이자 학교이며 삶의 터전이 되어 준 KAIST에도 깊이 감사드립니다. 고등학교 시절부터 정말 들어오고 싶었던 곳이었고, 이곳에서 학부부터 석·박사 과정까지 뛰어난 동료들과 함께 배우고 경쟁하며 성장할 수 있었습니다. 그 과정에서 쌓아 온 경험들이 모여 오늘의 제가 될 수 있었습니다.\newline

마지막으로 긴 여정을 지켜봐 주며 늘 곁에서 힘이 되어준 가족들에게 감사드립니다. 언제나 무조건적으로 지지해 주시고 제 편이 되어 주셨던 가족분들께 진심으로 존경과 감사의 마음을 전합니다. 가족의 믿음과 응원이 있었기에 이 과정을 끝까지 걸어올 수 있었습니다.

\begin{flushright}
2025년 12월 25일\\
김성윤 올림
\end{flushright}

  \label{paperlastpagelabel}     
\end{document}